%
%
%
%
%
%

\documentclass[11pt]{article}

\title{Online learning over a finite action set with limited switching}

\usepackage{amsmath,amssymb,graphicx,url}
\usepackage{graphicx}
\usepackage{subfigure}
\usepackage[boxed]{algorithm2e}

\usepackage{amsthm}
\newtheorem{theorem}{Theorem}
\newtheorem{corollary}[theorem]{Corollary}
\newtheorem{lemma}[theorem]{Lemma}
\newtheorem{proposition}[theorem]{Proposition}

\newtheorem{remark}[theorem]{Remark}

\usepackage{times}
\usepackage{enumitem}
\usepackage{framed}
\usepackage{multirow}
\usepackage{bm}
\usepackage{float}
\usepackage{color}

\usepackage[round,comma]{natbib}
\bibliographystyle{plainnat}

\usepackage[margin=1.3in]{geometry} 

\usepackage{hyperref}       
\usepackage{url}            
\usepackage{booktabs}       
\usepackage{nicefrac}       
\usepackage{microtype}      
\usepackage{framed}



\DeclareMathOperator{\Real}{\mathbb{R}}
\DeclareMathOperator*{\argmin}{arg\,min}

\DeclareMathOperator{\E}{\mathbb{E}}

\DeclareMathOperator{\Prob}{\mathbb{P}}
\DeclareMathOperator{\half}{\tfrac{1}{2}}

\DeclareMathOperator{\Ber}{\text{Ber}}
\DeclareMathOperator{\Bin}{\text{Bin}}
\DeclareMathOperator{\Exp}{\text{Exp}}

\DeclareMathOperator{\eps}{\varepsilon}
\DeclareMathOperator{\logdel}{\log\tfrac{1}{\delta}}
\DeclareMathOperator{\logtdel}{\log\tfrac{2}{\delta}}
\DeclareMathOperator{\halfdel}{\tfrac{\delta}{2}}

\DeclareMathOperator{\asleq}{\overset{\text{a.s.}}{\leq}}
\DeclareMathOperator{\asgeq}{\overset{\text{a.s.}}{\geq}}

\DeclareMathOperator{\Regret}{\text{Regret}}
\DeclareMathOperator{\Regretunclipped}{\text{Regret}^{\text{unclipped}}}
\DeclareMathOperator{\elltunclipped}{\ell_t^{\text{unclipped}}}
\DeclareMathOperator{\hatl}{\hat{\ell}}

\DeclareMathOperator{\Switches}{\text{Switches}}
\DeclareMathOperator{\Ltarg}{L_\text{target}}

\DeclareMathOperator{\pfe}{\text{PFE}}
\DeclareMathOperator{\mab}{\text{MAB}}

\DeclareMathOperator{\sd}{\textsc{SD}}

\DeclareMathOperator{\btl}{\textsc{BTL}}
\DeclareMathOperator{\ftl}{\textsc{FTL}}

\DeclareMathOperator{\fpl}{\textsc{FPL}}

\DeclareMathOperator{\mfpl}{\textsc{FPL}^*}
\DeclareMathOperator{\mfpleps}{\textsc{FPL}_{\eps}^*}
\DeclareMathOperator{\bmfpl}{\textsc{BFPL}_{\delta}^*}

\DeclareMathOperator{\pr}{\textsc{PRW}}
\DeclareMathOperator{\bpr}{\textsc{BPRW}}

\DeclareMathOperator{\cpr}{\textsc{CombPRW}}
\DeclareMathOperator{\bcpr}{\textsc{BCombPRW}}

\DeclareMathOperator{\calA}{\mathcal{A}}
\DeclareMathOperator{\calB}{\mathcal{B}}
\DeclareMathOperator{\calF}{\mathcal{F}}

\newcommand{\plusminus}{\raisebox{.2ex}{$\scriptstyle\pm$}}

\newcommand{\freefootnote}[1]{{%
  \let\thempfn\relax
  \footnotetext[0]{\emph{#1}}
}}

\author{
        Jason Altschuler\\
        Massachusetts Institute of Technology\\
        \texttt{jasonalt@mit.edu}
        \and
        Kunal Talwar\\
        Google Brain\\
        \texttt{kunal@google.com}
}

\begin{document}

\date{}
\maketitle

\begin{abstract}
This paper studies the value of switching actions in the Prediction From Experts (PFE) problem and Adversarial Multi-Armed Bandits (MAB) problem. First, we revisit the well-studied and practically motivated setting of PFE with switching costs. Many algorithms are known to achieve the minimax optimal order of $O(\sqrt{T \log n})$ in \textit{expectation} for both regret and number of switches, where $T$ is the number of iterations and $n$ the number of actions. However, no \textit{high probability} guarantees are known. Our main technical contribution is the first algorithms which with high probability achieve this optimal order for both regret and number of switches. This settles an open problem of~\citep{DevLugNeu15}, directly implies the first high probability guarantees for several problems of interest, and is efficiently adaptable to the related problem of online combinatorial optimization with limited switching.
\par Next, to investigate the value of switching actions at a more granular level, we introduce the setting of \textit{switching budgets}, in which the algorithm is limited to $S \leq T$ switches between actions. This entails a limited number of free switches, in contrast to the unlimited number of expensive switches allowed in the switching cost setting. Using the above result and several reductions, we unify previous work and completely characterize the complexity of this switching budget setting up to small polylogarithmic factors: for both the PFE and MAB problems, for all switching budgets $S \leq T$, and for both expectation and high probability guarantees. For PFE, we show that the optimal rate is of order $\tilde{\Theta}(\sqrt{T\log n})$ for $S = \Omega(\sqrt{T\log n})$, and $\min(\tilde{\Theta}(\tfrac{T\log n}{S}), T)$ for $S = O(\sqrt{T \log n})$. Interestingly, the bandit setting does not exhibit such a phase transition; instead we show the minimax rate decays steadily as $\min(\tilde{\Theta}(\tfrac{T\sqrt{n}}{\sqrt{S}}), T)$ for all ranges of $S \leq T$. These results recover and generalize the known minimax rates for the (arbitrary) switching cost setting.
\end{abstract}


\freefootnote{Accepted for presentation at Conference on Learning Theory (COLT) 2018.}

\newpage
\tableofcontents


\section{Introduction}\label{sec:intro}

Two fundamental problems in online learning are the \textit{Prediction From Experts (PFE)} problem ~\citep{CB-expert,Book-CB-Lugosi} and the Adversarial \textit{Multi-Armed Bandit (MAB)} problem~\citep{Auer02,Bubecksurvey}. Over the past few decades, these problems have received substantial attention due to their ability to model a variety of problems in machine learning, sequential decision making, online combinatorial optimization, online linear optimization,
mathematical finance, and many more.
\par PFE and MAB are typically introduced as $T$-iteration repeated games between an algorithm (often called player or forecaster) and an adversary (often called nature). In each iteration $t \in \{1, \dots, T\}$, the algorithm selects an action $i_t$ out of $n$ possible actions, while the adversary simultaneously chooses a loss function over the actions $\ell_t : \{1, \dots, n\} \to [0,1]$. The algorithm then suffers the loss $\ell_t(i_t)$ for its action.
\par For concreteness, consider the following classic example: at the beginning of each day (the iterations) we choose one of $n$ financial experts (the actions), and invest based on her advice about how the stock market will move that day. At the end of the day, we lose (or gain) money based on how good her advice was. Here, the adversary can be thought of as the stock market, since it determines the losses.
\par The goal of the algorithm is to minimize its cumulative loss $\sum_{t=1}^T \ell_t(i_t)$ over the course of the game. In our running example, this corresponds to the total amount of money we lose throughout the investment period. Of course, this cumulative loss can be arbitrarily and hopelessly bad since the choice of losses are at the adversary's disposal. As such, one measures the cumulative loss of the algorithm against a more meaningful baseline: the cumulative loss of the \textit{best action in hindsight}. The algorithm's \textit{regret} is defined as the difference between these two quantities:

\[
\Regret := \sum_{t=1}^T \ell_t(i_t) - \min_{i^* \in [n]} \sum_{t=1}^T \ell_t(i^*)
\]
Indeed in most applications, we typically have reason to believe that at least one action will be decent throughout the game; otherwise there is nothing to learn. Informally, regret measures the algorithm's ability to learn this best action from adversarially noisy observations. Note that if an algorithm achieves sublinear regret ($\Regret = o(T)$) as a function of the horizon $T$, then its average performance converges to that of the best action in hindsight.
\par The PFE and MAB problems differ in the feedback that the algorithm receives. In PFE, the algorithm is given \textit{full-information feedback}: after the $t$th iteration it can observe the entire loss function $\ell_t$. However in MAB, the algorithm is only granted \textit{bandit feedback}: after the $t$th iteration, it can only observe the loss $\ell_t(i_t)$ of the action $i_t$ it played.
\par In our running example, full-information feedback corresponds to observing how good each of the experts' advice was, whereas with bandit-feedback we observe only that for the expert whose advice we actually took. Both of these settings occur in real life: the former when the experts' advice are displayed publicly (e.g. stock predictions on TV or money.cnn.com); and the latter when we have to pay for each expert's advice (e.g. hedgefunds or private wealth management advisories).
\paragraph*{Switching as a resource.} Note that in the setup of PFE and MAB above, the algorithm is not penalized for switching between different actions in consecutive iterations. However, in many practical applications it is beneficial to switch infrequently. For instance, in our earlier example, switching financial advisors between consecutive days could incur many negative consequences~\citep{DekDinKorPer}, such as the cost of cancelling a contract with the last advisor, the cost of signing a contract with the new advisor, the cost of re-investing according to the new advisor's suggestions, or acquiring a bad reputation that makes advisors reluctant to negotiate with you in the future.
\par Infrequent switching is desirable in many other real-world problems. One example is the online shortest paths problem, which has been studied intensely in both the full-information~\citep{TakWar03, KalVem, AweKle08, hedging-structured-concepts} and bandit settings~\citep{AbeHazRak09, combinatorial-bandits}. In this problem, there is a static underlying graph and designated source and sink nodes $s$ and $t$; however, edge weights change adversarially each iteration. The algorithm chooses at each iteration a path from $s$ to $t$, and incurs as loss the weight of that path. A common application of this problem is to select the fastest routes for packets to traverse over the internet. Switching actions thus corresponds to changing routes for packets, which
can lead to notoriously difficult problems in networking such as out-of-order delivery of packets and decoding errors on the receiving end~\citep{SDN, DevLugNeu15}. A related problem is that of learning spanning trees due to its connection to the Internet Spanning Tree Protocol (IEEE 802.1D)~\citep{hedging-structured-concepts, combinatorial-bandits}. Consider also the online learning of permutations~\citep{permelearn, hedging-structured-concepts} which can model online job scheduling in factories; there, a switch could correspond to the laborious and expensive task of modifying the assembly line.
\par Other applications known to benefit from infrequent switching include: the tree-update and list-update problems~\citep{list-update-problem, tree-update-problem, KalVem}; online pruning of decision trees and decision graphs~\citep{prune-decision-tree,prune-decision-graph, KalVem}; learning rankings and online advertisement placement~\citep{Sha12, AudBubLug13}; the Adaptive Huffman Coding problem~\citep{adaptive-huffman-coding, KalVem}; learning adversarial Markov Decision Processes~\citep{mdp-switching, EveKakMan09, Neu10}; the online buffering problem and economical caching~\citep{Dartboard}; and the limited-delay universal lossy source coding problem~\citep{GyoNeu14}.
\par These myriad applications motivate the idea of switching as a resource. This notion has attracted significant research interest in the past few years. The popular way to formalize this idea is the $c$-\textbf{\textit{switching-cost}} setting, in which the algorithm incurs an additional loss\footnote{For simplicity, we assume $c \geq 1$; all upper bounds throughout the paper hold with $c$ replaced by $\max(c, 1)$.} of $c \geq 1$ each time it switches actions in consecutive iterations. In this paper, we introduce the $S$-\textbf{\textit{switching-budget}} setting, in which the algorithm can switch at most $S \in \{1, \dots, T\}$ times in the $T$ iterations. In words, the \textit{switching-cost setting corresponds to expensive but unlimited switches}; whereas the \textit{switching-budget setting corresponds to free but limited switches}.

\paragraph*{Remark about the power of the adversary.} It is important to clarify whether the adversary is allowed to select its loss function $\ell_t$ on the $t$th iteration as a function of the player's previous actions $\{i_s\}_{s=1}^{t-1}$. If yes, then the adversary is said to be \textit{adaptive}; if no, then the adversary is said to be \textit{oblivious} (to the player's actions). Note that without loss of generality, we may assume an oblivious adversary selects all loss functions before the game begins.
\par In the classical unconstrained-switching setting, one can learn well against both adaptive and oblivious adversaries, in both MAB and PFE. However, once we penalize switching (with either switching costs or switching budgets), adaptive adversaries are too powerful and can force any algorithm to incur linear regret (details in Appendix~\ref{app:adaptive}). 
As such, the rest of the paper focuses only on the oblivious adversarial model.

\subsection{Previous work}\label{subsec:previous-work}
The inherent complexity of an online learning problem is typically characterized in terms of the optimal order of growth of regret. This is formalized by the 
\textit{minimax rate}, which is defined as the infimum over (possibly randomized) algorithms, of the supremum over (possibly randomized) adversaries, of expected regret. 
\paragraph*{Previous work on Prediction from Experts (details in Figure~\ref{table:pfe}).} In the classical (unconstrained) setting, the minimax rate $\Theta(\sqrt{T\log n})$ is well understood~\citep{Lit94,FreSch97,CB-expert}. Moreover, this optimal regret rate is also achievable with high probability~\citep{Book-CB-Lugosi}.
\par The minimax rate is also well-understood in the $c$-switching cost setting. Recall that here the objective is ``switching-cost-regret'', which is defined as $\Regret +\,c\cdot(\text{\# switches})$. The minimax rate for expected switching-cost-regret is $\Theta(\sqrt{cT \log n})$ for PFE~\citep{KalVem,Dartboard,DevLugNeu15}. In particular, these results give algorithms which achieve the optimal minimax order in \textit{expectation} for both regret and number of switches.
\par Surprisingly, however, \textit{no high-probability guarantees are known for switching-cost PFE}; this is raised as an open question by~\citep{DevLugNeu15}. Indeed, all algorithms that work well in expectation have upper tails that are either too large or unknown how to analyze.
In the first category is the Multiplicative Follow the Perturbed Leader ($\mfpl$) algorithm of~\citep{KalVem}: the large upper tails of this algorithm are folklore but we are not aware of an explicit reference; for completeness, we give a proof in Appendix~\ref{app:lb-kv-sd}.
In the second category are the Prediction by Random-Walk Perturbation ($\pr$) algorithm of~\citep{DevLugNeu15} and the Shrinking Dartboard ($\sd$) algorithm of ~\citep{Dartboard}. Analyzing the upper tails of these algorithms seem difficult and was left as an open question for $\pr$ in~\cite{DevLugNeu15}. For $\sd$, we show in Appendix~\ref{app:lb-kv-sd} that the tails are at best sub-exponential, and are thus strictly worse than sub-Gaussian (which our algorithms achieve, and which we show is optimal via a matching lower bound).

\par For the $S$-switching-budget setting, even less is known. The only relevant lower bound seems to be the trivial one that expected regret is $\Omega(\sqrt{T \log n})$ for all $S$, which is the lower bound for the unconstrained setting~\citep{CB-expert} and applies since constraining switching obviously can only make the problem harder. The only relevant upper bound is due to the Lazy Label Efficient Forecaster (combined with a simple reduction). However, although this elegant algorithm yields tight bounds for the (very different) setting of Label Efficient Prediction it was designed for~\citep{lazy-label-efficient-forecaster}, it achieves only $O(\tfrac{T}{\sqrt{S}})$ regret for our setting, which is very far from the $\Theta(\tfrac{T}{\min(S,\sqrt{T})})$ minimax rate we prove in this paper.
\par Note also that the existing minimax-optimal switching-cost algorithms $\mfpl$, $\sd$, and $\pr$ do not apply to the switching-budget setting (even in expectation), since the number of times they switch is only bounded \textit{in expectation}, whereas the switching budget setting requires a hard cap. Of course this could be fixed if the number of switches an algorithm makes has an exponentially small tail (see Section~\ref{sec:budgets-pfe} for details), but this fails for existing algorithms for the reasons stated in the above discussion about high-probability bounds for switching-cost PFE (see also Appendix~\ref{app:lb-kv-sd}).
\paragraph*{Previous work on Multi-Armed Bandits (details in Figure~\ref{table:mab}).} In the unconstrained setting, the minimax rate $\Theta(\sqrt{Tn})$ is well understood~\citep{Auer02,AudBub10} and is achieveable with high probability~\citep{AudBub10, Bubecksurvey}. 

\par For the $c$-switching cost setting, the minimax rate is known (up to a logarithmic factor in $T$) to be $\tilde{\Theta}(c^{1/3}T^{2/3}n^{1/3})$ for MAB~\citep{AroDekTew12,DekDinKorPer}. We note that high-probability guarantees are not explicitly written in the literature, but can be easily obtained by combining the high-probability guarantee of an algorithm designed for unconstrained MAB, with a standard mini-batching reduction (e.g. the one by~\citep{AroDekTew12}). Using the best known bound for unconstrained MAB (achieved by the Implicitly Normalized Forecaster of~\citep{AudBub10}) yields the bound in Table~\ref{table:mab}. For completeness we give details in Section~\ref{sec:budgets-mab}.
\par For the $S$-switching budget setting, a similar simple mini-batching reduction gives algorithms achieving the minimax rate in expectation and with high probability (details in  Section~\ref{sec:budgets-mab}). The lower bound for this setting is significantly more involved. The relevant result is Theorem 4 of~\citep{DekDinKorPer} who prove a lower bound of $\tilde{\Omega}(\tfrac{T}{\sqrt{S}})$ via a reduction to the switching-cost setting. However, their reduction does not get the correct dependence on the number of actions $n$ and also loses track of polylogarithmic factors.

\begin{table}[H]
\centering
\vspace{-0.5mm}
\caption{Upper and lower bounds on the complexity of PFE in the different switching settings. Our new bounds are bolded.}
\vspace*{2mm}
\label{table:pfe}
\begin{tabular}{|c|c|c|c|}
\hline
          & \textbf{LB on $\E[\Regret]$} & \textbf{UB on $\E[\Regret]$} & \textbf{High probability UB} \\ \hline
\textbf{Unconstrained switching} & $\sqrt{T \log n}$  & $\sqrt{T \log n}$ & $\sqrt{T \log \tfrac{n}{\delta}}$ \\ \hline
\textbf{$c$ switching cost}      & $\sqrt{cT \log n}$ & $\sqrt{c T \log n}$ &     $\pmb{\sqrt{cT \log n \logdel}}$ \\ \hline
\textbf{$S=\Omega(\sqrt{T \log n})$ switching budget}& $\sqrt{T \log n}$& $\pmb{\sqrt{T \log n} \log T}$  & $\pmb{\sqrt{T \log n} \logdel}$   \\ \hline
\textbf{$S=O(\sqrt{T \log n})$ switching budget}& $\pmb{\frac{T \log n}{S}}$ & $\pmb{\frac{T \log n}{S}\log T}$ & $\pmb{\frac{T\log n}{S}\logdel}$ \\ \hline
\end{tabular}
\end{table}

\begin{table}[H]
\centering
\vspace{-2mm}
\caption{Upper and lower bounds on the complexity of MAB in the different switching settings. Our new bounds are bolded.}
\vspace*{2mm}
\label{table:mab}
\begin{tabular}{|c|c|c|c|}
\hline
 & \textbf{LB on $\E[\Regret]$} & \textbf{UB on $\E[\Regret]$} & \textbf{High probability UB} \\ \hline
\textbf{Unconstrained switching} & $\sqrt{Tn}$& $\sqrt{Tn}$ & $\sqrt{Tn} \frac{\log \frac{n}{\delta}}{\sqrt{\log n}}$ \\ \hline
\textbf{$c$ switching cost}      &
$\frac{c^{1/3}T^{2/3}n^{1/3}}{\log T}$ & $c^{1/3}T^{2/3}n^{1/3}$ & $c^{1/3}T^{2/3}n^{1/3} \frac{\log^{2/3} \frac{n}{\delta}}{\log^{1/3} n}$ \\ \hline
\textbf{$S$ switching budget}& $\pmb{\frac{T\sqrt{n}}{\sqrt{S} \log^{3/2} T}}$ & ${\frac{T\sqrt{n}}{\sqrt{S}}}$  & ${\frac{T\sqrt{n}}{\sqrt{S}} \frac{\log\frac{n}{\delta}}{\sqrt{\log n}}}$  \\ \hline
\end{tabular}
\end{table}

\begin{figure}
\centering     
\subfigure[Switching-budget PFE.]{\label{fig:pfe}\includegraphics[width=70mm]{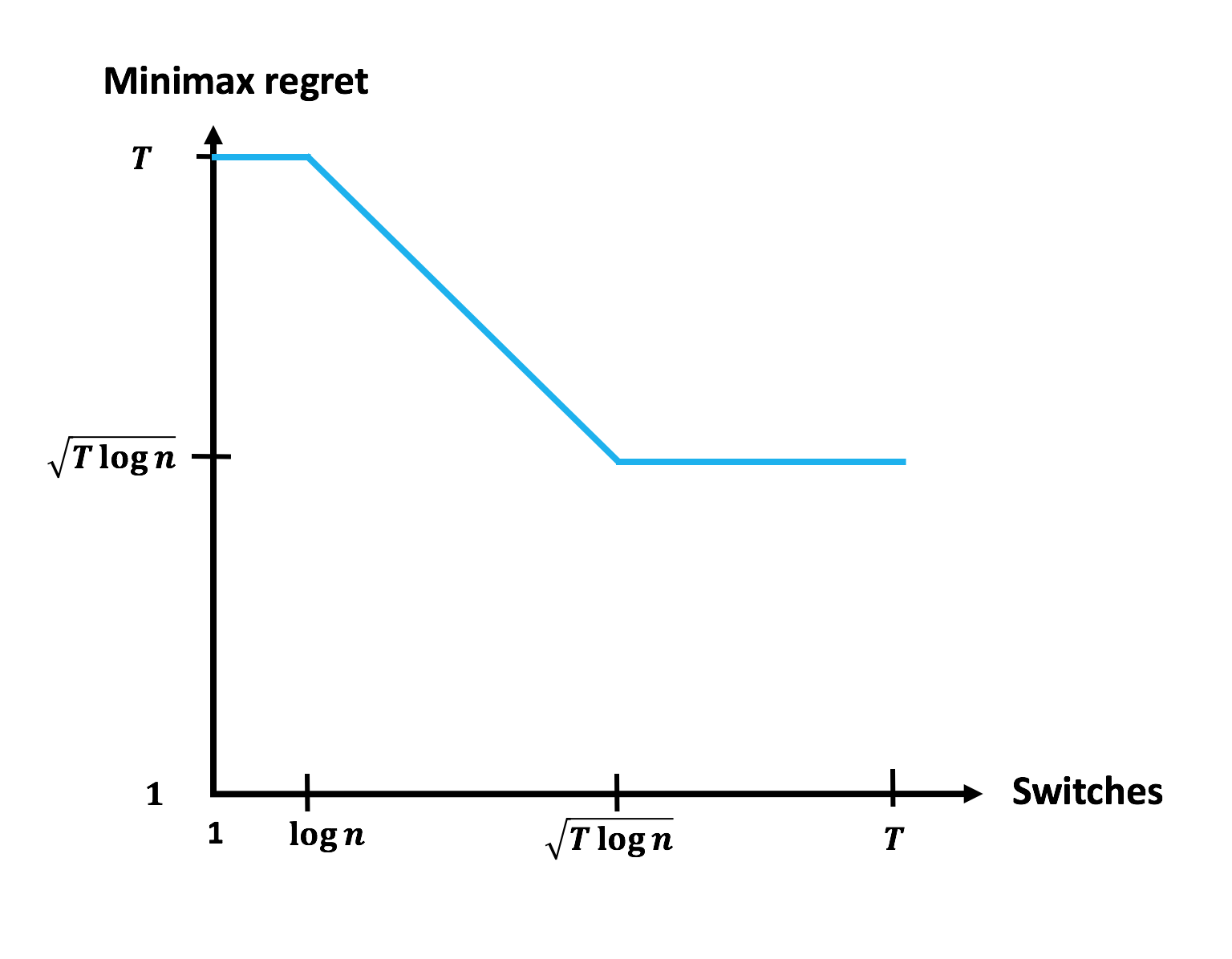}}
\subfigure[Switching-budget MAB.]{\label{fig:mab}\includegraphics[width=70mm]{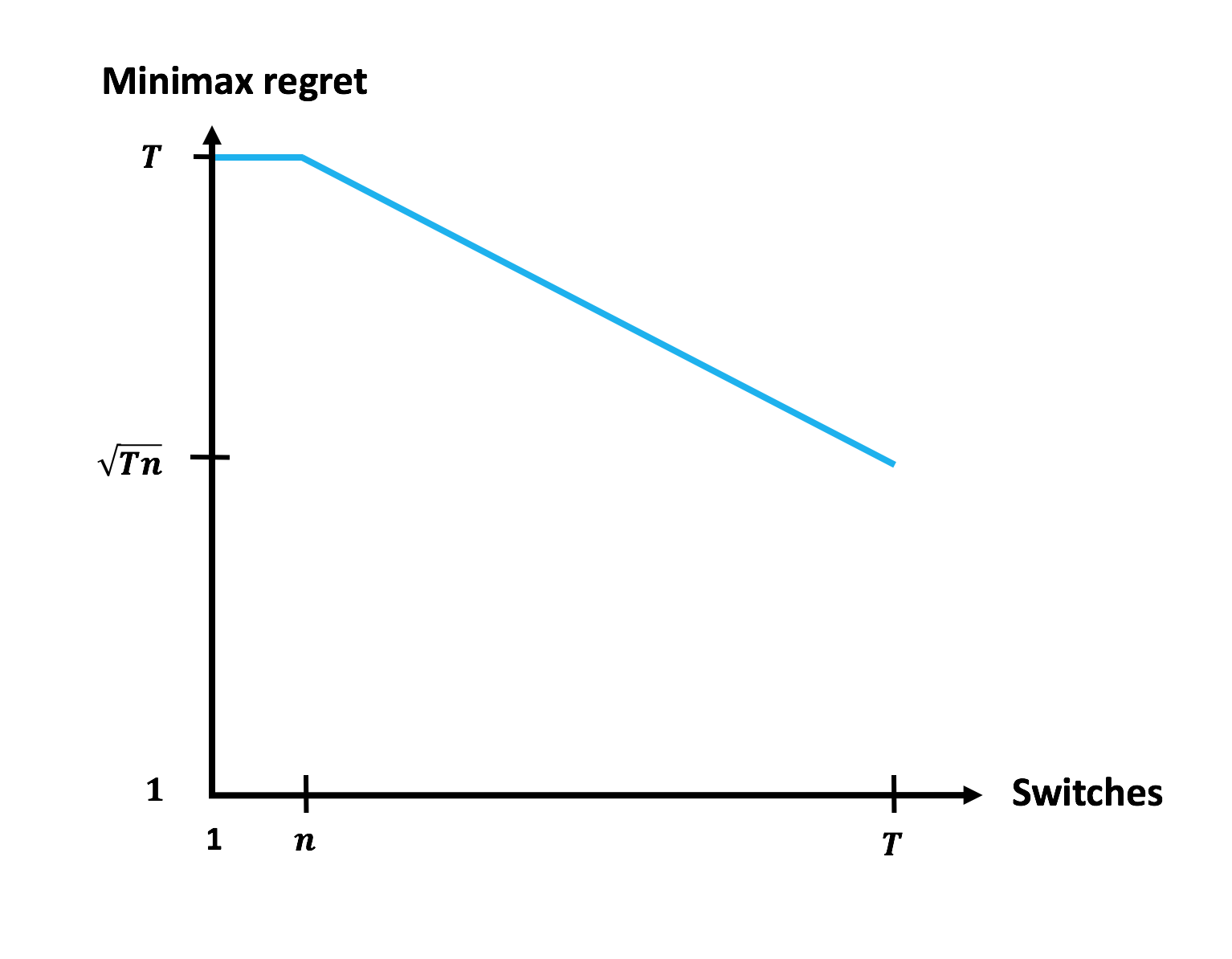}}
\caption{Complexity landscape of online learning over a finite action set with limited switching. Axes are plotted in log-log scale. Polylogarithmic factors in $T$ are hidden for simplicity.}
\label{fig:plots}
\end{figure}


\subsection{Our contributions}\label{subsec:intro-contributions}

In Section~\ref{sec:hp}, we present the first algorithms for switching-cost PFE that achieve the minimax optimal rate $O(\sqrt{cT\log n})$ with high probability, settling an open problem of~\citep{DevLugNeu15}. In fact, our results are more general: we give a framework to formulaically convert algorithms that work in expectation and fall under the Follow-the-Perturbed-Leader algorithmic umbrella, into algorithms that work with high probability. We then present our algorithms as examples of this framework. We also show how this framework extends to online combinatorial optimization\footnote{I.e. online linear optimization over a combinatorial polytope, where offline optimization can be done efficiently.} with limited switching, and give the first high-probability algorithm for this problem.\footnote{Note that online combinatorial optimization can be recast na\"ively as PFE where each vertex is modeled by an expert, but then the runtime of each iteration is linear in the number of vertices, which is typically exponential in the dimension (see e.g.~\citep{hedging-structured-concepts}). Various approaches have been developed to overcome this; we show that our framework also applies to this setting without modification.}

\par Next, to investigate the value of switching actions at a more granular level, we study the new setting of \textit{switching budgets} for the PFE and MAB problems, respectively in Sections~\ref{sec:budgets-pfe} and~\ref{sec:budgets-mab}. The above result and standard reductions allow us to completely characterize the complexity of this switching budget setting up to small polylogarithmic factors: for both the PFE and MAB problems, for all switching budgets $S \leq T$, and for both expectation and high probability guarantees. For PFE, we show the optimal rate is of order $\tilde{\Theta}(\sqrt{T\log n})$ for $S = \Omega(\sqrt{T\log n})$, and $\min(\tilde{\Theta}(\tfrac{T\log n}{S}), T)$ for $S = O(\sqrt{T \log n})$. Interestingly, the bandit setting does not exhibit such a phase transition; instead we show the minimax rate decays steadily as $\min(\tilde{\Theta}(\frac{T\sqrt{n}}{\sqrt{S}}), T)$ for all ranges of $S \leq T$.


\subsection{Notation}\label{subsec:notation}
We denote the set of integers $\{1, \dots, k\}$ by $[k]$. For shorthand, we abbreviate ``almost surely'' by a.s., ``independently and identically distributed'' by i.i.d., ``with respect to'' by w.r.t., ``without loss of generality'' by WLOG, ``random variable'' by r.v., ``(with) high probability'' by (w.)h.p., and ``left (resp. right) hand side'' by LHS (resp. RHS). We denote Bernoulli, binomial, and exponential distributions by $\Ber(\cdot)$, $\Bin(\cdot, \cdot)$, and $\Exp(\cdot)$, respectively. We write $X \sim D$ to denote that the r.v. $X$ has distribution $D$, and we write $\sigma(X_1, \dots, X_k)$ to denote the sigma-algebra generated by the random variables $X_1, \dots, X_k$. We denote the total variation distance between two probability measures $P$ and $Q$ w.r.t. a $\sigma$-algebra $\calF$ by $\|P - Q\|_{\text{TV}, \calF} := \sup_{A \in \calF}|P(A) - Q(A)|$.
\par Throughout we reserve the variable $T$ for the number of iterations in the game, $n$ for the number of actions (i.e. ``experts'' in PFE or ``arms'' in MAB), and $S$ for the switching budget. Losses of the actions are denoted by $\{\ell_t(i)\}_{t \in [T], i \in [n]}$. Often it is notationally convenient to add in a fake zero-th iteration; when we do this the losses are all zero $\{\ell_0(i) := 0\}_{i \in [n]}$. Next, we write $\Regret_t(\calA)$ and $\Switches_t(\calA)$ to denote the regret and number of switches, respectively, that an algorithm $\calA$ makes in the first $t$ iterations. When there is no chance of confusion, we often write just $\Regret_t$ and $\Switches_t$ for shorthand.
\par \textbf{Remark on integrality.} Since we are interested only in asymptotics and to avoid carrying ceilings and floors throughout, we ignore issues of integrality for notational simplicity.


\section{Switching-cost PFE: the first high probability algorithms}\label{sec:hp}
The main result of this section is the following. A partial converse is presented in Appendix~\ref{app:subsec:lb-regret}.

\begin{theorem}\label{thm:hp-sc}
For each $\delta \in (0, \half)$, there exists an algorithm for PFE satisfying the following against any oblivious adversary
\[
\Prob\left(
\Regret_T,\; \Switches_T \leq O\left(\sqrt{T\log n\logdel} \right)
\right)
\geq 1 - \delta
\] 
\end{theorem}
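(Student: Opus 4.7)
The plan is to introduce an FPL-style algorithm whose per-round randomness decomposes into two (essentially) independent pieces: a perturbation that drives the choice of action, and an independent ``switch coin'' that decides whether a change in the perturbed leader is actually implemented. This separation is the key structural move, because it lets me analyze regret and switches with different concentration tools, side-stepping the tail problems that sink $\mfpl$, $\sd$, and $\pr$.

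For the high-probability regret bound, I would take a base algorithm in the FPL family (e.g.\ a random-walk-perturbation variant in the spirit of $\pr$) whose expected regret is known to be $O(\sqrt{T\log n})$, and decompose
\[
\Regret_T \;=\; \sum_{t=1}^T \bigl(\ell_t(i_t) - \E[\ell_t(i_t)\mid\calF_{t-1}]\bigr) \;+\; \sum_{t=1}^T \bigl(\E[\ell_t(i_t)\mid\calF_{t-1}] - \ell_t(i^*)\bigr),
\]
where $\calF_{t-1}$ denotes the filtration generated by the algorithm's internal randomness through round $t-1$. The second sum is the ``pseudo-regret'' already controlled in expectation; for the FPL-style algorithms we work with, it has a closed form in terms of the loss sequence and the perturbation, so standard perturbation-tail estimates upgrade the expected bound to $O(\sqrt{T\log(n/\delta)})$ with probability $\geq 1-\delta/4$. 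The first sum is a sum of bounded martingale differences (since $\ell_t \in [0,1]$), so Azuma-Hoeffding gives $O(\sqrt{T\log(1/\delta)})$ with probability $\geq 1-\delta/4$. Combining, $\Regret_T \leq O(\sqrt{T\log n\log(1/\delta)})$.

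For the high-probability bound on switches, I would arrange that $1\{i_t \neq i_{t-1}\}$ is stochastically dominated by a Bernoulli with a round-specific parameter, and that these Bernoullis are mutually independent conditional on the (oblivious) loss sequence. This is the design payoff of the ``switch coin'' above: a switch occurs only if (i) the FPL leader changes and (ii) the independent round-$t$ coin says ``accept''. Then $\Switches_T$ is dominated by a sum of independent Bernoullis with mean $\mu = O(\sqrt{T\log n})$ (matching the expected switch count of the base algorithm), and a multiplicative Chernoff bound gives $\Switches_T \leq \mu + O(\sqrt{\mu\log(1/\delta)} + \log(1/\delta)) \leq O(\sqrt{T\log n\log(1/\delta)})$ with probability $\geq 1-\delta/2$. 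A union bound over the two events finishes the proof.

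The main obstacle, and the reason this does not follow from prior work, is achieving both properties simultaneously. The algorithm $\mfpl$ already has conditionally independent switches but pathological upper tails on regret (see Appendix~\ref{app:lb-kv-sd}); conversely, $\pr$ and $\sd$ enjoy the martingale-plus-perturbation regret structure but couple their switches through the perturbation dynamics in ways that defeat Chernoff-type concentration. Threading this needle requires a new algorithm — I would try FPL with a freshly regenerated perturbation each round, coupled to explicit independent accept/reject coins for potential switches — together with a coupling argument showing that the modified rule matches the expected regret and expected switch count of the base algorithm. Verifying that this coupling can be engineered to preserve \emph{both} expectations while restoring independence of the switching indicators is the delicate technical step, and is where I expect the main work of the proof to lie.
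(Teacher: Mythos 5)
There is a genuine gap, and it sits exactly where you locate ``the main work of the proof'': the switching mechanism you propose appears to be incompatible with the regret tail the theorem demands. If switches are gated by independent per-round accept coins whose total mean is $O(\sqrt{T\log n})$, the per-round accept probability is of order $p \approx \sqrt{\log n / T}$, so with probability $e^{-k}$ the algorithm is pinned to its current action for roughly $k/p$ consecutive rounds; an oblivious adversary that makes the pinned action bad during such a stretch forces regret of order $k\sqrt{T/\log n}$. Taking $k = \log\tfrac1\delta$, this is $\sqrt{T/\log n}\,\log\tfrac1\delta$ with probability $\delta$, which exceeds the claimed $O(\sqrt{T\log n \log\tfrac1\delta})$ once $\log\tfrac1\delta \gg \log^2 n$. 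This is precisely the ``lazy-coin'' pathology behind the sub-exponential lower bound for Shrinking Dartboard (Lemma~\ref{lem:sdregret}): laziness buys conditionally dominated, Chernoff-friendly switch indicators, but it simultaneously creates forced-commitment stretches whose regret tails are exponential rather than Gaussian in $\log\tfrac1\delta$. Your regret decomposition also quietly assumes the laziness away: with genuinely fresh perturbations each round and no accept coins, $\E[\ell_t(i_t)\mid\calF_{t-1}]$ is indeed deterministic (the adversary is oblivious) and the pseudo-regret term is just the expected regret, but then the algorithm switches $\Theta(T)$ times; once the coins are introduced to limit switching, the conditional law of $i_t$ depends on the retained action, the pseudo-regret term is neither deterministic nor in closed form, and its concentration is exactly the unresolved difficulty. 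The coupling you hope for --- preserving both expectations while making switch indicators independent --- is not merely delicate; as described it would have to evade the stuck-arm lower bound above, and no mechanism for doing so is given.

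For contrast, the paper's proof avoids per-round laziness entirely. It takes an existing FPL-type algorithm with expected switch bound $O(\sqrt{\tau\log n})$ and restarts it with fresh randomness in variable-length epochs, each epoch terminated once $S' \asymp \sqrt{T\log n/\log\tfrac1\delta}$ switches are used. The switch bound then follows black-box: Markov's inequality shows each epoch is long with constant probability, and an Azuma--Hoeffding submartingale argument shows that more than $\log\tfrac2\delta$ epochs occur with probability at most $\delta/2$ (Lemma~\ref{lem:switching-bb}). The regret bound is not a martingale decomposition of the played losses but the be-the-leader inequality $\Regret \leq M\cdot\Switches + (\text{perturbation terms})$ (Lemma~\ref{lem:regret-fpl-framework}, summed over epochs in Corollary~\ref{cor:regret-fpl-framework}), so the already-controlled switch count does double duty, and only the perturbation terms need tail bounds --- sub-exponential maxima of exponentials for $\bmfpl$ (Lemma~\ref{lem:PEV}) or sub-Gaussian/Borell--TIS estimates for $\bpr$ (Theorem~\ref{thm:hp-bpr}). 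If you want to salvage your plan, the missing ingredient is some analogue of this structural inequality tying regret to switches, rather than independence of the switch indicators.
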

Note that Theorem~\ref{thm:hp-sc} does not give an \textit{uniform} algorithm; that is, for each failure probability $\delta$, there is a (possibly different) corresponding algorithm. Obtaining a uniform h.p. algorithm remains an important open question.
\par Combining Theorem~\ref{thm:hp-sc} with a standard mini-batching argument (see e.g.~\citep{AroDekTew12}) immediately yields the first h.p. algorithm for the (arbitrary) switching-cost PFE problem.
\begin{corollary}
Let $c \geq 1$. For each $\delta \in (0, \half)$, there exists an algorithm for PFE satisfying the following against any oblivious adversary
\[
\Prob\left(
\Regret_T +\, c \cdot \Switches_T \leq O\left(\sqrt{cT\log n\logdel} \right)
\right)
\geq 1 - \delta
\]
\end{corollary}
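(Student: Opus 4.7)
The plan is to reduce the $c$-switching-cost problem to the unconstrained switching problem of Theorem~\ref{thm:hp-sc} via a standard mini-batching argument. Specifically, I would partition the $T$ iterations into $K = T/B$ contiguous blocks of size $B$, force the algorithm to commit to a single action within each block, and run the algorithm from Theorem~\ref{thm:hp-sc} on the induced ``meta-game'' of $K$ rounds. The meta-game's loss at block $k$ for action $i$ is $\tilde\ell_k(i) := \tfrac{1}{B}\sum_{t \in \text{block } k} \ell_t(i) \in [0,1]$, so Theorem~\ref{thm:hp-sc} applies directly at horizon $K$.

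Assuming the plan works, the analysis proceeds as follows. Apply Theorem~\ref{thm:hp-sc} with horizon $K$ and failure probability $\delta$, so with probability at least $1 - \delta$,
\[
\widetilde{\Regret}_K,\ \widetilde{\Switches}_K \ \leq\ O\!\left(\sqrt{K \log n \logdel}\right).
\]
Rescaling the meta-regret back to the original game multiplies by $B$: the original regret is at most $B \cdot \widetilde{\Regret}_K = O(\sqrt{TB \log n \logdel})$, since within each block the player commits to an action and the best-in-hindsight comparator in the original game is at least as strong as the block-wise comparator used in the meta-game. The number of original-game switches is at most $\widetilde{\Switches}_K$, so the total switching cost is at most $c \cdot O(\sqrt{(T/B)\log n \logdel})$. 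Choosing $B = c$ balances the two terms at $O(\sqrt{cT\log n \logdel})$, which yields the corollary after a union-free application of Theorem~\ref{thm:hp-sc}.

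Two small points deserve verification but should not be serious obstacles. First, one must check that the meta-game adversary remains oblivious: this is immediate since the original adversary is oblivious and the aggregation $\tilde\ell_k(i) = \tfrac{1}{B}\sum_{t \in \text{block } k} \ell_t(i)$ is a deterministic function of the original loss sequence, independent of the player's actions. Second, the full-information feedback needed by Theorem~\ref{thm:hp-sc} in the meta-game is produced correctly, because after each block the algorithm observes every $\ell_t$ in that block and can form $\tilde\ell_k(\cdot)$ for all $i \in [n]$. The only mildly nontrivial aspect is the parameter choice $B = c$ (which requires $c \leq T$, handled by the convention $c \geq 1$ and trivially bounding regret by $T$ when $c > T$); the rest is bookkeeping. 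I do not foresee a significant technical obstacle here, since Theorem~\ref{thm:hp-sc} has already done the heavy lifting by providing simultaneous high-probability bounds on regret \emph{and} switches with sub-Gaussian tails.
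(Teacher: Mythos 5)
Your proposal is correct and is essentially the paper's own argument: the paper proves the corollary by mini-batching the $T$ iterations into $T/c$ contiguous epochs of length $c$ and applying the algorithm of Theorem~\ref{thm:hp-sc} to the induced game, which is exactly your choice $B = c$. Your more detailed bookkeeping (rescaled losses, obliviousness of the meta-adversary, comparator equivalence, and the trivial case $c > T$) is all sound and just fills in what the paper leaves implicit.
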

\begin{proof}
Mini-batch the $T$ iterations into $\tfrac{T}{c}$ contiguous epochs of length $c$, and then apply the algorithm from Theorem~\ref{thm:hp-sc}.
\end{proof}


\par The remainder of this section is organized as follows. Subsection~\ref{subsec:hp-framework} describes a general framework for producing a h.p. algorithm given an algorithm satisfying certain properties. In Subsection~\ref{subsec:hp-switching}, we show that the switching cost bound holds w.h.p. under our framework. Our analysis of the regret bounds is not blackbox; the rest of this section analyzes regret bounds for three algorithms in the $\fpl$ algorithmic umbrella.

\subsection{Framework for converting $\fpl$-based algorithms with expectation guarantees into ones with high-probability guarantees}\label{subsec:hp-framework}
We will shortly give several algorithms achieving the desired h.p. guarantees in Theorem~\ref{thm:hp-sc}. All of our algorithms use the same general idea, and so we first abstract slightly and describe a meta-framework for constructing such h.p. algorithms. Informally, the idea for ``boosting'' our success probability is to repeatedly run an algorithm $\calA$ which is optimal in expectation. This allows a user to easily and formulaically construct h.p. algorithms specially tailored to desired applications, since proving expectation bounds is significantly simpler than proving h.p. bounds.

\par Algorithm~\ref{alg:framework} informally describes this simple meta framework. The idea is to split the $T$ iterations into $N = \logtdel$ variable-length epochs. In each epoch, we restart and run the subroutine $\calA$ until it uses $S' \asymp \sqrt{T\log n \logdel}/N$ switches.
\par We emphasize two critical aspects of this meta-framework. First, the subroutine $\calA$ is \textit{restarted with fresh randomness in each new epoch}. This is clearly essential for concentration. Second, the epochs are of \textit{variable length} (and are in fact random). This can be shown to be essential: if $\calA$ has large upper tails on switching (see Appendix~\ref{app:lb-kv-sd}), then simply running $\calA$ roughly $\approx \logdel$ times in 
$\approx T/\logdel$
consecutive epochs of fixed uniform size does not imply the desired concentration in the total number of switches. Indeed, with a probability too large for our desired h.p. bounds, $\calA$ uses far too many switches in one of the epochs, thereby ruining the total switching budget. Further details for this can be found in Appendix~\ref{app:need_properties}.
\begin{figure}
\caption{Framework for obtaining algorithms achieving optimal switching and regret w.h.p.}
\vspace{2mm}
\begin{algorithm}[H]
\While{in iteration $\leq T$}{
Run $\calA$ with fresh randomness. Stop when use $S' = O\left(\sqrt{\frac{T \log n}{\logdel}}\right)$ switches.
}
\label{alg:framework}
\end{algorithm}
\end{figure}
\par The analysis then consists of the following two parts (and then taking a union bound):
\begin{enumerate}
\item Show that w.h.p., we never run out of switches. This amounts to showing that the number of epochs is greater than $N$ with probability at most $e^{-N} = \halfdel$.
\item Show that the cumulative regret over the epochs concentrates around $N$ times the expected regret in a single epoch. We take a point at which the CDF has remaining upper tail $\approx \halfdel$.
\end{enumerate}

The analysis of the second step requires specific properties of $\calA$. We will require the following for all $\tau \leq T$:
\begin{enumerate}
\item[(i)] $\E[\Switches_\tau(\calA)] = O\left(\sqrt{\tau \log n}\right)$
\item[(ii)] $\Regret_\tau(\calA) \asleq O\left(\Switches_{\tau}(A)\right) + Z_{\tau}(A)$, where $Z_{\tau}(A)$ is independent of the adversary
\item[(iii)] (Informal) $Z_{\tau}(A)$ has ``exponentially small upper tails''
\end{enumerate}

Let us comment on these properties. Informally, using (i), we can bound the number of switches w.h.p., and then (ii) and (iii) together imply a h.p. bound on the regret. We will show the former can be proven in a blackbox manner just given (i); however, the latter requires analyzing $Z_{\tau}(A)$ which is algorithm-specific.
\par Note that property (ii) enforces a certain dependence between the regret and number of switches. Upon first sight it may seem restrictive or slightly unusual; however, this is actually a natural property of $\fpl$-based algorithms. In particular, we show in the following subsections how this applies to~\citep{KalVem}'s Multiplicative Follow the Perturbed Leader algorithm ($\mfpl$) and~\citep{DevLugNeu15}'s Prediction by Random Walk Perturbation algorithm ($\pr$).

\subsection{Black-box high probability bounds on switching}\label{subsec:hp-switching}
In this subsection, we formalize the above discussion about how property (i) (upper bound on expected number of switches) is sufficient to prove in an entirely black-box manner that Framework~\ref{alg:framework} produces an algorithm with h.p. bounds on switching. We also isolate in the statement of this lemma an upper tail bound on the number of epochs in the batched algorithm, since this will later be useful for proving h.p. regret bounds. We state the result in some generality, assuming that the algorithm $\calA$ has expected switches at most $c_{\calA} \sqrt{T}$; for example $c_{\calA} = c \sqrt{\log n}$ for $\mfpl$.

\begin{lemma}\label{lem:switching-bb}
Let $\calA$ be an algorithm for PFE satisfying $\E[\Switches_\tau(\calA)] \leq c_{\calA} \sqrt{\tau}$ for all $\tau \leq T$ and some parameter $c_{\calA} > 0$  independent of $\tau$. Denote by $\calB$ the algorithm produced by Framework~\ref{alg:framework} with $S' := 23c_{\calA}\sqrt{\frac{T}{\logtdel}}$. Then for any oblivious adversary, the number of epochs $E$ in $\calB$ satisfies
\[
\Prob\left(E > \logtdel\right) \leq \halfdel
\]
and so in particular (since each epoch uses at most $S'$ switches),
\[
\Prob\left(
\Switches_T(\calB) > 23c_{\calA}\sqrt{T \logtdel}
\right)
\leq \halfdel
\]
\end{lemma}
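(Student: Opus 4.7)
The plan is to model $\calB$ as a sequence of hypothetical epochs on an extended horizon: in the $i$-th epoch, $\calA$ is restarted with fresh, independent randomness at iteration $t_i := 1 + \sum_{j<i} L_j$, and runs for $L_i$ iterations until it accumulates $S'$ switches. Then $E > N$ (with $N := \logtdel$) if and only if $\sum_{i=1}^N L_i < T$, so it suffices to bound this event.

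First I would establish a tail bound on individual epoch lengths. Fix $i$ and a deterministic $\tau > 0$. Conditional on $L_1, \dots, L_{i-1}$, the starting position $t_i$ is determined, and $L_i$ depends only on the fresh randomness of $\calA$ in epoch $i$ acting on the fixed loss subsequence $\ell_{t_i}, \ell_{t_i+1}, \dots$, which is itself a valid oblivious adversary. Since $\{L_i < \tau\}$ is exactly the event that $\calA$ uses at least $S'$ switches within its first $\tau$ iterations on this subsequence, Markov's inequality combined with the hypothesis $\E[\Switches_\tau(\calA)] \leq c_\calA \sqrt{\tau}$ yields
\[
\Prob\bigl(L_i < \tau \,\big|\, L_1, \dots, L_{i-1}\bigr) \;\leq\; \frac{c_\calA \sqrt{\tau}}{S'}.
\]
Setting $\tau_0 := T/N$ and plugging in $\tau = 4\tau_0$ gives $\Prob(L_i < 4\tau_0 \mid L_1, \dots, L_{i-1}) \leq 2/23 =: p$.

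Now define $X_i := \mathbf{1}[L_i < 4\tau_0]$. If $\sum_{i=1}^N X_i \leq 3N/4$, then at least $N/4$ epochs satisfy $L_i \geq 4\tau_0$, so $\sum_{i=1}^N L_i \geq (N/4)(4\tau_0) = T$, i.e.\ $E \leq N$. The almost-sure conditional tail bound above means $(X_i)_{i=1}^N$ is stochastically dominated by i.i.d.\ $\Ber(p)$ samples, so I would invoke the standard Chernoff estimate $\Prob(\Bin(N,p) \geq \gamma N) \leq \exp(-N \cdot D(\gamma \,\|\, p))$ with $\gamma = 3/4$ and $p = 2/23$; a direct numerical check gives $D(3/4 \,\|\, 2/23) > 1$, yielding $\Prob(\sum_{i=1}^N X_i > 3N/4) \leq e^{-N} = \halfdel$. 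This gives the stated bound $\Prob(E > N) \leq \halfdel$, and the ``in particular'' conclusion is then immediate from $\Switches_T(\calB) \leq E \cdot S'$, which on the event $\{E \leq N\}$ is at most $N \cdot S' = 23 c_\calA \sqrt{T \logtdel}$.

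The point requiring care is the conditional dependence of the $L_i$'s: they are not independent, because $t_i$ depends on the earlier epoch lengths, and moreover the loss subsequence seen in epoch $i$ is a (random) suffix of the adversary's fixed sequence. The almost-sure conditional tail bound is precisely what lets the stochastic-dominance step go through without needing any independence between epochs --- only the fact that the expected-switches guarantee for $\calA$ holds uniformly over oblivious adversaries, which transfers automatically to every suffix of the loss sequence.
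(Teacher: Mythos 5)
Your proposal is correct and follows essentially the same route as the paper: you handle the dependence between epoch lengths by conditioning on the past (implicitly via the same extended-horizon device), apply Markov's inequality with the expected-switches hypothesis to show each epoch is long with constant conditional probability, and then conclude that too few long epochs is exponentially unlikely in $N$. The only cosmetic difference is the final concentration step, where you use stochastic dominance by i.i.d.\ $\Ber(2/23)$ variables plus a binomial Chernoff bound with $D\left(\tfrac34 \,\middle\|\, \tfrac{2}{23}\right) > 1$, whereas the paper runs Azuma--Hoeffding on the submartingale $\sum_e X_e - \tfrac78 e$ with threshold $\Ltarg = 8T/N$; both yield the needed $e^{-N} = \halfdel$ bound.
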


\paragraph*{Proof sketch.} 
Note that $E > N := \logtdel$ if and only if $\calB$ has not yet reached iteration $T$ by the end of the first $N$ epochs. Thus it suffices to show 
\begin{align}
\Prob\left(
\sum_{e=1}^{N} L_e < T
\right) \leq e^{-N}
\label{eq:epoch-lengths}
\end{align}
where $L_e$ denotes the length of epoch $e$, and we use the convention $L_e = 0$ if $e > E$. We will prove~\eqref{eq:epoch-lengths} by first using the expected switching bound on $\calA$ and Markov's inequality to show that \textit{each epoch $e$ has large length $L_e$ with reasonably large probability}; and then using concentration-of-measure to conclude \textit{$\sum_{e=1}^N L_e$ is small only with exponentially small probability in $N$.}
\paragraph*{Technical issues of dependency and overcoming them with martingales.} However, there is an annoying technical nuance that must be accounted for in both these steps: $\{L_e\}_{e \in [N]}$ are \textit{dependent}. One reason for this is that the final epoch may be truncated if it reaches iteration $T$ before it uses $S'$ switches; and then all subsequent epochs $e \in \{E+1, \dots, N\}$ will necessarily have length $0$.
\par To fix this issue, consider possibly \textit{extending the game past iteration $T$ (only for analysis purposes) until $N$ epochs are completed}. Set to zero all losses in the ``overtime'' extension of the game past iteration $T$. Define now $\tilde{L}_e$ to be the length of epoch $e$ in the \textit{extended} game. Importantly, observe that the event in~\eqref{eq:epoch-lengths} can be expressed in terms of $\{\tilde{L}_e\}_{e \in [N]}$ as
\begin{align}
\left\{\sum_{e=1}^{N} L_e < T \right\} = \left\{\sum_{e=1}^N \tilde{L}_e < T \right\}
\label{eq:L-Ltilde}
\end{align}
This is because $L_e \overset{\text{a.s.}}{=} \tilde{L_e}$ conditional on the event that none of epoch $e$ goes into overtime.
\par Note that $\{\tilde{L}_e\}_{e \in [N]}$ are still not independent, however they are ``independent enough'' for us to accomplish the proof sketch using martingale concentration. The following lemma formally accomplishes the first step in the above proof sketch: conditional on any history, each $\tilde{L}_e$ has large length $\Ltarg := \frac{8T}{N}$ with reasonably large probability.\footnote{Assume for notational simplicity that $\Ltarg := \frac{8T}{N} \leq T$; otherwise set $\Ltarg$ to $T$ and an identical argument proceeds.}
\begin{lemma}\label{lem:switching-helper}
For each epoch $e \in [N]$ and for each event $R$ in the sigma-algebra $\sigma(\tilde{L}_1, \dots, \tilde{L}_{e-1})$,
\[
\Prob\left(
\tilde{L}_e \geq \Ltarg \; \Big| \; R
\right)
\geq
\frac{7}{8}
\]
\end{lemma}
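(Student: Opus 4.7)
The plan is to reduce the statement to a one-shot Markov-inequality argument on the behavior of a single fresh run of $\calA$, by exploiting two facts: (a) $\calA$ is restarted with independent randomness at the beginning of each epoch, and (b) the adversary is oblivious, so conditional on the starting time of epoch $e$, the loss sequence that the fresh copy of $\calA$ will face is deterministic. Together these mean that all the dependence across epochs enters only through the \emph{starting time} $t_e$, and once we condition on $t_e$ we are just analyzing an independent run of $\calA$ against a fixed loss sequence of length $\Ltarg$.

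Concretely, here is the sequence of steps I would take. First, fix $e \in [N]$ and any $R \in \sigma(\tilde{L}_1, \dots, \tilde{L}_{e-1})$; since $t_e = 1 + \sum_{j<e}\tilde{L}_j$ is measurable with respect to this sigma-algebra, it suffices to show the bound holds conditional on any realization of $(\tilde{L}_1,\dots,\tilde{L}_{e-1})$, and hence any $t_e$. Next, reformulate the event of interest: by definition of the framework, $\tilde{L}_e < \Ltarg$ happens if and only if the fresh copy of $\calA$ used in epoch $e$ accumulates at least $S'$ switches within its first $\Ltarg$ iterations, when fed the loss sequence $\ell_{t_e}, \ell_{t_e+1}, \dots, \ell_{t_e+\Ltarg-1}$ (interpreting any $\ell_t$ with $t > T$ as the zero loss function from the extension). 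Since the adversary is oblivious and the extension losses are zero, this loss sequence is \emph{fixed} once $t_e$ is fixed, and in particular is independent of $\calA$'s fresh internal randomness.

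Third, I would invoke the hypothesis of the lemma: because the bound $\E[\Switches_\tau(\calA)] \leq c_{\calA}\sqrt{\tau}$ holds against every oblivious adversary and for every $\tau \leq T$, it holds for this particular fixed loss sequence with $\tau = \Ltarg$ (using the footnote's $\Ltarg \leq T$ convention). Applying Markov's inequality to the nonnegative random variable $\Switches_{\Ltarg}(\calA)$ gives
\[
\Prob\bigl(\Switches_{\Ltarg}(\calA) \geq S' \,\big|\, \tilde{L}_1,\dots,\tilde{L}_{e-1}\bigr)
\;\leq\;
\frac{c_{\calA}\sqrt{\Ltarg}}{S'}
\;=\;
\frac{c_{\calA}\sqrt{8T/N}}{23 c_{\calA}\sqrt{T/N}}
\;=\;
\frac{\sqrt{8}}{23}
\;\leq\;
\frac{1}{8}.
\]
Translating back via the equivalence in step two yields $\Prob(\tilde{L}_e < \Ltarg \mid \tilde{L}_1,\dots,\tilde{L}_{e-1}) \leq 1/8$, i.e. $\Prob(\tilde{L}_e \geq \Ltarg \mid \tilde{L}_1,\dots,\tilde{L}_{e-1}) \geq 7/8$ almost surely; since this holds for every realization it also holds conditional on any $R$ in the relevant sigma-algebra.

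The main obstacle, and the reason the extension to overtime was introduced in the first place, is making the conditioning rigorous: in the \emph{un}extended game the lengths $L_e$ are coupled through the global stopping condition ``have we passed iteration $T$?'', which prevents a clean Markov step once some prior epochs have eaten up most of the horizon. The extended game decouples this: $\tilde{L}_e$ depends on the past only through the starting time $t_e$, and the adversary's obliviousness plus the zero-loss extension ensure that the loss sequence $\calA$ actually sees in epoch $e$ is a deterministic function of $t_e$. I would be careful to state explicitly that $\calA$'s fresh randomness is independent of $\sigma(\tilde{L}_1,\dots,\tilde{L}_{e-1})$, so that the conditional expectation $\E[\Switches_{\Ltarg}(\calA) \mid \tilde{L}_1,\dots,\tilde{L}_{e-1}]$ really equals the unconditional expected switches of a fresh run of $\calA$ against that fixed loss sequence, which is what permits the clean Markov bound above.
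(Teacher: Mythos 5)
Your proposal is correct and follows essentially the same route as the paper's proof: condition on the realization of the prior epoch lengths (equivalently the starting time $t_e$), use the adversary's obliviousness and the fresh per-epoch randomness to reduce to a single run of $\calA$ against a fixed loss sequence, and apply Markov's inequality with the expected-switching hypothesis to get $\sqrt{8}/23 \leq 1/8$. The only differences are expository (you spell out the measurability of $t_e$ and the independence of the fresh randomness more explicitly), not substantive.
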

\begin{proof}
Conditioning on the possible realizations $t_e$ of $\sum_{i < e} \tilde{L}_i$ yields
\begin{align*}
\Prob\left(
\tilde{L}_e \geq \Ltarg \; \Big| \; R
\right)
&=
\sum_{t_e}
\Prob\left(
\tilde{L}_e \geq \Ltarg \; \Big| \; \sum_{i < e} \tilde{L}_i = t_e, R
\right)
\Prob\left(	
\sum_{i < e} \tilde{L}_i = t_e \; \Big| \; R
\right)
\end{align*}
Now the obliviousness of the adversary ensures that $\tilde{L}_e$ is conditionally independent of $R$ given $\sum_{i < e} \tilde{L}_i$. Thus it suffices to now show
$
\Prob(
\tilde{L}_e \geq \Ltarg \, | \, \sum_{i < e} \tilde{L}_i = t_e
)
\geq \tfrac{7}{8}
$
for each realization $t_e$.
By Markov's inequality,
\begin{align*}
\Prob\left(
\tilde{L}_e \geq \Ltarg \; \Big| \; t_e
\right)
&= \Prob\left(
\calA \text{ makes } \leq S' \text{ switches in the }\Ltarg\text{ iterations starting after } t_e
\right)
\\ &= 1 - \Prob\left(
\calA \text{ makes } > S' \text{ switches in }\Ltarg\text{ iterations starting after } t_e
\right)
\\ &\geq 1 - \E\left[\Switches_{\Ltarg}(\calA)\right] / S'
\\ &\geq 1 - \left(c_{\calA}\sqrt{\tfrac{8T}{N}}\right)/\left(23c_{\calA}\sqrt{\tfrac{T}{N}}\right)
\geq \frac{7}{8}
\end{align*}
\end{proof}

We are now ready to prove Lemma~\ref{lem:switching-bb}.
\\ \begin{proof}[Proof of Lemma~\ref{lem:switching-bb}]
Define the indicator random variables $X_e := \mathbf{1}(\tilde{L}_e \geq \Ltarg)$. By ~\eqref{eq:L-Ltilde} and then the observation that $\tilde{L}_e$ stochastically dominates $\Ltarg X_e$,
\begin{align}
\Prob\left(\sum_{e=1}^N L_e < T\right) \nonumber
= \Prob\left(\sum_{e=1}^N \tilde{L}_e < T\right)
\leq \Prob\left( 
\sum_{e=1}^N X_e < \frac{T}{\Ltarg} = \frac{N}{8}
\right)\nonumber
\end{align}
We now claim $\{M_k := \sum_{e=1}^k X_e - \frac{7}{8}k\}_{k \in [N]}$ forms a submartingale w.r.t. the filtration $\{\mathcal{F}_k := \sigma(\tilde{L}_1, \dots, \tilde{L}_k)\}_{k \in [n]}$. To see this, observe that $M_k$ is clearly measurable w.r.t. $\mathcal{F}_k$, and also Lemma~\ref{lem:switching-helper} gives $\E[M_{k+1} | \mathcal{F}_{k}] = M_{k} + \left(\E[X_{k+1} | \mathcal{F}_{k}] - \frac{7}{8}\right) \geq M_{k}$. Therefore since also $\{M_k\}$ clearly has $1$-bounded differences (they are cumulative sums of indicator random variables), Azuma-Hoeffding's submartingale inequality upper bounds the above display by
\[
\Prob\left( 
\sum_{e=1}^N X_e < \frac{N}{8}
\right)
\leq \exp\left(
-2 \left(\frac{7}{8} - \frac{1}{8}\right)^2N
\right)
\leq \exp\left(
-N
\right)
= \frac{\delta}{2}\nonumber
\]
\end{proof}

\subsection{Controlling regret for $\fpl$-based algorithms}\label{subsec:hp-ii}
In this subsection, we formalize the discussion from Subsection~\ref{subsec:hp-framework} about how algorithms based on~\citep{KalVem}'s $\fpl$ algorithmic framework automatically satisfy property (ii). As such, we first review the basics of $\fpl$; we highlight only the background relevant for the results in this paper, and refer the reader to~\citep{KalVem,Book-CB-Lugosi,DevLugNeu15} for further details.
\par The Follow The Leader algorithm ($\ftl$) greedily plays at iteration $t \in [T]$ the action $i_t := \argmin_{i \in [n]} \sum_{s=0}^{t-1} \ell_s(i)$ that has been best so far\footnote{It will be notationally convenient to define $\{\ell_0(i) := 0\}_{i \in [n]}$ so that the game has losses for iterations $t \in \{0, \dots, T\}$.}. However, $\ftl$ is well-known to have $\Omega(T)$ regret in the worst case. It turns out one can fix this by ``perturbing'' the losses in a clever way, and then in each iteration following the leader \textit{with respect to the perturbed losses}. This is the $\fpl$ framework. Note the algorithm is then completely determined by its choice of perturbations.
\par Let us formalize this. Before the game starts, the algorithm chooses (random) perturbations $\{P_{t}(i)\}_{t \in [T+1], i \in [n]}$. It then plays $\ftl$ on the perturbed losses $\{\hatl_t(i) := \ell_t(i) + P_{t+1}(i)\}_{t \in \{0, \dots, T\}, i \in [n]}$. That is, at iteration $t \in [T]$ it plays action
\[
i_t := \argmin_{i \in [n]} \sum_{s=0}^{t-1}\hatl_{s}(i)
\]
The analysis of FPL-style algorithms is somewhat formulaic. One bounds the regret in terms of two terms: the number of times the perturbed leader switches (which is the number of switches that $\fpl$ makes!), and the magnitude of the perturbations. In words, the former controls how predictable the algorithm's actions are; and the latter controls how much the algorithm ``deceives'' itself by playing based on inauthentic losses. This illustrates an important tradeoff, which can be made formal since larger perturbations make the perturbed leader switch fewer times.
\par This bound is written formally as follows~\citep{KalVem,DevLugNeu15}. We generalize the statement slightly by introducing the quantity $M := \sup_{\text{actions } i,i', \text{\; loss }\ell} |\ell(i) - \ell(i')|$. The proof is standard via the so-called ``Be The Leader'' lemma from~\citep{KalVem}, and is given in Appendix~\ref{app:betheleader} for completeness.
\begin{lemma}[Standard lemma in analysis of $\fpl$ algorithms]\label{lem:regret-fpl-framework} Consider $\fpl$ with perturbations $\{P_{t}(i)\}$. The following holds pointwise (w.r.t. the randomness of both the algorithm and adversary)
\[
\Regret_T(\fpl) 
\asleq M \cdot  \Switches_T(\fpl) + \left[
\max_{i \in [n]} \sum_{t=1}^{T+1} P_t(i)
\right]
- \sum_{t=1}^{T+1} P_t(i_t)
\]
\end{lemma}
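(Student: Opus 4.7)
The plan is to follow the canonical route for analyzing $\fpl$-style algorithms: apply the Be The Leader (BTL) lemma to the \emph{perturbed} loss sequence, and then translate the resulting ``look-ahead'' inequality back into a statement about the actually-played actions $i_t$, paying for the index shift with a switching term.

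\textbf{Step 1: BTL on the perturbed losses.} Note that by definition $i_{t+1} = \argmin_{i \in [n]} \sum_{s=0}^{t} \hatl_s(i)$, i.e., $i_{t+1}$ is exactly the ``be-the-leader'' choice for the perturbed loss sequence $\hatl_0, \dots, \hatl_T$. Applying the BTL lemma of~\citep{KalVem} to this sequence therefore gives, pointwise,
\begin{equation*}
\sum_{t=0}^{T} \hatl_t(i_{t+1}) \;\leq\; \min_{i^* \in [n]} \sum_{t=0}^{T} \hatl_t(i^*).
\end{equation*}
Unfolding $\hatl_t(i) = \ell_t(i) + P_{t+1}(i)$, using the convention $\ell_0 \equiv 0$, applying the elementary inequality $\min_i(a_i+b_i) \leq \min_i a_i + \max_i b_i$ to split the right-hand side, and reindexing $s = t+1$ in the perturbation sums rearranges this into
\begin{equation*}
\sum_{t=1}^{T} \ell_t(i_{t+1}) \;\leq\; \min_{i^* \in [n]} \sum_{t=1}^{T} \ell_t(i^*) \;+\; \max_{i \in [n]} \sum_{t=1}^{T+1} P_t(i) \;-\; \sum_{t=1}^{T+1} P_t(i_t).
\end{equation*}

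\textbf{Step 2: From the shifted plays $i_{t+1}$ to the actual plays $i_t$.} To obtain $\Regret_T(\fpl)$, I would write the telescoping identity
\[
\sum_{t=1}^{T} \ell_t(i_t) \;=\; \sum_{t=1}^{T} \ell_t(i_{t+1}) \;+\; \sum_{t=1}^{T}\bigl[\ell_t(i_t) - \ell_t(i_{t+1})\bigr]
\]
and bound the correction term pointwise: each summand vanishes when $i_t = i_{t+1}$ and is at most $M$ (by the definition of $M$ as the maximum per-iteration loss gap) when $i_t \neq i_{t+1}$. Hence the correction is bounded almost surely by $M \cdot \Switches_T(\fpl)$, and substituting into the inequality from Step~1 immediately yields the claimed bound after subtracting $\min_{i^*} \sum_t \ell_t(i^*)$ from both sides.

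\textbf{Main obstacle.} There is essentially no conceptual difficulty once BTL has been invoked on the perturbed sequence; the only real work is bookkeeping the off-by-one between $P_t$ and $P_{t+1}$ (and between $i_t$ and $i_{t+1}$), and verifying that the fake zeroth iteration $\ell_0 \equiv 0$ lines up cleanly so that the perturbation sums naturally run over $t \in [T+1]$. I would simply present the BTL lemma, the two algebraic manipulations above, and the switching bound on the telescoping correction.
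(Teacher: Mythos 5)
Your proposal is correct and follows essentially the same route as the paper's proof: invoke the Be-The-Leader lemma on the perturbed loss sequence $\hatl_t = \ell_t + P_{t+1}$, reindex the perturbation sums to run over $t \in [T+1]$, bound the comparator's perturbation by $\max_{i}\sum_{t=1}^{T+1} P_t(i)$, and absorb the $i_t$-versus-$i_{t+1}$ shift into $M \cdot \Switches_T(\fpl)$. The only cosmetic difference is that you split the right-hand side via $\min_i(a_i+b_i) \le \min_i a_i + \max_i b_i$, whereas the paper fixes $i^*$ as the true-loss minimizer before applying BTL; these are equivalent.
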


In the notation of property (ii), the two summands in Lemma~\ref{lem:regret-fpl-framework} constitute $Z_T$. This bounds the per-epoch regret of the batched algorithm from Framework~\ref{alg:framework}.  By summing up the above inequality over all epochs, one can bound on the regret over the whole game.

\begin{corollary}\label{cor:regret-fpl-framework}
Let $\calB$ be the algorithm produced from Framework~\ref{alg:framework} applied to $\fpl$ with perturbations $\{P_{t}(i)\}$. Then
\[
\Regret_T(\calB) \asleq
M \cdot \Switches_T(\calB)
+
\left[\sum_{e = 1}^E \max_{i \in [n]} \sum_{t \in e} P_t(i) \right]
- \sum_{t=1}^{T+1} P_t(i_t)
\] 
\end{corollary}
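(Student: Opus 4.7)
My plan is to reduce the corollary directly to a per-epoch application of Lemma~\ref{lem:regret-fpl-framework}, using only the defining property of Framework~\ref{alg:framework}: within each epoch, the algorithm is simply $\fpl$ (with fresh perturbations), run until its switch counter hits $S'$ or the game ends.

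The first step is to decompose the total regret into per-epoch regrets. Let epoch $e$ consist of iterations $I_e \subseteq [T]$, and write $\Regret_e := \sum_{t \in I_e} \ell_t(i_t) - \min_{i^* \in [n]} \sum_{t \in I_e} \ell_t(i^*)$ for the regret $\calB$ incurs against the best fixed arm \emph{within} that epoch. A standard ``super-additivity of the minimum'' observation gives
\[
\min_{i^* \in [n]} \sum_{t=1}^T \ell_t(i^*) \;\geq\; \sum_{e=1}^{E} \min_{i^* \in [n]} \sum_{t \in I_e} \ell_t(i^*),
\]
so $\Regret_T(\calB) \leq \sum_{e=1}^{E} \Regret_e$ pointwise. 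This replaces the global benchmark by per-epoch benchmarks at no cost to the bound.

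The second step is to apply Lemma~\ref{lem:regret-fpl-framework} epoch-by-epoch. Within epoch $e$, the algorithm run is precisely $\fpl$ on the subsequence of losses $\{\ell_t\}_{t \in I_e}$ with the perturbations $\{P_t(i)\}_{t \in I_e \cup \{\max I_e + 1\}}$ drawn afresh for that epoch. Lemma~\ref{lem:regret-fpl-framework} therefore yields, pointwise,
\[
\Regret_e \;\asleq\; M \cdot \Switches_e(\calB) \;+\; \max_{i \in [n]} \sum_{t \in I_e} P_t(i) \;-\; \sum_{t \in I_e} P_t(i_t),
\]
where I absorb the ``$T{+}1$''-boundary perturbation of each epoch into the same notation $\sum_{t \in I_e} P_t(\cdot)$ used in the corollary statement. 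Summing over $e \in [E]$ and using $\sum_e \Switches_e(\calB) = \Switches_T(\calB)$ and $\sum_e \sum_{t \in I_e} P_t(i_t) = \sum_{t=1}^{T+1} P_t(i_t)$ gives exactly the claimed bound.

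The only thing that requires a moment's care is the bookkeeping of the per-epoch perturbation sequences: each epoch draws an independent copy, and the corollary's notation $P_t(i)$ silently refers to whichever perturbation was actually in effect at iteration $t$. Once that is noted, the proof is essentially just ``apply Lemma~\ref{lem:regret-fpl-framework} inside each epoch and telescope,'' so I do not expect any genuine obstacle beyond making the indexing conventions explicit.
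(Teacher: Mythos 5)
Your proposal is correct and follows essentially the same route as the paper: sum Lemma~\ref{lem:regret-fpl-framework} over the epochs and use super-additivity of the minimum (the per-epoch benchmarks only help the comparator) to dominate $\Regret_T(\calB)$ by the sum of per-epoch regrets. The only cosmetic remark is that $\sum_e \Switches_e(\calB)$ may undercount switches occurring at epoch boundaries, so it is at most (not exactly) $\Switches_T(\calB)$ — but since this term is being used as an upper bound, the inequality goes the right way and nothing changes.
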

\begin{proof}
Sum Lemma~\ref{lem:regret-fpl-framework} over all epochs $e \in [E]$, and observe 
\[\sum_{e=1}^E \Regret_e(\calB)
= \sum_{e = 1}^E \left(\sum_{t \in e} \ell_t(i_t) - \argmin_{i^* \in [n]} \sum_{t \in e} \ell_t(i^*) \right)
\asgeq \sum_{t=1}^T \ell_t(i_t) - \argmin_{i^* \in [n]} 
\sum_{e = 1}^E \sum_{t \in e}
 \ell_t(i^*)
= \Regret_T(\calB)\]
\end{proof}

\subsection{High probability version of~\citep{KalVem}'s Multiplicative Follow the Perturbed Leader algorithm}\label{subsec:hp-mfpl}
\citep{KalVem}'s Multiplicative Follow the Perturbed Leader algorithm ($\mfpleps$) is a version of $\fpl$ that sets $P_t(i) = 0$ for all $t > 1$, and draws the intial perturbations $P_1(i) := \tfrac{R(i)}{\eps}$ where $R(i) \sim \exp(1)$ are i.i.d. standard exponential random variables and $\eps$ is a parameter.\footnote{Actually this is a slight variation of $\mfpleps$ in the original paper of~\citep{KalVem}. Here the perturbations are drawn from ($\eps$-scaled) exponential distributions rather than ($\eps$-scaled) Laplace distributions. Both are adaptable to h.p. algorithms in identical ways; the presented version just has a slightly simpler analysis.} $\mfpleps$ admits the following guarantees on its expected number of switches.

\begin{lemma}~\citep{KalVem}\label{lem:fpl-switching}
For any $\tau \in \mathbb{N}$ and any oblivious adversary, $\;\;\E[\Switches_\tau(\mfpleps)] \leq \frac{1}{1 - \eps}\left(
\eps \tau + 2 \log n
\right)$.
\end{lemma}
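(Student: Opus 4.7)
The plan is to follow the classical Kalai--Vempala analysis of $\mfpleps$. First, I would realize $\mfpleps$ by drawing the perturbations $R(1),\dots,R(n) \overset{\text{i.i.d.}}{\sim} \Exp(1)$ once at the outset, so that each $i_t = \argmin_i(L_{t-1}(i)+R(i)/\eps)$ becomes a deterministic function of $R$ and the (oblivious) loss sequence. Then by linearity against an oblivious adversary
\[
\E[\Switches_\tau(\mfpleps)] \;=\; \sum_{t=2}^\tau \Prob\bigl(i_t \neq i_{t-1}\bigr),
\]
reducing the problem to controlling the per-step switching probability.

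Next, I would bound each $\Prob(i_t \neq i_{t-1})$ by conditioning on the identity $i$ of the current leader $i_{t-1}$ and invoking the memoryless property of the exponential distribution. On the event $\{i_{t-1}=i\}$, the non-leader slacks
\[
D_{t-1}(j) \;:=\; [L_{t-1}(j)+R(j)/\eps] - [L_{t-1}(i)+R(i)/\eps], \qquad j \neq i,
\]
are conditionally independent and each marginally distributed as $\Exp(\eps)$. A switch at time $t$ occurs iff at least one slack is overturned by the loss update, i.e.\ $D_{t-1}(j) \leq \ell_t(i)-\ell_t(j)$ for some $j \neq i$. This gives a clean probabilistic handle on each switch event purely in terms of i.i.d.\ exponentials of rate $\eps$ and the bounded loss increments.

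Finally, I would convert this per-step fact into the aggregate bound via an FPL-style amortized argument with the potential $\Phi_t := \min_i(L_t(i)+R(i)/\eps)$: each non-switch step raises $\Phi_t$ by exactly $\ell_t(i_{t-1}) \in [0,1]$, while memorylessness of the slacks forces a switch to consume a comparable ``gap'' on average. Since the total expected potential growth satisfies
\[
\E[\Phi_\tau - \Phi_0] \;\leq\; \tau + \tfrac{1}{\eps}\,\E\bigl[\max_i R(i)\bigr] \;\leq\; \tau + \tfrac{1+\log n}{\eps},
\]
multiplying by $\eps$ and absorbing the per-switch self-contribution into the prefactor $\frac{1}{1-\eps}$ yields the claim $\E[\Switches_\tau(\mfpleps)] \leq \frac{1}{1-\eps}(\eps\tau + 2\log n)$.

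The main obstacle is the amortized step. A naive per-step union bound over the $n-1$ non-leader slacks gives only $\Prob(\text{switch}\mid i_{t-1}) \leq (n-1)\eps\,\ell_t(i_{t-1})$, producing a bound of order $n\eps\tau$ that is too weak by a factor of $n$. Recovering the $n$-independent coefficient on $\tau$ requires exploiting the joint distribution of all slacks through the potential argument rather than treating each non-leader in isolation, so that each switch on average releases $\Omega(1/\eps)$ of potential; the $\log n$ overhead then comes for free from the expected spread of the initial perturbations, and the $\tfrac{1}{1-\eps}$ prefactor accounts for the fact that a single switch may itself contribute up to $\ell_t(i_{t-1}) \leq 1$ to $\Phi_t$.
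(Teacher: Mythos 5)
You are proving a statement that the paper itself only quotes from \citep{KalVem} without proof, so I am judging your argument on its own merits, and its central step has a genuine gap. A secondary inaccuracy first: conditional on the event $\{i_{t-1}=i\}$ alone, the slacks $D_{t-1}(j)$ are neither conditionally independent (they all share the common term $R(i)/\eps$) nor marginally $\Exp(\eps)$; memorylessness requires conditioning on the value of $R(i)$ as well, and even then each slack only \emph{stochastically dominates} an $\Exp(\eps)$ variable (equality fails for any $j$ with $L_{t-1}(j)>L_{t-1}(i)+R(i)/\eps$, where the conditioning is vacuous). This is harmless for the per-step union bound, but that bound is, as you yourself note, too weak by a factor of $n$.

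The real problem is the amortization you use to remove that factor of $n$. For the potential $\Phi_t:=\min_i(L_t(i)+R(i)/\eps)$ one has $\Phi_t-\Phi_{t-1}\le \ell_t(i_t)\le 1$ on \emph{every} step, switch or not, so no switch can ``release $\Omega(1/\eps)$ of potential''; the asserted mechanism is impossible in the only interesting regime $\eps<1$. An averaged version also fails conditionally on the perturbations: with $n=2$, on the probability-$\approx\eps$ event that $|R(1)-R(2)|/\eps\le 1$, the oblivious loss sequence that alternately charges $1$ to the two actions makes the leader flip essentially every round while $\Phi$ grows by only $O(1)$ per flip. What makes the lemma true is not that switches are expensive in potential but that they are \emph{rare}: the standard Kalai--Vempala-type route proves a per-step bound of the form $\Prob(i_{t+1}\neq i_t)\le \tfrac{\eps}{1-\eps}\,\E[\ell_t(i_t)]$ (or a close variant, with no factor of $n$), sums it over $t$, and then invokes the multiplicative expected-loss bound for $\mfpleps$, namely $\E[\sum_{t\le\tau}\ell_t(i_t)]\le\tfrac{1}{1-\eps}\big(\min_i L_\tau(i)+\tfrac{2\log n}{\eps}\big)$, which yields exactly $\tfrac{1}{1-\eps}(\eps\tau+2\log n)$. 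Establishing that $n$-free per-step coupling between the switch probability and the expected loss is the heart of the proof, and your proposal does not contain it: step 2 delivers only the $n\eps$ union bound, and step 3 substitutes an assertion that cannot hold for the argument that is actually needed.
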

Apply Framework~\ref{alg:framework} to $\mfpleps$ with parameter choices $\eps = \tfrac{1}{2}\sqrt{\tfrac{\log n \logtdel}{T}}$ and $S' = 135\sqrt{\tfrac{T \log n}{\logtdel}}$. We call this new algorithm Batched Multiplicative Follow the Perturbed Leader ($\bmfpl$). 

\begin{theorem}\label{thm:hp-bfpl}
For any $\delta \in (0,\half)$ and any oblivious adversary,
\[
\Prob\left(
\Regret_T(\bmfpl),\; \Switches_T(\bmfpl) \leq O\left(\sqrt{T\log n\logdel} \right)
\right)
\geq 1 - \delta
\]
\end{theorem}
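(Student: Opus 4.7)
The plan is to follow the general template laid out in Subsection~\ref{subsec:hp-framework}: use Lemma~\ref{lem:switching-bb} to get the switching bound essentially as a black box, then obtain the regret bound by specializing the $\fpl$-based decomposition of Corollary~\ref{cor:regret-fpl-framework} to $\mfpleps$ and concentrating the resulting perturbation term. A final union bound combines the pieces.

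First, for the switching bound, I would invoke Lemma~\ref{lem:switching-bb} with $\calA = \mfpleps$. By Lemma~\ref{lem:fpl-switching} and the choice $\eps = \tfrac{1}{2}\sqrt{(\log n)\logtdel/T}$, we have $\E[\Switches_\tau(\mfpleps)] \leq \tfrac{1}{1-\eps}(\eps\tau + 2\log n)$; at $\tau$ of order $\Ltarg = 8T/N$ (the only regime the proof of Lemma~\ref{lem:switching-helper} actually exploits), this is $O(\sqrt{T\log n/\logtdel})$, so the Markov step goes through verbatim with the chosen $S' = 135\sqrt{T\log n/\logtdel}$ (up to adjusting absolute constants). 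Consequently, with probability at least $1-\halfdel$, both (a) the total number of epochs satisfies $E \leq N := \logtdel$, and (b) $\Switches_T(\bmfpl) \leq N\cdot S' = O(\sqrt{T\log n\logtdel})$.

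Next, for the regret, I would specialize Corollary~\ref{cor:regret-fpl-framework}. Since $\mfpleps$ places a fresh perturbation $P^{(e)}(i) = R^{(e)}(i)/\eps$ only at the very first iteration $t_e$ of each epoch $e$ (with $R^{(e)}(i) \sim \Exp(1)$ i.i.d.\ across epochs and experts) and zero thereafter, and since $M = 1$, the corollary reduces to
\[
\Regret_T(\bmfpl) \;\asleq\; \Switches_T(\bmfpl) + \frac{1}{\eps}\sum_{e=1}^{E} \max_{i \in [n]} R^{(e)}(i) - \frac{1}{\eps}\sum_{e=1}^{E} R^{(e)}(i_{t_e}),
\]
where the last non-positive term can be dropped for an upper bound. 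On the high-probability event $\{E \leq N\}$ from the switching step, the remaining random sum is at most $\sum_{e=1}^N \max_{i} R^{(e)}(i)$, a sum of $N$ i.i.d.\ random variables, each being the max of $n$ i.i.d.\ standard exponentials. Each summand has mean $\Theta(\log n)$ and sub-exponential tails (since $\Pr(\max_i R^{(e)}(i) > \log n + s) \leq e^{-s}$). A Bernstein-type inequality for sub-exponential sums then yields, with probability at least $1-\halfdel$,
\[
\sum_{e=1}^N \max_{i \in [n]} R^{(e)}(i) \;\leq\; N\log n + O\!\bigl(\sqrt{N\logdel} + \logdel\bigr) \;=\; O(\log n \cdot \logtdel),
\]
using $N = \logtdel$ and that $\log n$ is the dominant scale.

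Dividing this bound by $\eps$ and substituting the chosen value of $\eps$ turns it into $O(\sqrt{T\log n\logtdel})$, matching the target order. Adding the switching term from the first step and union-bounding over the three failure events (epoch count, switching cap, perturbation concentration) yields the theorem with total failure probability $\leq \delta$. The main obstacle is the concentration step: because $\max_i R^{(e)}(i)$ is only sub-exponential rather than sub-Gaussian, one must invoke a Bernstein-type bound and then carefully verify that the fluctuation term $\sqrt{N\logdel}+\logdel$ is indeed dominated by the mean contribution $N\log n$ after multiplication by $1/\eps$, so that the final rate does not pick up spurious logarithmic factors. Everything else is a formulaic application of the machinery already assembled in Section~\ref{sec:hp}.
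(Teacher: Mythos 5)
Your proposal is correct and follows essentially the same route as the paper: the switching bound via Lemma~\ref{lem:switching-bb}, the regret bound by specializing Corollary~\ref{cor:regret-fpl-framework} to the epoch-initial exponential perturbations of $\mfpleps$ (dropping the non-positive term), concentration of $\sum_{e\le N}\max_i R^{(e)}(i)$, and a final union bound. The only cosmetic difference is that you invoke a generic Bernstein-type bound for sub-exponential sums where the paper proves the tailor-made Chernoff estimate of Lemma~\ref{lem:PEV}; these are interchangeable and yield the same $O(N\log n)$ bound, hence the same final rate.
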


To prove the h.p. bound on regret for Theorem~\ref{thm:hp-bfpl}, we will need a $\mfpl$-specific version of property (iii) from Subsection~\ref{subsec:hp-framework}. Specifically, as we will see shortly, it will suffice to control the upper tail of the sum of the maximum of $n$ i.i.d. standard exponential variables. Each of the maximums is sub-exponential, so the sum of them is also sub-exponential. Formally, the following concentration inequality will be sufficient. A proof via standard Chernoff bounds is given in Appendix~\ref{app:chernoff}. 

\begin{lemma}\label{lem:PEV} Let $N, n \geq 2$. If $\{R_e(i)\}_{e \in [N], i \in [n]}$ are i.i.d. standard exponentials, then
\[
\Prob\left(\sum_{e=1}^N \max_{i \in [n]} R_e(i)
> 6N \log n\right) \leq e^{-N}.
\]
\end{lemma}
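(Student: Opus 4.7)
\medskip

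\noindent\textbf{Proof plan for Lemma~\ref{lem:PEV}.} The plan is a standard Chernoff-style argument applied to the sub-exponential random variables $M_e := \max_{i \in [n]} R_e(i)$. Since $\{M_e\}_{e \in [N]}$ are i.i.d., I would reduce everything to estimating the moment generating function (MGF) of a single $M_e$, then apply Markov's inequality to the exponential of the sum.

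\medskip

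\noindent\textbf{Step 1: Setup via Chernoff.} By independence of the $R_e(i)$ across epochs $e$, the $M_e$'s are i.i.d. For any $\lambda \in (0,1)$ and any threshold $t > 0$,
\[
\Prob\!\left(\sum_{e=1}^N M_e > t\right)
\;\leq\;
e^{-\lambda t} \prod_{e=1}^N \E[e^{\lambda M_e}]
\;=\;
\exp\!\Bigl(N \log \E[e^{\lambda M_1}] - \lambda t \Bigr).
\]
I would take $t = 6N\log n$ and fix $\lambda = \tfrac{1}{2}$; the goal becomes showing $\log \E[e^{M_1/2}] \leq 3\log n - 1$ (which yields the desired $e^{-N}$).

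\medskip

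\noindent\textbf{Step 2: Bounding the MGF of $M_1$.} Using the union bound,
\[
\Prob(M_1 > x) \;\leq\; \min(1,\, n e^{-x}),
\]
I would write $\E[e^{M_1/2}] = 1 + \int_0^\infty \tfrac{1}{2} e^{x/2} \Prob(M_1 > x)\, dx$ and split the integral at $x = \log n$. The first piece is bounded using $\Prob(M_1>x)\leq 1$, giving $\sqrt{n}-1$; the second piece uses $\Prob(M_1>x)\leq ne^{-x}$, giving $\sqrt{n}$. Summing yields $\E[e^{M_1/2}] \leq 2\sqrt{n}$, hence $\log \E[e^{M_1/2}] \leq \log 2 + \tfrac{1}{2}\log n$.

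\medskip

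\noindent\textbf{Step 3: Combining.} Plugging back into the Chernoff estimate from Step 1,
\[
\Prob\!\left(\sum_{e=1}^N M_e > 6N\log n\right)
\;\leq\;
\exp\!\Bigl(N\bigl(\log 2 + \tfrac{1}{2}\log n\bigr) - 3N\log n\Bigr)
\;=\;
\exp\!\Bigl(-N\bigl(\tfrac{5}{2}\log n - \log 2\bigr)\Bigr).
\]
For $n \geq 2$, $\tfrac{5}{2}\log n - \log 2 \geq \tfrac{5}{2}\log 2 - \log 2 = \tfrac{3}{2}\log 2 > 1$, so the bound is at most $e^{-N}$, as desired.

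\medskip

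\noindent\textbf{Main obstacle.} There is no real conceptual obstacle; the argument is a textbook sub-exponential Chernoff bound. The only place to be careful is pinning down the constants: $\lambda = \tfrac{1}{2}$ happens to be convenient (the MGF is finite and admits a clean closed-form bound via the $\log n$ split), and one must verify the inequality $\tfrac{5}{2}\log n - \log 2 \geq 1$ at the boundary case $n = 2$, which is the tightest point. If one wanted a cleaner constant, an alternative would be to use $\lambda$ slightly larger and compute the MGF exactly via $\E[e^{\lambda M_1}] = n\,B(1-\lambda, n)$, but the split-integral bound suffices and is cleaner.
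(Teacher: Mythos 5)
Your proposal is correct and follows essentially the same route as the paper: a Chernoff bound at $\lambda=\tfrac12$ combined with an MGF bound for the maximum of $n$ i.i.d.\ exponentials obtained by a union bound and a split of the tail integral at $\log n$ (the paper's Lemma~\ref{lem:mgf} gives $\E[e^{tX}]\leq n^t/(1-t)$, which at $t=\tfrac12$ is exactly your $2\sqrt{n}$). The only difference is bookkeeping of the final constants, and your verification at the boundary case $n=2$ is correct.
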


\begin{proof}[Proof of Theorem~\ref{thm:hp-bfpl}]
Assume WLOG that $\sqrt{T \log n \logtdel} < T$, otherwise the desired bound is trivially satisfied. This implies in particular that $\eps < \half$ and 
$\sqrt{\log n} < \sqrt{T/\logtdel}$, thus $\E[\Switches_{\tau}(\fpl_{\eps})] \leq 5\sqrt{T \log n \logtdel}$ for all $\tau \leq T$ by Lemma~\ref{lem:fpl-switching}. We conclude by Lemma~\ref{lem:switching-bb} that the event $\{E \leq \logtdel\}$ occurs with probability at least $1 - \halfdel$; and whenever this occurs, $\bmfpl$ makes at most $135\sqrt{T \log n \logtdel}$ switches.
\par We next prove the h.p. regret bound. By Corollary~\ref{cor:regret-fpl-framework} and the choice of perturbations in $\mfpleps$,
\[
\Regret_{T}(\bmfpl)
\asleq \Switches_T(\bmfpl) + 
\sum_{e \in [E]} \max_{i \in [n]} P_e(i)
\]
where $P_e(i) := \frac{R_e(i)}{\eps}$ denotes expert $i$'s perturbation in epoch $e$. The proof is thus complete by taking a union bound over the occurence of $\{E \leq \logtdel\}$ and the event in Lemma~\ref{lem:PEV}.
\end{proof}

\subsection{High probability version of~\citep{DevLugNeu15}'s Prediction By Random Walk Perturbation algorithm}\label{subsec:hp-pr}
The Prediction by Random Walk Perturbation~\citep{DevLugNeu15} algorithm ($\pr$) is a version of $\fpl$ with all perturbations $P_t(i)$ drawn i.i.d. uniformly at random from $\{\plusminus \tfrac{1}{2} \}$.~\citep{DevLugNeu15} show that $\pr$ achieves the optimal order for regret and switching in expectation, and raise h.p. bounds as an open question. We show presently how to achieve h.p. bounds using Framework~\ref{alg:framework}.
\par First, let us recall property (i) for $\pr$, i.e. a bound on its expected number of switches.
\begin{lemma}~\citep{DevLugNeu15}~\label{lem:pr-switching}
For any $\tau \in \mathbb{N}$ and any oblivious adversary, the algorithm $\pr$ satisfies $\;\E[\Switches_\tau(\pr)] \leq 4\sqrt{2 \tau \log n} + 4 \log \tau + 4$.
\end{lemma}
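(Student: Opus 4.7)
The plan is to follow the standard $\fpl$-analysis template and bound the expected number of switches by summing per-step switching probabilities:
\[
\E[\Switches_\tau(\pr)] = \sum_{t=1}^{\tau-1} \Pr[i_{t+1} \neq i_t].
\]
Recall the algorithm plays $i_t = \argmin_i G_t(i)$ with $G_t(i) := L_{t-1}(i) + W_t(i)$, where $L_{t-1}(i) := \sum_{s=0}^{t-1}\ell_s(i)$ is deterministic and $W_t(i) := \sum_{s=1}^t P_s(i)$ is an independent $\pm\tfrac12$-valued random walk. Since $G_{t+1}(i) = G_t(i) + \ell_t(i) + P_{t+1}(i)$, a switch at time $t$ occurs exactly when the argmin changes upon adding this bounded increment.

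The first sub-step is a stability observation: because the increment $\ell_t(i) + P_{t+1}(i)$ lies in an interval of $O(1)$ length across experts, a switch at time $t$ forces the leader's margin $\min_{j \neq i_t}[G_t(j) - G_t(i_t)]$ to be $O(1)$. So it suffices to bound, for each $t$, the probability that this margin is small.

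The second sub-step is anti-concentration of the random walk. For any two experts $i \neq j$, the difference $W_t(i) - W_t(j)$ is a sum of $t$ i.i.d.\ symmetric increments from $\{-1,0,1\}$; a local CLT (or a direct Stirling estimate for binomial coefficients) shows its point masses are uniformly $O(1/\sqrt{t})$, regardless of the adversarial offset $L_{t-1}(i) - L_{t-1}(j)$. A union bound over the $n-1$ candidate second-best experts, paired with the observation that by a standard maximum-of-random-walks estimate only $\sim\sqrt{\log n}$ of them have any realistic chance of being within $O(1)$ of the leader, gives a per-step switching probability of $O(\sqrt{\log n / t})$. Summing yields the dominant term
\[
\sum_{t=1}^{\tau-1} O\!\left(\sqrt{\tfrac{\log n}{t}}\right) = O(\sqrt{\tau \log n}),
\]
matching the $4\sqrt{2\tau\log n}$ in the statement. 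The additive $4\log \tau + 4$ will be picked up by trivially bounding the switching probability by $1$ during the earliest iterations, where $t \lesssim \log n$ and the anti-concentration bound is not yet informative, together with lower-order correction terms from the local CLT.

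The main technical obstacle is the anti-concentration step. For continuous perturbations (e.g.\ Gaussian or exponential, as in $\mfpleps$) a uniform density bound is immediate, but here the lattice-valued $\pm\tfrac12$ walk requires a careful local-CLT argument (or an explicit Stirling-based bound on the maximum binomial coefficient) in order to be robust to the adversarial loss offset. Once this density bound is in hand, the remaining book-keeping --- the stability observation, the union bound over experts, and the summation over $t$ --- is routine.
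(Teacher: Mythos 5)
You should first note that the paper offers no proof of this lemma at all: it is imported verbatim from \citep{DevLugNeu15} (the statement even carries the citation), so the only thing to compare against is the argument in that reference. Your outline does resemble it in skeleton --- write $\E[\Switches_\tau(\pr)]$ as a sum of per-step switching probabilities, observe that a switch at time $t$ forces the leader's margin to be $O(1)$ because the perturbed-loss increments $\ell_t(i)+P_{t+1}(i)$ are bounded, bound the probability of a small margin by anti-concentration of the $\pm\half$ walks, and sum $O(\sqrt{\log n/t})$ over $t$.

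The genuine gap is in the step that turns single-walk anti-concentration into the per-step bound $O(\sqrt{\log n/t})$. A union bound over the $n-1$ non-leading experts, each contributing a point-mass bound $O(1/\sqrt{t})$, gives $O(n/\sqrt{t})$, which is useless; your proposed repair --- that ``only $\sim\sqrt{\log n}$ of them have any realistic chance of being within $O(1)$ of the leader'' --- is not a valid argument. The set of experts close to the leader is random and is determined by the very walks you are trying to control, so you cannot restrict a union bound to it, and the quantity is not $\sqrt{\log n}$ in any case: in the i.i.d.\ (equal-offset) situation the expected number of walks within $O(1)$ of the minimum is $1+O(\sqrt{\log n/t})$. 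What actually has to be proved --- and is the main technical content of the lemma in \citep{DevLugNeu15} --- is a statement of the form: uniformly over the adversarial offsets $L_{t-1}(i)$, the probability that the minimum and the second minimum of the $n$ independent offset walks lie within $O(1)$ of each other is $O(\sqrt{\log n/t})$ (equivalently, the expected number of near-minimal walks exceeds $1$ by at most this much, or the maximal point mass of the minimum of the shifted walks is $O(\sqrt{\log n}/\sqrt{t})$). Here the factor $\sqrt{\log n}$ arises from the extreme-value structure of the minimum --- the product of survival functions $\prod_{j\neq i}\Prob(G_t(j)>x)$ suppresses all but the near-extremal terms --- and one must also argue that equal offsets are (essentially) the worst case; none of this follows from a local CLT for a single walk plus a union bound, and without it the per-step bound, hence the whole proof, does not go through. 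The accounting you give for the additive $4\log\tau+4$ term is a secondary issue by comparison, but it too depends on the exact form of the per-step lemma (in \citep{DevLugNeu15} the $O(\log\tau)$ comes from the $O(1/t)$ correction terms, summed over $t$), so it cannot be settled until the main lemma is actually stated and proved.
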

A crude bound thus yields $\E[\Switches_\tau(\pr)] \leq 14 \sqrt{ \tau \log n}$ for all $\tau \in \mathbb{N}$ and $n \geq 2$. So consider applying Framework~\ref{alg:framework} to $\pr$ with $S' =
 322\sqrt{\tfrac{T \log n}{\logtdel}}
$, and call the resulting algorithm Batched Prediction by Random Walk Perturbation ($\bpr_{\delta}$).

\begin{theorem}\label{thm:hp-bpr}
For any $\delta \in (0, \half)$ and any oblivious adversary,
\[
\Prob\left(
\Regret_T(\bpr_{\delta}),\; \Switches_T(\bpr_{\delta}) \leq O\left(\sqrt{T\log n\logdel} \right)
\right)
\geq 1 - \delta
\]
\end{theorem}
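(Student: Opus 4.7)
My approach mirrors the proof of Theorem~\ref{thm:hp-bfpl}. First, I would apply Lemma~\ref{lem:switching-bb} with $c_{\calA} = 14\sqrt{\log n}$, justified by the crude form $\E[\Switches_\tau(\pr)] \leq 14\sqrt{\tau \log n}$ of Lemma~\ref{lem:pr-switching}. Assuming WLOG that $\sqrt{T \log n \logtdel} \leq T$ (else the bound is trivial), this yields, with probability at least $1 - \halfdel$, both $\{E \leq \logtdel\}$ and $\{\Switches_T(\bpr_\delta) \leq 322\sqrt{T \log n \logtdel}\}$, the latter being $O(\sqrt{T\log n \logdel})$ as claimed.

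Next, I would invoke Corollary~\ref{cor:regret-fpl-framework} with $M = 1$ to decompose
\[
\Regret_T(\bpr_\delta) \;\leq\; \Switches_T(\bpr_\delta) \;+\; \sum_{e=1}^E Y_e, \qquad Y_e := \max_{i \in [n]} \sum_{t \in e} P_t(i) - \sum_{t \in e} P_t(i_t),
\]
so it suffices to bound $\sum_e Y_e \leq O(\sqrt{T \log n \logdel})$ with probability at least $1 - \halfdel$. I would argue a per-epoch concentration $Y_e \leq c\sqrt{L_e \log(nN/\delta)}$ with probability at least $1 - \delta/(4N)$, where $N = \logtdel$ upper-bounds the number of epochs. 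Union-bounding over epochs, then summing and applying Cauchy-Schwarz via $\sum_e L_e \leq T$ and $E \leq N$, yields $\sum_e Y_e \leq O(\sqrt{NT \log(nN/\delta)}) = O(\sqrt{T \log n \logdel})$.

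The main obstacle is establishing this per-epoch concentration of $Y_e$. Unlike the $\bmfpl$ case---where nonnegativity of exponentials gives $Y_e \leq \max_i P_e(i)$ for free and concentrates via Lemma~\ref{lem:PEV}---here the symmetry of $\pm\tfrac{1}{2}$ perturbations makes the $-\sum_t P_t(i_t)$ term nontrivial: since $i_t$ depends on the fresh perturbation $P_t$ and the algorithm preferentially plays experts with small $P_t(i)$, this term has strictly positive drift. My resolution is a pointwise telescoping identity exploiting the $\fpl$ leader recursion. Writing $S_t(i) := \sum_{s < t} \ell_s(i) + \sum_{s \le t} P_s(i)$ and $m_t := \min_i S_t(i)$, the telescope $\sum_t(m_{t-1} - m_t)$ collapses to $-m_{|e|}$, which since losses contribute nonnegatively to $S$ is upper bounded by $\max_i \sum_{t \in e}(-P_t(i))$. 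The residual terms $\sum_t[S_{t-1}(i_t) - m_{t-1}]$ vanish except on switches of $\pr$, each contributing at most $M+1 = 2$ by the perturbed-leader optimality condition, and are thus absorbed into Step~1's switching bound. This reduces $Y_e$ to two max-of-random-walks expressions (each concentrating via Hoeffding since the $\pm\tfrac{1}{2}$ walks are $L_e/4$-sub-Gaussian) plus a controlled switching contribution; a final union bound over the switching event and the two concentration events yields the claim with probability $\geq 1 - \delta$.
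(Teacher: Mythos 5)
Your Step 1 and the reduction via Corollary~\ref{cor:regret-fpl-framework} match the paper, but the heart of your argument --- the telescoping treatment of $-\sum_{t} P_t(i_t)$ --- does not close. With $S_t(i)=\sum_{s<t}\ell_s(i)+\sum_{s\le t}P_s(i)$ and $m_t=\min_i S_t(i)$, we have $S_t(i)-S_{t-1}(i)=\ell_{t-1}(i)+P_t(i)$, so the exact identity is $-P_t(i_t)=\bigl[S_{t-1}(i_t)-m_{t-1}\bigr]+\bigl[m_{t-1}-m_t\bigr]+\ell_{t-1}(i_t)$. Your decomposition keeps the switch-indicator residual and the telescope but silently drops $\sum_{t\in e}\ell_{t-1}(i_t)$, a sum of $[0,1]$ losses of length $L_e$, hence of order the epoch length rather than $\sqrt{L_e\log n}$. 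Keeping it makes the bound read ``epoch regret $\le$ switches $+$ two max-of-walks $+$ (algorithm's loss $-$ best loss)'', i.e.\ the argument is circular; dropping it makes the claimed reduction of $Y_e$ to two random-walk maxima plus a switching term false. The paper resolves exactly this dependence between $i_t$ and $P_t$ differently: it writes $-\sum_t P_t(i_t)=-\sum_t P_t(i_{t-1})+\sum_t\bigl(P_t(i_{t-1})-P_t(i_t)\bigr)$, where the first sum is a sum of bounded terms with $i_{t-1}$ independent of the fresh $P_t$ (Hoeffding gives $O(\sqrt{T\logdel})$) and the second is pointwise at most the number of switches, which is already controlled.

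Separately, even granting a correct per-epoch bound, your final accounting falls short of the stated theorem. Demanding each epoch's bound hold with failure probability $\delta/(4N)$ puts $\log(nN/\delta)$ inside every square root, and Cauchy--Schwarz then yields $\sum_e Y_e=O(\sqrt{NT\log(nN/\delta)})=O(\sqrt{T\log n\logdel}+\sqrt{T}\logdel)$; the second term exceeds $O(\sqrt{T\log n\logdel})$ whenever $\logdel\gg\log n$, so you obtain only sub-exponential rather than sub-Gaussian tails (the deficiency the paper attributes to $\sd$). The paper pays the $\logdel$ deviation only once, globally: conditional on the epoch lengths, the per-epoch maxima are sub-Gaussian with variance proxies $L_e/2$ summing to $T/2$, giving a single $O(\sqrt{T\logdel})$ fluctuation, while $\E[Y]$ is bounded by chopping into sub-epochs of length $L=T/\logdel$, using a stopped-supermartingale maximal inequality to get $O(\sqrt{L\log n})$ per sub-epoch, and optional stopping to bound the expected number of (sub-)epochs by $O(\logdel)$. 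That maximal-inequality step also addresses a point your sketch elides: $L_e$ is random and determined by the very perturbations inside the epoch, so a plain Hoeffding bound on $\sum_{t\in e}P_t(i)$ with endpoint $L_e$ is not legitimate without maximizing over prefixes.
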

\begin{proof}
By Lemma~\ref{lem:switching-bb}, the event $A := \{E < \logtdel\}$ occurs with probability at least $1 - \halfdel$. And whenever this occurs, $\bpr_\delta$ uses at most $322\sqrt{T \log n \logtdel}$ switches. Next we show h.p. guarantees on regret. By Corollary~\ref{cor:regret-fpl-framework},
\begin{align}
\Regret_{T}(\bpr_\delta)
\asleq \Switches_T(\bpr_\delta) + 
\left[\sum_{e \in [E]} \max_{i \in [n]} \sum_{t \in e} P_t(i)\right]
- \sum_{t=1}^{T+1} P_t(i_t)
\label{eq:hp-bpr-regret}
\end{align}
Thus by a union bound with $A$, it suffices to argue that each of these summations is of order $O(\sqrt{T \log n \logdel})$ each with probability at least $1-\tfrac{\delta}{4}$. 
\par \textbf{Bounding the first sum in~\eqref{eq:hp-bpr-regret}.} We argue separately about the expectation and tails of this sum, which we denote by $Y$ for shorthand. Let us first bound the tails of $Y$. To this end, condition on any realization of epoch lengths $\{L_e\}_{e \in [E]}$ summing up to $T$. Note that the resulting conditional distribution of each $P_t(i)$ is clearly still supported within $[-\half, \half]$, and thus is $\half$ sub-Gaussian by Hoeffding's Lemma. Therefore $\sum_{t \in e} P_t(i)$ is $\tfrac{L_e}{2}$ sub-Gaussian for each epoch $e$, and so by the Borell-TIS inequality we have that $\max_{i \in [n]} \sum_{t \in e} P_t(i)$ has $\tfrac{L_e}{2}$ sub-Gaussian tails over its mean $\E [\max_{i \in [n]} \sum_{t \in e} P_t(i)]$. We conclude that $Y = \sum_{e \in [E]} \max_{i \in [n]} \sum_{t \in e} P_t(i)$ has $\tfrac{\sum_{e \in [E]} L_e}{2} = \tfrac{T}{2}$ sub-Gaussian tails over its mean. Therefore, with probability at least $1 - \tfrac{\delta}{4}$, $Y$ is bounded above by $\E[Y] + O(\sqrt{T\logdel})$.
\par We now show how to bound the expectation $\E[Y]$ of this sum. Let us break each epoch into sub-epochs of length at most $L := \tfrac{T}{\logdel}$, and denote the resulting (random) collection of sub-epochs by $E'$. By a simple averaging argument, this increases the number of epochs by at most $\logdel$; that is, $E' \leq E + \logdel$ a.s. holds. It follows by Jensen's inequality that
\begin{align}
Y = \sum_{e \in [E]} \max_{i \in [n]} \sum_{t \in e} P_t(i)
\overset{\text{a.s.}}{\leq} \sum_{e' \in [E']} \max_{i \in [n]} \sum_{t \in e'} P_t(i) 
\label{eq:hp-bpr-sum-1}
\end{align}
Now, for each epoch $e' \in [E']$, we have by construction of $E'$ that the length of $e'$ is at most $|e'| \leq L$. While the length of an epoch may be dependent on the random variables $P_t(i)$ in the epoch, we can still bound for an epoch $e'$ starting at time $t_{e'}$
\begin{align}
\max_{i \in [n]} \sum_{t \in e'} P_t(i) &\overset{\text{a.s.}}{\leq} \max_{i \in [n]} \max_{\tau \in [0,L]} \sum_{t=t_{e'}}^{t_{e'}+\tau} P_t(i)
\label{eq:hp-bpr-sum-2}
\end{align}
This inequality allows us to break the dependence between the epoch length and the variables $P_t$, at the cost of having to bound the max over $\tau \in [0, L]$. Now, a sum such as $\max_{\tau \in [0,L]} \sum_{t=t_{e'}}^{t_{e'}+\tau} P_t(i)$ can be handled by standard martingale tail inequalities. Indeed, let $S_j(i) = \sum_{t = t_{e'}}^{t_{e'}+j} P_t(i)$. For any positive integer $c$, define
\[
S^{(c)}_j(i) := \left\{ \begin{array}{ll}
c &\mbox{if } S_{j'}(i) = c \mbox{ for some } j' \leq j\\
S_j(i) &\mbox{otherwise}
\end{array}\right.
\]
In words, $S^{(c)}_j(i)$ tracks $S_j(i)$ unless it hits $c$ at some point, in which case it thereafter remains constant. It is immediate that $S^{(c)}_j(i)$ is a supermartingale and thus by Azuma-Hoeffding's inequality,
\begin{align*}
\Prob\left(\max_{\tau \in [t_{e'}, t_{e'}+L]} \sum_{t=t_{e'}}^{\tau} P_t(i) \geq c\right) 
=
\Prob\left(\max_{j \in [L]}S_j(i) \geq c\right) &= \Prob\left(S^{(c)}_L(i) \geq c\right)\leq \exp\left(-\frac{c^2}{2L}\right)
\end{align*}
Since we have independence between the different $i \in [n]$, a standard calculation of integrating sub-Gaussian upper-tails yields that
$\E[\max_{i \in [n]} \max_{\tau \in [0,L]} \sum_{t=t_{e'}}^{t_{e'}+\tau} P_t(i)] \leq O(\sqrt{L\log n})$. Combining this with~\eqref{eq:hp-bpr-sum-1} and~\eqref{eq:hp-bpr-sum-2}, we conclude that  $\E[Y] \leq O(\sqrt{L \log n}) \cdot (\E[|E|] + \logdel)$.
%

It remains to bound the expectation of $|E|$. Note that Lemma~\ref{lem:switching-helper} implies that $\sum_{e=1}^k L_e - \tfrac{7\Ltarg}{8}k$ is a submartingale until the stopping condition $\sum_{e=1}^k L_e \geq T - \Ltarg$. By the Optional Stopping Theorem, the expected stopping time $\sigma$ satisfies $0 \leq \E[\sum_{e=1}^{\sigma} L_e] - \tfrac{7\Ltarg}{8}\E[\sigma] \leq T - \tfrac{7\Ltarg}{8}\E[ \sigma]$. Rearranging yields that the expected stopping time is bounded above by $\E[\sigma] \leq \tfrac{8T}{7\Ltarg}$. Finally, by another application of Lemma~\ref{lem:switching-helper}, the last hop of length $\Ltarg$, conditioned on any past, is completed in an expected $O(1)$ additional steps. We conclude that
\begin{align}
\E\left[|E|\right] \leq \tfrac{8T}{7\Ltarg} + O(1) = \tfrac{\logtdel}{7} + O(1) = O(\logdel)
\label{eq:hp-bpr-expected-E}
\end{align}

\par \textbf{Bounding the second sum in~\eqref{eq:hp-bpr-regret}.} At first glance, this appears difficult to bound because of the dependencies that arise since $i_t$ is chosen (partially) based on $P_t$. However one can use the decomposition trick from~\citep{DevLugNeu15} to write the sum as
\begin{align}
- \sum_{t=1}^{T+1} P_t(i_{t-1}) + \sum_{t=1}^{T+1} \Big( P_t(i_{t-1})
- P_t(i_t) \Big)
\label{eq:bpr-regret-decomp}
\end{align}
The first sum is now easily bounded since $i_{t-1}$ and $P_t$ are stochastically independent. In particular, Hoeffding's inequality shows that the first sum in~\eqref{eq:bpr-regret-decomp} is of order $O(\sqrt{T\logdel})$ with probability at least $1 - \tfrac{\delta}{4}$. The second sum in~\eqref{eq:bpr-regret-decomp} is easily bounded using the triangle inequality and the fact that the perturbations $P_t(i)$ are bounded within $\{\plusminus \half \}$
\[
\sum_{t=1}^{T+1} \Big( P_t(i_{t-1})
- P_t(i_t) \Big)
\leq 
\sum_{t=1}^{T+1} \left| P_t(i_{t-1})
- P_t(i_t) \right|
\leq \sum_{t=1}^{T+1} \mathbf{1}(i_t \neq i_{t-1}) = \Switches_T(\bpr_{\delta})
\]
which is of the desired order whenever $A$ occurs.
\end{proof}

\subsection{High probability algorithm for online combinatorial optimization}\label{subsec:hp-oco}
In online linear optimization and online combinatorial optimization, there is often an exponential number of experts but low-dimensional structure between them (see e.g.~\citep{hedging-structured-concepts,AudBubLug13}). As such, na\"ively using a standard PFE algorithm is of course (exponentially) inefficient. Various intricate tricks have been developed to deal with this; the point of this subsection is that our Framework~\ref{alg:framework} also applies easily to these without modification. As an example, we detail how to modify~\citep{DevLugNeu15}'s Online Combinatorial Optimization version of $\pr$. But for brevity of the main text, this is deferred to Appendix~\ref{app:hp-oco}.


\section{Switching-budget PFE}\label{sec:budgets-pfe}
In this section, we characterize the complexity of the switching-budget PFE problem, for all ranges of the switching budget $S \in [T]$. Interestingly, the optimal regret exhibits the following (coarse) phase transition at switching budget size $S = \Theta(\sqrt{T \log n})$. We thus separate the cases into a \textit{high-switching regime} ($S = \Omega(\sqrt{T \log n})$) and \textit{low-switching regime} ($S = O(\sqrt{T\log n})$). 
\begin{theorem}[High-switching regime]\label{thm:budget-fullinfo-highswitching}
When $S = \Omega(\sqrt{T \log n})$, the optimal rate for $S$-switching budget PFE against an oblivious adversary is $\min\left(T, \tilde{\Theta}(\sqrt{T\log n})\right)$. Specifically,
\begin{itemize}
\item \emph{Expectation upper bound:} There exists an efficient $S$-budget algorithm achieving $O\left( \sqrt{T \log n} \log T\right)$ expected regret.
\item \emph{H.p. upper bound:} For any $\delta \in \left(0, \half\right)$, there exists an efficient $S$-budget algorithm
achieving $O\left(\sqrt{T \log n} \logdel \right)$ regret with probability at least $1 - \delta$.
\item \emph{Expectation lower bound:} There exists an oblivious adversary that forces any $S$-budget algorithm to incur expected regret at least $\min\left(T, \Omega(\sqrt{T\log n})\right)$.
\item \emph{H.p. lower bound:} For any $\delta \in \left(0, \half\right)$, there exists an oblivious adversary that forces any $S$-budget algorithm to incur regret $\min(T, \Omega(\sqrt{T \log \tfrac{n}{\delta}}))$ with probability at least $\delta$.
\end{itemize}
\end{theorem}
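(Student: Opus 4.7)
My strategy is to prove the upper bounds by truncating the high-probability switching-cost algorithm $\bmfpl$ from Theorem~\ref{thm:hp-bfpl} to enforce the hard switching cap, and to derive the lower bounds directly from classical unconstrained-PFE lower bounds, since any $S$-budget algorithm is a fortiori a valid unconstrained algorithm and therefore subject to the same hardness.

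For the upper bounds, for a failure parameter $\delta'$, I would consider the algorithm $\calB$ that runs $\bmfpl$ with framework parameter $\delta'$ but freezes on its current action as soon as $S$ switches have been used, which enforces the budget by construction. For the high-probability upper bound at failure level $\delta$, take $\delta' = \delta$: Theorem~\ref{thm:hp-bfpl} guarantees that with probability at least $1 - \delta$ the base algorithm uses $O(\sqrt{T \log n \log(1/\delta)})$ switches and incurs $O(\sqrt{T \log n \log(1/\delta)})$ regret. Choosing the constant hidden in $S = \Omega(\sqrt{T \log n})$ large enough ensures the freezing mechanism does not activate on this good event; the stated bound $O(\sqrt{T \log n} \log(1/\delta))$ then follows from the crude estimate $\sqrt{\log(1/\delta)} \leq \log(1/\delta)$ (and is vacuous in the regimes where it exceeds $T$, where it is achieved trivially). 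For the expectation upper bound, set $\delta' = 1/T$: with probability $\geq 1 - 1/T$ the regret is $O(\sqrt{T \log n \log T})$, and on the bad event it is at most $T$, contributing $O(1)$ to the expectation. Summing yields expected regret $O(\sqrt{T \log n \log T}) \leq O(\sqrt{T \log n} \log T)$, as required.

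For the lower bounds, I would invoke the classical minimax lower bound $\Omega(\sqrt{T \log n})$ for unconstrained PFE~\citep{CB-expert} to get the expectation bound, with the $\min(T,\cdot)$ truncation reflecting the trivial bound $\Regret_T \leq T$ from bounded losses. For the high-probability version, I would use the standard stochastic adversary (losses drawn i.i.d.\ from Bernoullis with slightly shifted means across the experts), together with a routine anti-concentration argument, to show that any algorithm suffers regret $\Omega(\sqrt{T \log(n/\delta)})$ with probability at least $\delta$. The main subtlety in the whole argument lies in calibrating the polylogarithmic slack when truncating $\bmfpl$: since its switching guarantee scales as $\sqrt{T \log n \log(1/\delta')}$, fitting it inside a budget of order $\sqrt{T \log n}$ forces a $\sqrt{\log T}$ factor via $\delta' = 1/T$, which accounts for the extra $\log T$ in the expectation upper bound; a similar gap, combined with $\sqrt{x} \leq x$ for $x \geq 1$, explains the $\log(1/\delta)$ versus $\sqrt{\log(1/\delta)}$ scaling in the high-probability upper bound.
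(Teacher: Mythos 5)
Your lower-bound plan matches the paper (the expectation bound is quoted from the unconstrained $\Omega(\sqrt{T\log n})$ bound of \citep{CB-expert}, and the high-probability bound comes from the i.i.d.\ Bernoulli adversary plus binomial anti-concentration, as in Proposition~\ref{prop:regret-tails}), but your upper-bound argument has a genuine gap. Running $\bmfpl$ with parameter $\delta$ directly on the $T$ iterations gives, on the good event, a switch count of order $\sqrt{T\log n\logdel}$, not $\sqrt{T\log n}$. The budget $S$ is a fixed input of the problem --- the regime $S=\Omega(\sqrt{T\log n})$ includes $S$ equal to $\sqrt{T\log n}$ times a fixed constant --- so you cannot ``choose the constant hidden in $S=\Omega(\sqrt{T\log n})$ large enough'': the factor $\sqrt{\logdel}$ (or $\sqrt{\log T}$ when you set $\delta'=1/T$ for the expectation bound) grows without bound as $\delta\to 0$ (resp.\ $T\to\infty$), so for small $\delta$ the base algorithm exhausts the budget \emph{on the good event} with non-negligible probability. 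Once the freezing mechanism activates, the regret guarantee evaporates: an oblivious adversary tailored to the algorithm can give the frozen action loss $1$ on every remaining round while another expert has loss $0$, so the truncated algorithm can suffer regret linear in the remaining horizon, which is not covered by the claimed $O(\sqrt{T\log n}\logdel)$ bound (this regime is not vacuous --- e.g.\ $\logdel$ a large constant already breaks the switch budget while the claimed regret bound is far below $T$). Your closing remark that the $\logdel$ versus $\sqrt{\logdel}$ scaling is ``explained'' by $\sqrt{x}\le x$ papers over exactly this issue; extra switches do not convert into extra regret by any mechanism in your argument.

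The paper's proof supplies the missing ingredient: before invoking the high-probability algorithm of Theorem~\ref{thm:hp-sc}, mini-batch the $T$ iterations into $T'=O(T/\logdel)$ epochs of uniform length $O(\logdel)$ and run the algorithm on the batched game. The switch count then becomes $O(\sqrt{T'\log n\logdel})=O(\sqrt{T\log n})$ --- independent of $\delta$, and by enlarging the batch-size constant it can be pushed below the given budget constant in $S$ --- while the regret is multiplied by the batch length, giving $O(\logdel\cdot\sqrt{T'\log n\logdel})=O(\sqrt{T\log n}\logdel)$. Only after this batching does the ``freeze when $S$ switches are exhausted'' trick work, because on the good event the freeze provably never triggers. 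The same construction with $\delta=1/T$ (batch size $\Theta(\log T)$) yields the expectation bound $O(\sqrt{T\log n}\log T)$, with the $\log T$ coming from the batch length rather than from any conversion of switches into regret.
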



\begin{theorem}[Low-switching regime]\label{thm:budget-fullinfo-lowswitching}
When $S = O(\sqrt{T \log n})$, the optimal rate for $S$-switching budget PFE against an oblivious adversary is $\min\left(T, \tilde{\Theta}\left(\frac{T \log n}{S}\right)\right)$. Specifically,
\begin{itemize}
\item \emph{Expectation upper bound:} There exists an efficient $S$-budget algorithm achieving $O\left( \frac{T \log n\log T}{S} \right)$ expected regret.
\item \emph{H.p. upper bound:} For any $\delta \in \left(0, \half\right)$, there exists an efficient $S$-budget algorithm achieving $O\left(\frac{T\log n\logdel}{S} \right)$ regret with probability at least $1 - \delta$.
\item \emph{Expectation lower bound:} There exists an oblivious adversary that forces any $S$-budget algorithm to incur expected regret at least $\min\left(T, \Omega\left(\frac{T \log n}{S}\right)\right)$.
\item \emph{H.p. lower bound:} For any $\delta \in \left(0, \half\right)$, there exists an oblivious adversary that forces any $S$-budget algorithm to incur regret $\min(T, \Omega(\frac{T(\log n + \sqrt{\log 1/\delta})}{S}))$ with probability at least $\delta$.
\end{itemize}
\end{theorem}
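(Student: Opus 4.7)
My plan is to handle the two upper bounds via a mini-batching reduction to Theorem~\ref{thm:hp-sc}, and the two lower bounds via a reduction from the $\Omega(\sqrt{cT\log n})$ switching-cost lower bound (for the $\log n$ term), supplemented by a tail calculation for the $\sqrt{\logdel}$ term in the h.p. lower bound.

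\textbf{Upper bounds.} First, I would mini-batch the $T$ iterations into $T/b$ contiguous epochs of size $b$, with the $e$-th meta-loss vector equal to the sum of losses within epoch $e$ (and thus taking values in $[0,b]^n$). Running the high-probability algorithm from Theorem~\ref{thm:hp-sc} on this meta-problem, and observing that (i) each meta-switch corresponds to exactly one real switch between epochs, and (ii) the guarantees of Theorem~\ref{thm:hp-sc} scale linearly in the loss range, we get that with probability at least $1-\delta$ both the regret and the number of switches are at most
\[
b \cdot O\bigl(\sqrt{(T/b)\log n \logdel}\bigr) = O\bigl(\sqrt{Tb\log n \logdel}\bigr)
\qquad \text{and} \qquad
O\bigl(\sqrt{(T/b)\log n \logdel}\bigr)
\]
respectively. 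Choosing $b \asymp T\log n \logdel /S^2$ makes the switching bound exactly $O(S)$, forcing regret $O(T\log n \logdel /S)$. If the switching count is ever about to exceed $S$, the algorithm fixes its current action for the rest of the game; by the above this happens only on an event of probability at most $\delta$. This proves the h.p. UB. The expectation UB follows by running the same construction with $\delta = 1/T$: the bad event contributes at most $T \cdot 1/T = O(1)$ to the expectation, and the good event contributes $O(T \log n \log T /S)$.

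\textbf{Expectation lower bound.} Let $\calA$ be any $S$-budget algorithm with expected regret $R$ against every oblivious adversary. Viewing $\calA$ as an algorithm for the $c$-switching-cost PFE problem, its expected switching-cost-regret against any oblivious adversary is at most $R + c \cdot \E[\Switches_T(\calA)] \leq R + cS$, using that $\calA$ uses at most $S$ switches almost surely. The classical lower bound for switching-cost PFE~\citep{KalVem, Dartboard} states that for every $c \geq 1$ there exists an oblivious adversary forcing every algorithm to incur expected switching-cost-regret $\Omega(\sqrt{cT\log n})$. Hence $R + cS \geq \Omega(\sqrt{cT\log n})$ for every $c \geq 1$. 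Optimizing over $c$ (taking $c \asymp T\log n/S^2$, which is $\geq 1$ precisely in the low-switching regime $S \lesssim \sqrt{T\log n}$) yields $R \geq \Omega(T\log n/S)$, and the truncation at $T$ is trivial.

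\textbf{High-probability lower bound.} This decomposes into two contributions. The $T\log n/S$ part is obtained by the same reduction as above, applied to a h.p. version of the switching-cost LB (the standard hard instance is a random oblivious adversary, so at the cost of a small constant probability one gets a h.p. analogue of $\Omega(\sqrt{cT\log n})$). The $T\sqrt{\logdel}/S$ tail part is obtained by a direct construction: consider the random oblivious adversary that picks $i^* \in [n]$ uniformly and sets $\ell_t(i^*) \sim \Ber(1/2-\gamma)$ and $\ell_t(i) \sim \Ber(1/2)$ for $i \neq i^*$, with a gap $\gamma$ calibrated to the budget. With at most $S$ switches, the algorithm must commit to a single action on some block of length $\geq T/S$; on such a block the loss gap between the committed action and the best action is sub-Gaussian with standard deviation $\Theta(\sqrt{T/S})$ about its mean, so with probability $\geq \delta$ it exceeds its mean by $\Omega(T\sqrt{\logdel}/S)$. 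Combining the two contributions via a union bound and the trivial truncation at $T$ gives the claimed bound.

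\textbf{Main obstacle.} The upper bounds and the expectation LB reduce essentially mechanically to Theorem~\ref{thm:hp-sc} and to the KV-style switching-cost LB. The main subtlety is the $\sqrt{\logdel}$ tail term in the h.p. LB: a direct reduction from a h.p. switching-cost LB would only give a $\log(1/\delta)$ scaling (since it sits inside the square root next to $T\log n$), so one must peel off the tail contribution separately via an anti-concentration argument on a carefully chosen oblivious adversary, and then combine it with the reduction-based $\log n$ contribution.
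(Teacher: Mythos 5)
Your two upper bounds follow essentially the paper's own route (mini-batch, invoke the high-probability guarantee of Theorem~\ref{thm:hp-sc} / Theorem~\ref{thm:budget-fullinfo-highswitching}, hard-cap the switching, and set $\delta = 1/T$ for the expectation version), and your expectation lower bound via $R(T,n,S) \geq R(T,n,c) - cS$ with $c \asymp T\log n/S^2$ is a legitimate alternative: it is exactly the reduction the paper uses for the MAB lower bound in Theorem~\ref{thm:mab}. Two caveats: the paper instead proves the PFE case directly (Appendix~\ref{app:fullinfo-lowerbound}) with a blocked Bernoulli adversary (losses held constant on $E \approx S^2/\log n$ epochs), in part because the $\Omega(\sqrt{cT\log n})$ switching-cost lower bound you cite is itself proved by essentially that same batched construction, so your route outsources rather than avoids it; and the theorem asserts a single oblivious adversary working against all $S$-budget algorithms, which your reduction inherits only if the switching-cost lower bound is available in that quantifier order (it is, since the standard hard instance is a fixed randomized oblivious adversary, but this should be said).

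The genuine gap is in the high-probability lower bound, specifically the $\tfrac{T}{S}\sqrt{\log\tfrac{1}{\delta}}$ term. Your direct construction uses per-iteration i.i.d.\ Bernoulli losses and argues that on a no-switch block of length $T/S$ the loss gap ``is sub-Gaussian with standard deviation $\Theta(\sqrt{T/S})$, so with probability $\geq \delta$ it exceeds its mean by $\Omega(T\sqrt{\log(1/\delta)}/S)$.'' This conflates the standard deviation with the block length: a sum of $T/S$ bounded i.i.d.\ increments deviates at the $\delta$-quantile by only $O(\sqrt{(T/S)\log(1/\delta)})$, not $(T/S)\sqrt{\log(1/\delta)}$; more globally, under per-round i.i.d.\ bounded losses Hoeffding caps any probability-$\delta$ deviation at $O(\sqrt{T\log(1/\delta)})$, which is far below $T\sqrt{\log(1/\delta)}/S$ once $S \ll \sqrt{T}$ (and the block you condition on is itself chosen adaptively, which your sketch does not handle). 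The repair is the paper's: reuse the \emph{blocked} adversary from the expectation lower bound, with losses constant on epochs of length $T\log n/S^2$, so that the per-epoch variance is inflated and the relevant sums have $\delta$-quantile deviations of order $\tfrac{T}{S}\sqrt{\log n \log\tfrac{1}{\delta}}$ --- i.e., mini-batch the tail argument of Proposition~\ref{prop:regret-tails} (Appendix~\ref{app:subsec:lb-regret}), which is precisely what the paper does. Relatedly, you cannot combine the $\log n$ part and the $\sqrt{\log(1/\delta)}$ part ``via a union bound'' across two different adversaries: the statement requires one oblivious adversary and one event of probability at least $\delta$ on which the combined bound holds, which the single batched construction delivers through an argument of the form $\Prob(A \cap B) \geq \Prob(A) - \Prob(B^c)$ as in the proof of Proposition~\ref{prop:regret-tails}.
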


Note that the extra $\log T$ factor for expected regret in both the above theorems is from na\"ively integrating out the tail of the h.p. algorithms. Removing this log factor is an open question.
\par We first present the proof for the high-switching regime, since it is direct given the machinery we developed above in Section~\ref{sec:hp}.
\\ \begin{proof}[Proof of Theorem~\ref{thm:budget-fullinfo-highswitching}]
We first prove the h.p. guarantee. Mini-batch the $T$ iterations into $T' = O(T/\logdel)$ epochs each of uniform size $O(\logdel)$. Applying the h.p. algorithm from Theorem~\ref{thm:hp-sc} to this batched game thus yields an algorithm $\calA$ that with probability at least $1 - \delta$, incurs at most $O(\logdel \cdot \sqrt{T' \log n \logdel}) = O(\sqrt{T \log n} \logdel)$ regret and makes at most $O(\sqrt{T' \log n \logdel}) = O(\sqrt{T \log n})$ switches. Define now the algorithm $\calA'$ that runs $\calA$ but if it ever  exhausts $S$ switches, then it just stays on the same action for the rest of the game. By construction, $\calA'$ deterministically never uses more than $S$ switches, and is thus an $S$-budget algorithm. Moreover, with probability at least $1 - \delta$, $\calA$ (with an appropriate choice of constant in the above mini-batching argument) also uses no more than $S$ switches, in which event the actions of $\calA$ and $\calA'$ are identical, and so in particular $\Regret_T(\calA') = O(\sqrt{T\log n} \logdel)$.
\par The expectation guarantee is proved by invoking this h.p. guarantee with $\delta = \tfrac{1}{T}$, expanding $\E[\Regret]$ by conditioning on the (regret) failure event of this algorithm, and using the trivial observation that regret is always upper bounded by $T$. The expectation lower bound follows immediately from the classical $\Omega(\sqrt{T \log n})$ lower bound for PFE \textit{without} constraints on switching~\citep{CB-expert}. The h.p. lower bound follows by analyzing the tails for the same adversary and is deferred to Appendix~\ref{app:subsec:lb-regret}.

\end{proof}

We now shift our attention to the low-switching regime. Both the upper and lower bounds will be proved using mini-batching reductions. The upper bounds (algorithms) are straightforward so we present them first.
\\ \begin{proof}[Proof of achievability in Theorem~\ref{thm:budget-fullinfo-lowswitching}] WLOG we may restrict to $S = \omega(\sqrt{\log n})$ since otherwise the statement is trivially satisfied. For the h.p. algorithm, minibatch into $T' := \tfrac{S^2}{\log n}$ epochs (so that $S = \sqrt{T' \log n}$) and apply the h.p. algorithm from Theorem~\ref{thm:budget-fullinfo-highswitching}. The expectation guarantee follows from an identical argument as in Theorem~\ref{thm:budget-fullinfo-highswitching}.
\end{proof}
\textbf{Proof sketch of lower bound in Theorem~\ref{thm:budget-fullinfo-lowswitching}.} (Full details in Appendix~\ref{app:fullinfo-lowerbound}.) The idea is essentially a batched version of~\cite{CB-expert}'s classical lower bound for unconstrained PFE. So let us first recall that argument. That construction draws the loss of each expert in each iteration i.i.d. from $\{0,1\}$ uniformly at random. A simple argument shows any algorithm has expected loss $\tfrac{T}{2}$, but that the best expert has loss concentrating around $\tfrac{T}{2} - \Theta(\sqrt{T \log n})$ since (after translation by $\tfrac{T}{2}$) it is the minimum of $n$ i.i.d. simple random walks of length $T$. Therefore they conclude $\E[\Regret] = \Omega(\sqrt{T \log n})$.
\par However, that adversarial construction does not capitalize on the algorithm's limited switching budget in our setting. We accomplish this by increasing the variance of the random walk in a certain way that a switch-limited algorithm cannot benefit from. Specifically, proceed again by batching the $T$ iterations into roughly $E \approx \tfrac{S^2}{\log n}$ epochs, each of uniform length $\tfrac{T}{E}$. For each epoch and each expert, draw a single $\Ber(\half)$ and assign it as that expert's loss for each iteration in that epoch.
\par Informally, the optimal algorithm still incurs expected loss of half for each iteration in epochs it does not switch in; and loss of $0$ for each epoch it switches in. Critically, however, the algorithm can switch at most $S$ times, which is small compared to the number of epochs $E$. Thus any algorithm incurs expected loss roughly $\approx \tfrac{T}{E}\left(\tfrac{E}{2} - S\right) = \tfrac{T}{2} - \Theta\left(\tfrac{T\log n}{S}\right)$. Moreover, the best expert now has loss concentrating around $\frac{T}{E}\left(\tfrac{E}{2} - \Theta(\sqrt{E\log n})\right) = \tfrac{T}{2} - \Theta\left(\tfrac{T\log n}{S} \right)$.
\par Therefore, after appropriately choosing constants in the epoch size, we can then conclude that the regret of any $S$-budget algorithm is $\Omega\left(\tfrac{T\log n}{S}\right)$ in expectation, with sub-Gaussian tails of size $\Omega(\tfrac{T(\log n + \sqrt{\log 1/\delta})}{S} )$. Full details deferred to Appendix~\ref{app:fullinfo-lowerbound}.


\section{Switching-budget MAB}\label{sec:budgets-mab}
In this section, we characterize the complexity of switching-budget MAB, for all ranges of the switching budget $S \in [T]$. Interestingly, there is no phase transition here: the regret smoothly decays as a function of the switching budget. 
\begin{theorem} \label{thm:mab}
The optimal rate for $S$-switching-budget MAB against an oblivious adversary is $\min\left(\tilde{\Theta}\left(\frac{T\sqrt{n}}{\sqrt{S}}\right), T \right)$. Specifically,
\begin{itemize}
\item \emph{Expectation upper bound:} There exists an efficient $S$-budget algorithm achieving $O\left(
\frac{T\sqrt{n}}{\sqrt{S}}
\right)$ expected regret.
\item \emph{H.p. upper bound:} There exists an efficient $S$-budget algorithm that for all $\delta \in (0,1)$ achieves $O\left(\frac{T\sqrt{n}}{\sqrt{S}}\frac{\log\left(n/\delta\right)}{\sqrt{\log n}}\right)$ regret with probability at least $1 - \delta$.
\item \emph{Expectation lower bound:} There exists an oblivious adversary that forces any $S$-budget algorithm to incur expected regret at least $\min\left(T, \Omega\left(\frac{T\sqrt{n}}{\sqrt{S}\log^{3/2}T} \right)\right)$.
\end{itemize}
\end{theorem}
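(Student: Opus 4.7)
\medskip

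\textbf{Proof proposal.} The plan is to handle both the upper and lower bounds via mini-batching reductions to the unconstrained and switching-cost MAB settings, respectively. The core idea throughout is that grouping the $T$ iterations into $S$ contiguous epochs of uniform length $T/S$, and forcing an algorithm to commit to a single action per epoch, naturally enforces the budget of $S$ switches while incurring a loss amplification factor of $T/S$ per ``meta-iteration.''

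For the expectation upper bound, I would first mini-batch the $T$ iterations into $S$ epochs of length $T/S$ each, and then run an unconstrained MAB algorithm (e.g., INF of \citet{AudBub10}) on the batched game in which each meta-iteration consists of playing the same arm for an entire epoch and observing the average loss. Since such an algorithm makes at most $S$ arm pulls (one per meta-iteration), the resulting switching budget is at most $S$. By the known $O(\sqrt{Sn})$ minimax bound applied at the batched level, and scaling by the $T/S$ loss amplification, the expected regret is
\[
\frac{T}{S}\cdot O(\sqrt{Sn}) \;=\; O\!\left(\frac{T\sqrt{n}}{\sqrt{S}}\right).
\]
For the h.p.\ upper bound, I would run the same mini-batching reduction but use the high-probability variant of INF (cited in Table~\ref{table:mab}), whose regret is $O(\sqrt{Sn}\cdot \log(n/\delta)/\sqrt{\log n})$ with probability $\geq 1-\delta$; the claimed rate then follows by the same $T/S$ scaling. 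Both upper bounds are capped at $T$ since regret is trivially bounded by $T$ (e.g., when $S$ is so small the bound is vacuous).

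For the lower bound, the plan is to reduce to the switching-\emph{cost} MAB lower bound of \citet{DekDinKorPer}, which states $\mathbb{E}[\Regret + c\cdot\Switches] = \Omega(c^{1/3}T^{2/3}n^{1/3}/\log T)$ against an appropriate oblivious adversary. The reduction is: any $S$-budget algorithm is a valid $c$-switching-cost algorithm whose switching-cost-regret is at most $\Regret + cS$. Chaining with the DDKP bound gives
\[
\E[\Regret] \;\geq\; \Omega\!\left(\frac{c^{1/3}T^{2/3}n^{1/3}}{\log T}\right) - cS,
\]
and I would then optimize the free parameter $c$. Balancing the two terms by setting $cS = \tfrac{1}{2}\cdot \Omega(c^{1/3}T^{2/3}n^{1/3}/\log T)$ yields $c = \Theta\bigl(T\sqrt{n}/(S^{3/2}\log^{3/2}T)\bigr)$, and plugging back in gives exactly the desired $\Omega(T\sqrt{n}/(\sqrt{S}\log^{3/2} T))$ lower bound. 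The $\min(T, \cdot)$ cap is automatic since regret is bounded by $T$.

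The main obstacle here is a minor technical one: one has to verify that the parameter $c$ chosen above lies in the regime where the DDKP lower bound actually applies (roughly, $c \geq 1$ and $T$ large enough that their construction produces nontrivial regret), and to handle the degenerate edge cases $S = \Theta(T)$ and $S = O(1)$ separately (where the $\min(T,\cdot)$ bound is trivial). Beyond this, the extra $\log^{3/2} T$ gap between the upper and lower bounds arises purely from the logarithmic slack in DDKP's switching-cost lower bound and the balancing step, and closing it would require a direct (non-reduction) lower bound proof, which I would not attempt here.
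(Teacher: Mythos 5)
Your proposal follows the paper's own proof essentially verbatim: the upper bounds are obtained by exactly the same mini-batching into $S$ epochs of length $T/S$ with the (high-probability) INF algorithm of \citep{AudBub10}, and the lower bound is the paper's ``Proof 1'', i.e.\ the reduction $R(T,n,c) \le R(T,n,S) + cS$ chained with Theorem~3 of \citep{DekDinKorPer}. The only cosmetic difference is that you balance an explicit $c = \Theta\bigl(T\sqrt{n}/(S^{3/2}\log^{3/2}T)\bigr)$, whereas the paper makes the self-referential choice $c := R(T,n,S)/(2S)$; the two give the same $\Omega\bigl(T\sqrt{n}/(\sqrt{S}\log^{3/2}T)\bigr)$ conclusion.

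The one place you have the ``minor technical obstacle'' slightly wrong is the applicability condition for the DDKP bound. As used in the paper, it requires an \emph{upper} bound on the cost, $c \in (0, T/\max(n,6))$, not ``$c \ge 1$'', and the regime where your explicit $c$ violates it is when $S$ is small relative to $n$ (roughly $S \lesssim n/\log T$), which is not covered by your listed edge cases $S = \Theta(T)$ and $S = O(1)$: for instance $S = \sqrt{T}$, $n = T$ gives $c = \Theta(T^{3/4}/\log^{3/2}T) \gg T/n$, the reduction yields nothing, and yet the claimed bound there is $\min(T,\cdot) = T$, which must be proved separately. The paper handles this by first restricting WLOG to $S > \max(n/2,3)$: when $S \le n/2$, a bandit-feedback algorithm cannot even pull every arm, so against the adversary that hides a single zero-loss arm uniformly at random it suffers expected regret at least $T/2$; when $S \le 3$ one appeals to monotonicity of $R(T,n,\cdot)$ in $S$. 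After this restriction either choice of $c$ lies in the valid range (the paper's choice makes the check immediate, since $c = R(T,n,S)/(2S) \le T/(2S) < T/\max(n,6)$), and the rest of your argument goes through as written.
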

The upper bound proofs are just standard mini-batching arguments (see e.g.~\citep{AroDekTew12}).
\\ \begin{proof}[Proof of upper bounds in Theorem~\ref{thm:mab}]
Mini-batch the $T$ iterations into $S$ epochs of $\tfrac T S$ consecutive iterations. After a rescaling of the epoch losses by $\tfrac S T$, this becomes an unconstrained MAB problem. Therefore applying the results in~\citep{AudBub10} gives the desired expected and h.p. guarantees on regret.
\end{proof}

The proof of the lower bound is more involved. We present two ways of proving this result, both of which appeal to results developed in the elegant work of~\citep{DekDinKorPer}, which gave the first tight (up to a logarithmic factor in $T$) lower bound for switching-cost MAB.
\par The first proof is elementary and quick, but does not give an explicit adversarial construction: we prove the desired switching-budget MAB lower bound via a reduction to the switching-cost MAB lower bound of~\citep{DekDinKorPer}. This approach is inspired by the proof of Theorem 4 in~\citep{DekDinKorPer}, which uses a similar type of argument. However their reduction obtains the wrong dependence on the number of actions $n$ and also loses track of polylogarithmic factors. In Subsection~\ref{subsec:mab-lb-reduction}, we give a more careful reduction that fixes these issues.
\par The second proof is significantly more complicated, but direct and also gives an explicit adversarial construction: we use a modification of the multi-scale random walk construction developed in~\citep{DekDinKorPer}. Specifically, we show that the constant gap $\eps$ between the best action and all other actions can be enlarged to roughly 
$\sqrt{\tfrac{n}{S}}$
while ensuring that no $S$-budget algorithm can information-theoretically distinguish the best action. Informally, this implies that any $S$-budget algorithm incurs expected regret of order roughly 
$T\eps = \tfrac{T\sqrt{n}}{\sqrt{S}}$
as desired. We present this proof since it gives an explicit adversarial construction, whereas the first proof does not; however since the analysis is essentially identical to that in~\citep{DekDinKorPer}, the proof is deferred to Appendix~\ref{app:mab-lowerbound-construction}. 

\subsection{Proof 1 of lower bound in Theorem~\ref{thm:mab}: via reduction to switching-cost MAB}\label{subsec:mab-lb-reduction}
Denote the minimax rates for the switching-budget and switching-cost settings, respectively, by
\begin{align*}
R(T,n,S) &:= \min_{S\text{-budget alg}} \max_{\text{oblivious adversary}} \E[\Regret] \\
R(T,n,c) &:= \min_{\text{alg}} \max_{\text{oblivious adversary}} \E[\Regret \;+\; c \cdot \Switches]
\end{align*}
\begin{proof}[Proof of lower bound in Theorem~\ref{thm:mab}]
Observe we may WLOG restrict to $S > \max\left(\tfrac{n}{2}, 3\right)$. This is sufficient since when $S \leq \tfrac{n}{2}$, the algorithm cannot try all arms and thus cannot achieve sublinear regret\footnote{The following construction makes this formal. The adversary selects one good arm $i^*$ uniformly at random, and then defines losses $\ell_t(i) = 1(i = i^*)$. Since the algorithm receives only bandit feedback and can play at most $\tfrac{n}{2}$ actions, thus clearly with probability at least $\half$ it never plays the best action, therefore $\E[\Regret] \geq \tfrac{T}{2}$.}. And when $S \leq 3$, then by monotonicity of $R$ in its last argument, $R(T,n,S) \geq R(T,n,4)$ so the desired bound follows for free up to a constant factor of at most $\sqrt{4} = 2$.
\par Observe that for any $S$, we can apply an optimal $S$-budget algorithm (i.e. achieving $\E[\Regret] = R(T,n,S)$), to achieve an expected cost of at most $R(T,n,S) + cS$ for the $c$-switching-cost problem. Said more concisely, this implies that for any $c > 0$,
\begin{align}
R(T,n,c) \leq 
\min_{S \in [T]} \Big[R(T,n,S) + cS \Big] 
\label{eq:duality-easy-direction}
\end{align}
Now Theorem 3 of~\citep{DekDinKorPer} shows that for any $c \in \left(0, \frac{T}{\max(n, 6)}\right)$,
\[
R(T,n,c)
\geq \frac{T^{2/3}n^{1/3}c^{1/3}}{50 \log T}
\]
So fix any $S \in [T]$ such that $S > \max\left(\tfrac{n}{2}, 3\right)$, and define $c := \tfrac{R(T,n,S)}{2S}$. Then $c \in \left(0, \tfrac{T}{\max(n, 6)}\right)$ and so combining the above two displays yields 
\begin{align*}
R(T,n,S)
\geq \frac{T^{2/3}n^{1/3}c^{1/3}}{100 \log T}
\end{align*}
Use the definition of $c$ and then invoke the above display to conclude
\begin{align*}
\frac{1}{\sqrt{2}\left(100 \log T\right)^{3/2}} \left(\frac{T\sqrt{n}}{\sqrt{S}}\right)
&= 
\frac{1}{\sqrt{2}\left(100 \log T\right)^{3/2}} \left(
\frac{T\sqrt{n}}{\sqrt{\left(\frac{R(T,n,S)}{2c}\right)}}\right)
\leq \frac{R(T,n,S)^{3/2}}{R(T,n,S)^{1/2}}
= R(T,n,S)
\end{align*}
\end{proof}


\section{Duality between switching-cost and switching-budget settings}\label{sec:duality}
One direction of this duality is simple and folklore: as mentioned in the discussion around~\eqref{eq:duality-easy-direction} above,
any $S$-switching-budget algorithm with expected regret upper bounded by $R(T,n,S)$ clearly yields a $c$-switching-cost algorithm with expected cost at most $R(T,n,c) \leq R(T,n,S) + cS$.
Note that plugging into~\eqref{eq:duality-easy-direction} the bounds on $R(T,n,S)$ proved in this paper, immediately recovers the corresponding known upper bounds on $R(T,n,c)$ up to polylogarithmic factors in $T$, for both PFE and MAB. Indeed, setting $S_{\pfe}(c) = \tilde{\Theta}(\sqrt{\tfrac{T\log n}{c}})$ yields $R_{\pfe}(T,n,c) \leq \tilde{O}(\sqrt{c T \log n} )$; and setting $S_{\mab}(c) = \tilde{\Theta}(\tfrac{T^{2/3}n^{1/3}}{c^{2/3}})$ yields $R_{\mab}(T,n,c) \leq \tilde{O}(\tfrac{T\sqrt{n}}{S^{3/2}} )$.
\par However, the other direction of this duality is not obvious, since a $c$-switching cost algorithm with expected cost at most $R(T,n,c)$ might be unusable for the switching-budget setting. Indeed when black-boxed, this algorithm yields only the upper bound of $c^{-1}R(T,n,c)$ switches \textit{in expectation}, as opposed to the hard-cap requirement needed for the switching-budget setting. Of course, the mini-batching MAB algorithm from~\citep{AroDekTew12} by construction achieves this hard-cap requirement deterministically, so this converse duality direction is easy for MAB. But for PFE this direction is not clear, since existing algorithm's upper tails on switching are too large to be applicable for the switching-budget setting (see discussions in Section~\ref{subsec:previous-work} and Appendix~\ref{app:lb-kv-sd}). 
\par One way of interpreting the results from Sections~\ref{sec:hp}, is that (almost) nothing is lost by requiring $c$-switching cost algorithms to have h.p. bounds on switching. Such algorithms can then certainly be applied to the switching-budget problem, by setting the switching-failure probability $\delta \approx \tfrac{1}{T}$ and recalling that regret is always bounded by $T$. This yields the desired other direction of the duality (up to a single logarithmic factor in $T$).
\par This discussion is summarized formally by the proceeding remark.

\begin{remark}\label{rem:duality}
The complexity of the $c$-switching-cost and $S$-switching budget setting are equivalent (in terms of minimax rates being equal up to a polylogarithmic factor in $T$) when:
\begin{itemize}
\item PFE\footnote{When $S = \tilde{\Omega}(\log n)$ or equivalently $c = \tilde{O}\left(\tfrac{T}{\log n}\right)$, since otherwise the minimax rate is uninterestingly $\tilde{\Omega}(T)$.}: $S = \tilde{\Theta}\left(\tfrac{\sqrt{T\log n}}{\max(c, 1)}\right)$ or equivalently $c = \tilde{\Theta}\left(\max\left(\tfrac{T\log n}{S^2}, 1\right)\right)$.
\item MAB\footnote{When $S = \tilde{\Omega}(n)$, or equivalently $c \leq \tilde{O}\left(\tfrac{T}{n}\right)$, since otherwise the minimax rate is uninterestingly $\tilde{\Omega}(T)$.}: $S = \tilde{\Theta}\left(\tfrac{T^{2/3}N^{1/3}}{c^{2/3}}\right)$ or equivalently $c = \tilde{\Theta}\left(\tfrac{T\sqrt{N}}{S^{3/2}}\right)$.
\end{itemize}
\end{remark}

One can visualize this duality as follows. Consider (for any $c \geq \frac{1}{T}$), the unique point $P = (P_x,P_y)$ at the intersection of the line $y = cx$ with the complexity profile in Figure~\ref{fig:plots} (Figure~\ref{fig:pfe} for PFE or~\ref{fig:mab} for MAB). Then, up to a small polylogarithmic factor in $T$, $P_y$ is equal to the minimax rate for both the $c$-switching-cost setting and the $S := P_x$-switching-budget setting; and moreover $c$ and $S$ are related by the duality formulas in Remark~\ref{rem:duality} above.
\section{Conclusions}

In this work, we studied online learning over a finite action set, in the presence of a budget for switching. While this problem is closely related to the switching cost setting, handling switching budgets requires obtaining high probability bounds on the number of switches. We presented a general approach for converting FPL-type algorithms into algorithms with high probability bounds on switches as well as regret. Using this result, we showed tradeoffs between the regret and the switching budget that are tight up to logarithmic factors.

We conclude with some open questions. The most natural open question is to close the polylogarithmic gaps between the upper and lower bounds for the regret in the presence of switching budgets, for both the experts and the bandit setting. Another natural question is to give a uniform high probability algorithm, i.e. a single algorithm for PFE that yields bounds similar to Theorem~\ref{thm:hp-sc} simultaneously for all $\delta$.

\paragraph*{Acknowledgements.} We are thankful to the three anonymous COLT 2018 reviewers for their helpful comments. We are indebted to Elad Hazan for numerous fruitful discussions and for suggesting the switching-budget setting to us. We also thank Yoram Singer, Tomer Koren, David Martins, Vianney Perchet, and Jonathan Weed for helpful discussions.
\par Part of this work was done while JA was visiting the Simons Institute for the Theory of Computing, which was partially supported by the DIMACS/Simons Collaboration on Bridging Continuous and Discrete Optimization through NSF grant \#CCF-1740425. JA is also supported by NSF Graduate Research Fellowship 1122374.

\newpage
\addcontentsline{toc}{section}{References}
\bibliography{bibSwitching}

\begin{thebibliography}{36}
\providecommand{\natexlab}[1]{#1}
\providecommand{\url}[1]{\texttt{#1}}
\expandafter\ifx\csname urlstyle\endcsname\relax
  \providecommand{\doi}[1]{doi: #1}\else
  \providecommand{\doi}{doi: \begingroup \urlstyle{rm}\Url}\fi

\bibitem[Abernethy et~al.(2009)Abernethy, Hazan, and Rakhlin]{AbeHazRak09}
Jacob~D Abernethy, Elad Hazan, and Alexander Rakhlin.
\newblock Competing in the dark: An efficient algorithm for bandit linear
  optimization.
\newblock In \emph{COLT}, 2009.

\bibitem[Adler and Taylor(2009)]{GaussianIneqs}
Robert~J Adler and Jonathan~E Taylor.
\newblock \emph{Random fields and geometry}.
\newblock Springer Science \& Business Media, 2009.

\bibitem[Arora et~al.(2012)Arora, Dekel, and Tewari]{AroDekTew12}
Raman Arora, Ofer Dekel, and Ambuj Tewari.
\newblock Online bandit learning against an adaptive adversary: from regret to
  policy regret.
\newblock \emph{ICML}, 2012.

\bibitem[Audibert and Bubeck(2010)]{AudBub10}
Jean-Yves Audibert and S{\'e}bastien Bubeck.
\newblock Regret bounds and minimax policies under partial monitoring.
\newblock \emph{Journal of Machine Learning Research}, 11\penalty0
  (Oct):\penalty0 2785--2836, 2010.

\bibitem[Audibert et~al.(2013)Audibert, Bubeck, and Lugosi]{AudBubLug13}
Jean-Yves Audibert, S{\'e}bastien Bubeck, and G{\'a}bor Lugosi.
\newblock Regret in online combinatorial optimization.
\newblock \emph{Mathematics of Operations Research}, 39\penalty0 (1):\penalty0
  31--45, 2013.

\bibitem[Auer et~al.(2002)Auer, Cesa-Bianchi, Freund, and Schapire]{Auer02}
Peter Auer, Nicolo Cesa-Bianchi, Yoav Freund, and Robert~E Schapire.
\newblock The nonstochastic multiarmed bandit problem.
\newblock \emph{SIAM Journal on Computing}, 32\penalty0 (1):\penalty0 48--77,
  2002.

\bibitem[Awerbuch and Kleinberg(2008)]{AweKle08}
Baruch Awerbuch and Robert Kleinberg.
\newblock Online linear optimization and adaptive routing.
\newblock \emph{Journal of Computer and System Sciences}, 74\penalty0
  (1):\penalty0 97--114, 2008.

\bibitem[Boucheron et~al.(2013)Boucheron, Lugosi, and Massart]{Concentration}
St{\'e}phane Boucheron, G{\'a}bor Lugosi, and Pascal Massart.
\newblock \emph{Concentration inequalities: A nonasymptotic theory of
  independence}.
\newblock Oxford university press, 2013.

\bibitem[Bubeck et~al.(2012)Bubeck, Cesa-Bianchi, et~al.]{Bubecksurvey}
S{\'e}bastien Bubeck, Nicolo Cesa-Bianchi, et~al.
\newblock Regret analysis of stochastic and nonstochastic multi-armed bandit
  problems.
\newblock \emph{Foundations and Trends{\textregistered} in Machine Learning},
  5\penalty0 (1):\penalty0 1--122, 2012.

\bibitem[Cesa-Bianchi and Lugosi(2006)]{Book-CB-Lugosi}
Nicolo Cesa-Bianchi and G{\'a}bor Lugosi.
\newblock \emph{Prediction, learning, and games}.
\newblock Cambridge university press, 2006.

\bibitem[Cesa-Bianchi and Lugosi(2012)]{combinatorial-bandits}
Nicolo Cesa-Bianchi and G{\'a}bor Lugosi.
\newblock Combinatorial bandits.
\newblock \emph{Journal of Computer and System Sciences}, 78\penalty0
  (5):\penalty0 1404--1422, 2012.

\bibitem[Cesa-Bianchi et~al.(1997)Cesa-Bianchi, Freund, Haussler, Helmbold,
  Schapire, and Warmuth]{CB-expert}
Nicolo Cesa-Bianchi, Yoav Freund, David Haussler, David~P Helmbold, Robert~E
  Schapire, and Manfred~K Warmuth.
\newblock How to use expert advice.
\newblock \emph{Journal of the ACM (JACM)}, 44\penalty0 (3):\penalty0 427--485,
  1997.

\bibitem[Cesa-Bianchi et~al.(2005)Cesa-Bianchi, Lugosi, and
  Stoltz]{lazy-label-efficient-forecaster}
Nicolo Cesa-Bianchi, G{\'a}bor Lugosi, and Gilles Stoltz.
\newblock Minimizing regret with label efficient prediction.
\newblock \emph{IEEE Transactions on Information Theory}, 51\penalty0
  (6):\penalty0 2152--2162, 2005.

\bibitem[Cesa-Bianchi et~al.(2013)Cesa-Bianchi, Dekel, and Shamir]{CBDekSha13}
Nicolo Cesa-Bianchi, Ofer Dekel, and Ohad Shamir.
\newblock Online learning with switching costs and other adaptive adversaries.
\newblock In \emph{Advances in Neural Information Processing Systems}, pages
  1160--1168, 2013.

\bibitem[Dekel et~al.(2014)Dekel, Ding, Koren, and Peres]{DekDinKorPer}
Ofer Dekel, Jian Ding, Tomer Koren, and Yuval Peres.
\newblock Bandits with switching costs: T 2/3 regret.
\newblock In \emph{Proceedings of the forty-sixth annual ACM symposium on
  Theory of computing}, pages 459--467. ACM, 2014.

\bibitem[Devroye et~al.(2013)Devroye, Lugosi, and Neu]{DevLugNeuCOLT}
Luc Devroye, G{\'a}bor Lugosi, and Gergely Neu.
\newblock Prediction by random-walk perturbation.
\newblock In \emph{Conference on Learning Theory}, pages 460--473, 2013.

\bibitem[Devroye et~al.(2015)Devroye, Lugosi, and Neu]{DevLugNeu15}
Luc Devroye, G{\'a}bor Lugosi, and Gergely Neu.
\newblock Random-walk perturbations for online combinatorial optimization.
\newblock \emph{IEEE Transactions on Information Theory}, 61\penalty0
  (7):\penalty0 4099--4106, 2015.

\bibitem[Even-Dar et~al.(2009)Even-Dar, Kakade, and Mansour]{EveKakMan09}
Eyal Even-Dar, Sham~M Kakade, and Yishay Mansour.
\newblock Online markov decision processes.
\newblock \emph{Mathematics of Operations Research}, 34\penalty0 (3):\penalty0
  726--736, 2009.

\bibitem[Feamster et~al.(2013)Feamster, Rexford, and Zegura]{SDN}
Nick Feamster, Jennifer Rexford, and Ellen Zegura.
\newblock The road to sdn.
\newblock \emph{Queue}, 11\penalty0 (12):\penalty0 20, 2013.

\bibitem[Freund and Schapire(1997)]{FreSch97}
Yoav Freund and Robert~E Schapire.
\newblock A decision-theoretic generalization of on-line learning and an
  application to boosting.
\newblock \emph{Journal of computer and system sciences}, 55\penalty0
  (1):\penalty0 119--139, 1997.

\bibitem[Geulen et~al.(2010)Geulen, V{\"o}cking, and Winkler]{Dartboard}
Sascha Geulen, Berthold V{\"o}cking, and Melanie Winkler.
\newblock Regret minimization for online buffering problems using the weighted
  majority algorithm.
\newblock In \emph{COLT}, pages 132--143, 2010.

\bibitem[Gyorgy and Neu(2014)]{GyoNeu14}
Andras Gyorgy and Gergely Neu.
\newblock Near-optimal rates for limited-delay universal lossy source coding.
\newblock \emph{IEEE Transactions on Information Theory}, 60\penalty0
  (5):\penalty0 2823--2834, 2014.

\bibitem[Helmbold and Schapire(1997)]{prune-decision-tree}
David~P Helmbold and Robert~E Schapire.
\newblock Predicting nearly as well as the best pruning of a decision tree.
\newblock \emph{Machine Learning}, 27\penalty0 (1):\penalty0 51--68, 1997.

\bibitem[Helmbold and Warmuth(2009)]{permelearn}
David~P Helmbold and Manfred~K Warmuth.
\newblock Learning permutations with exponential weights.
\newblock \emph{Journal of Machine Learning Research}, 10\penalty0
  (Jul):\penalty0 1705--1736, 2009.

\bibitem[Kalai and Vempala(2005)]{KalVem}
Adam Kalai and Santosh Vempala.
\newblock Efficient algorithms for online decision problems.
\newblock \emph{Journal of Computer and System Sciences}, 71\penalty0
  (3):\penalty0 291--307, 2005.

\bibitem[Knuth(1985)]{adaptive-huffman-coding}
Donald~E Knuth.
\newblock Dynamic huffman coding.
\newblock \emph{Journal of algorithms}, 6\penalty0 (2):\penalty0 163--180,
  1985.

\bibitem[Koolen et~al.(2010)Koolen, Warmuth, and
  Kivinen]{hedging-structured-concepts}
Wouter~M Koolen, Manfred~K Warmuth, and Jyrki Kivinen.
\newblock Hedging structured concepts.
\newblock In \emph{COLT}, pages 93--105. Citeseer, 2010.

\bibitem[Littlestone and Warmuth(1994)]{Lit94}
Nick Littlestone and Manfred~K Warmuth.
\newblock The weighted majority algorithm.
\newblock \emph{Information and computation}, 108\penalty0 (2):\penalty0
  212--261, 1994.

\bibitem[Neu and Bart{\'o}k(2013)]{NeuBar13}
Gergely Neu and G{\'a}bor Bart{\'o}k.
\newblock An efficient algorithm for learning with semi-bandit feedback.
\newblock In \emph{International Conference on Algorithmic Learning Theory},
  pages 234--248. Springer, 2013.

\bibitem[Neu et~al.(2010)Neu, Antos, Gy{\"o}rgy, and Szepesv{\'a}ri]{Neu10}
Gergely Neu, Andras Antos, Andr{\'a}s Gy{\"o}rgy, and Csaba Szepesv{\'a}ri.
\newblock Online markov decision processes under bandit feedback.
\newblock In \emph{Advances in Neural Information Processing Systems}, pages
  1804--1812, 2010.

\bibitem[Shalev-Shwartz et~al.(2012)]{Sha12}
Shai Shalev-Shwartz et~al.
\newblock Online learning and online convex optimization.
\newblock \emph{Foundations and Trends{\textregistered} in Machine Learning},
  4\penalty0 (2):\penalty0 107--194, 2012.

\bibitem[Sleator and Tarjan(1985{\natexlab{a}})]{list-update-problem}
Daniel~D Sleator and Robert~E Tarjan.
\newblock Amortized efficiency of list update and paging rules.
\newblock \emph{Communications of the ACM}, 28\penalty0 (2):\penalty0 202--208,
  1985{\natexlab{a}}.

\bibitem[Sleator and Tarjan(1985{\natexlab{b}})]{tree-update-problem}
Daniel~Dominic Sleator and Robert~Endre Tarjan.
\newblock Self-adjusting binary search trees.
\newblock \emph{Journal of the ACM (JACM)}, 32\penalty0 (3):\penalty0 652--686,
  1985{\natexlab{b}}.

\bibitem[Takimoto and Warmuth(2002)]{prune-decision-graph}
Eiji Takimoto and Manfred~K Warmuth.
\newblock Predicting nearly as well as the best pruning of a planar decision
  graph.
\newblock \emph{Theoretical Computer Science}, 288\penalty0 (2):\penalty0
  217--235, 2002.

\bibitem[Takimoto and Warmuth(2003)]{TakWar03}
Eiji Takimoto and Manfred~K Warmuth.
\newblock Path kernels and multiplicative updates.
\newblock \emph{Journal of Machine Learning Research}, 4\penalty0
  (Oct):\penalty0 773--818, 2003.

\bibitem[Yu et~al.(2009)Yu, Mannor, and Shimkin]{mdp-switching}
Jia~Yuan Yu, Shie Mannor, and Nahum Shimkin.
\newblock Markov decision processes with arbitrary reward processes.
\newblock \emph{Mathematics of Operations Research}, 34\penalty0 (3):\penalty0
  737--757, 2009.

\end{thebibliography}

\newpage
\appendix

\section{Adaptive adversaries in the switching-budget setting}\label{app:adaptive}
In this section, we make formal the notion that \textit{adaptive} adversaries are too powerful to compete against in the switching-budget setting. This is why this paper focuses on \textit{oblivious} adversaries.
\par The lower-bound construction is quite simple, and is identical to the folklore construction for adaptive adversaries in the switching-cost setting.
\begin{theorem}
There is a deterministic adaptive adversary for PFE with $n=2$ actions, that forces any $S$-switching-budget algorithm $\calA$ to incur regret at least
\[
\Regret(\calA) \geq \frac{T-1}{2} - S
\]
In particular, for any $S = o(T)$, the minimax rate for regret is $\Theta(T)$.
\end{theorem}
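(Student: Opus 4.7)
The plan is to use the standard adaptive-adversary construction for two actions, namely the adversary that ``punishes the previous action'': set $\ell_1 \equiv 0$ and for each $t \geq 2$, define $\ell_t(i) := \mathbf{1}(i = i_{t-1})$. This is legal for an adaptive adversary since $\ell_t$ depends only on the algorithm's past actions $i_1,\dots,i_{t-1}$, and it is deterministic once the algorithm's history is fixed. Note also that the construction works identically whether $\calA$ is deterministic or randomized, since the argument proceeds pointwise in the algorithm's random bits.

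First I would lower bound the algorithm's cumulative loss. By construction, $\ell_t(i_t) = \mathbf{1}(i_t = i_{t-1})$ for each $t \geq 2$, so the algorithm incurs loss $1$ in exactly the iterations where it does \emph{not} switch. Hence $\sum_{t=1}^T \ell_t(i_t)$ equals the number of non-switches in iterations $2,\dots,T$, which is at least $(T-1) - \Switches_T(\calA) \geq (T-1) - S$ since $\calA$ is an $S$-budget algorithm.

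Next I would upper bound the best action in hindsight. For each $j \in \{1,2\}$, the cumulative loss of action $j$ is $\sum_{t=2}^T \mathbf{1}(j = i_{t-1}) = \sum_{s=1}^{T-1} \mathbf{1}(i_s = j)$, i.e.\ the number of times the algorithm played $j$ during iterations $1,\dots,T-1$. Since $n=2$, summing these over $j$ gives exactly $T-1$, so by pigeonhole $\min_{j \in \{1,2\}} \sum_{t=1}^T \ell_t(j) \leq (T-1)/2$. Subtracting yields $\Regret(\calA) \geq (T-1) - S - (T-1)/2 = (T-1)/2 - S$ pointwise, which proves the claim. For $S = o(T)$ the lower bound is $\Omega(T)$, matching the trivial $O(T)$ upper bound.

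There is essentially no technical obstacle here; the only subtlety worth flagging is that the adversary's choice $\ell_t$ is allowed to depend on the algorithm's previous actions but not on the current action $i_t$, so one must take care that ``punishing the previous action'' is indeed legal in the adaptive model used in the paper (which it is, since the dependence is only on $i_1,\dots,i_{t-1}$).
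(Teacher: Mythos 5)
Your proposal is correct and is essentially identical to the paper's own proof in Appendix~\ref{app:adaptive}: the same ``punish the previous action'' adversary, the same lower bound of $(T-1)-S$ on the algorithm's loss via counting non-switches, and the same averaging/pigeonhole bound of $(T-1)/2$ on the best action's loss. Your added remark that the argument holds pointwise in the algorithm's randomness and that the adversary only uses $i_1,\dots,i_{t-1}$ is a fine clarification but does not change the argument.
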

\begin{proof}
Define the adaptive adversary which constructs losses as follows
\begin{align*}
\ell_{t}(i) :=
\begin{cases}
0 & t = 1 \\
\mathbf{1}\{i = i_{t-1} \} & t > 1
\end{cases}
\end{align*}
where $i_t$ is the action $\calA$ plays at iteration $t$. First, observe that these losses force $\calA$ to incur cumulative loss at least $T-S - 1$. This is because $\calA$ incurs a loss of $1$ whenever it does not switch, and a loss of $0$ whenever it does (but it can only do switch at most $S$ times). Formally:
\[
\sum_{t=1}^T \ell_t(i_t)
= \sum_{t=2}^T \mathbf{1}\{i_t = i_{t-1} \}
= (T - 1) - \sum_{t=2}^T \mathbf{1}\{i_t \neq i_{t-1} \}
\geq T - S - 1
\]
On the other hand, the best action has cumulative loss at most $\frac{T-1}{2}$ by a simple averaging argument. Plugging in the definition of regret finishes the proof.
\end{proof}
\section{Our framework needs more than just expectation bounds}
\label{app:need_properties}
Our analysis of Algorithm~\ref{alg:framework} needs specific properties of the algorithm $\calA$ that it builds on.
We next argue that our approach cannot convert an arbitrary algorithm $\calA$ achieving guarantees in expectation into one achieving them w.h.p. Indeed, consider an arbitrary algorithm $\calA$ satisfying the property that $\E[\Regret_T(\calA)]$ and $\E[\Switches_T(\calA)]$ are both $O(\sqrt{T\log n})$, and define a new algorithm $\calA_p$ by: with probability $1-p$ run $\calA$ for all $T$ iterations; otherwise with probability $p$ play action $\bot$ for $\sqrt{T \log n}/p$ iterations, and then run $\calA$ for the remainder. Here $\bot$ is a new action that incurs loss $1$ in each step. Clearly $\calA_p$ also satisfies $\E[\Regret_T(\calA_p)], \E[\Switches_T(\calA_p)] = O(\sqrt{T\log n})$. However, applying the meta-framework in Figure~\ref{alg:framework} to $\calA_p$ cannot produce an algorithm with the desired h.p. guarantees. Indeed, when $p \approx \delta/\logdel$, then with probability roughly $1 - (1 - p)^{N} \approx \delta$, we encounter the bad event for $\calA_p$ in at least one of the epochs. When this occurs, we incur regret roughly $\sqrt{T \log n}/p \approx \sqrt{T\log n}/\delta$, instead of the $\sqrt{T\log n \logdel}$ required for a h.p. guarantee.
\


\section{Proof of standard lemma in analysis of $\fpl$-type algorithms: Lemma~\ref{lem:regret-fpl-framework}}\label{app:betheleader}
In this section, we present for completeness a standard proof of Lemma~\ref{lem:regret-fpl-framework}~\citep{KalVem, Book-CB-Lugosi, DevLugNeu15}. The key step in its proof is to compare it to the Be-The-Leader algorithm ($\btl$), which is known to have negative regret.
\par Formally, $\btl$ plays at iteration $t$ the action $i_{t+1} := \argmin_{i \in [n]} \sum_{s=0}^{t} \ell_t(i)$.\footnote{Note that although $\btl$ is well-defined, it is of course not a ``legitimate'' online learning algorithm since we do not have access to the loss $\ell_t$ at iteration $t$. As such, $\btl$ is only for analysis purposes.} In other words, it plays the action that the Follow the Leader algorithm ($\ftl$) would play at iteration $t+1$. The following so-called ``Be-The-Leader'' lemma shows that $\btl$ has negative regret. It has a one line induction proof~\citep{KalVem}.
\begin{lemma}[Be-The-Leader lemma,~\citep{KalVem}]\label{lem:btl} For all $i \in [n]$,
\[
\sum_{t=0}^T \ell_t(i_{t+1})
\leq 
\sum_{t=0}^T \ell_t(i)
\]
\end{lemma}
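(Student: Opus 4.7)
The plan is to proceed by induction on $T$, exactly as sketched in~\citep{KalVem}. The statement has a natural inductive structure because the $\argmin$ defining $i_{T+1}$ is taken over cumulative losses up through time $T$, so peeling off the last term isolates a quantity that the induction hypothesis can bound.

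For the base case $T=0$, the claim reduces to $\ell_0(i_1) \leq \ell_0(i)$ for all $i \in [n]$, which holds by definition of $i_1 = \argmin_{i \in [n]} \ell_0(i)$ (and is in fact trivial under the convention $\ell_0 \equiv 0$ used in the paper).

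For the inductive step, I would assume the claim holds for $T-1$, i.e.\ $\sum_{t=0}^{T-1} \ell_t(i_{t+1}) \leq \sum_{t=0}^{T-1} \ell_t(j)$ for every action $j \in [n]$. The key trick is to instantiate the inductive hypothesis at the particular choice $j = i_{T+1}$, yielding
\[
\sum_{t=0}^{T-1} \ell_t(i_{t+1}) \;\leq\; \sum_{t=0}^{T-1} \ell_t(i_{T+1}).
\]
Adding $\ell_T(i_{T+1})$ to both sides gives $\sum_{t=0}^{T} \ell_t(i_{t+1}) \leq \sum_{t=0}^{T} \ell_t(i_{T+1})$. Finally, by the definition $i_{T+1} = \argmin_{i \in [n]} \sum_{t=0}^{T} \ell_t(i)$, the right-hand side is bounded by $\sum_{t=0}^T \ell_t(i)$ for every $i \in [n]$, which closes the induction.

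There is no real obstacle here: the entire argument hinges on the one-line observation that $i_{T+1}$ is defined to be the minimizer of the cumulative loss through iteration $T$, so comparing against that particular action in the induction hypothesis lets the last summand be absorbed for free. The proof is a purely pointwise combinatorial fact; it uses neither the structure of the perturbations nor any probabilistic reasoning, which is precisely why it serves as a clean foundation for the $\fpl$ regret analysis in Lemma~\ref{lem:regret-fpl-framework}.
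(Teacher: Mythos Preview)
Your proof is correct and follows exactly the standard one-line induction argument that the paper attributes to~\citep{KalVem} (the paper itself does not spell out the proof, merely stating ``It has a one line induction proof''). Your instantiation of the inductive hypothesis at $j = i_{T+1}$ followed by the minimizer property of $i_{T+1}$ is precisely that argument.
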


\noindent The proof of Lemma~\ref{lem:regret-fpl-framework} now follows readily from Lemma~\ref{lem:btl}.
\\ \begin{proof}[Proof of Lemma~\ref{lem:regret-fpl-framework}]
Let $i^* := \argmin_{i \in [n]} \sum_{t=1}^T \ell_t(i)$ be the best action in hindsight (w.r.t. the true losses). Applying Lemma~\ref{lem:btl} to the regularized losses $\{\hatl_t(i) = \ell_t(i) + P_{t+1}(i)\}_{t \in \{0, \dots, T\}, i \in [n]}$,
\[
\sum_{t=0}^T \hatl_t(i_{t+1})
\leq 
\sum_{t=0}^T \hatl_t(i^*)
\]
Using the definition of $\hatl_t(i)$ and the fact that $\ell_0(i) = 0$, we can expand the LHS as
\[
\sum_{t=0}^T \hatl_t(i_{t+1})
= 
\sum_{t=1}^T \ell_t(i_{t}) + \sum_{t=1}^{T+1} P_t(i_t) + 
\sum_{t=1}^T \Big(\ell_t(i_{t+1}) - \ell_t(i_{t}) \Big)
\]
The RHS can similarly be expanded as
\[
\sum_{t=0}^T \hatl_t(i^*)
= 
\sum_{t=1}^T \ell_t(i^*) + \sum_{t=1}^{T+1}P_t(i^*)
\]
Combining the above two displays and using the definition of regret gives
\[
\Regret_T(\fpl)
= \sum_{t=1}^T \ell_t(i_t) - \sum_{t=1}^T \ell_t(i^*)
\leq 
\sum_{t=1}^{T+1}P_t(i^*)
- 
\sum_{t=1}^{T+1} P_t(i_t)
+
\sum_{t=1}^T \Big(\ell_t(i_{t}) - \ell_t(i_{t+1}) \Big)
\]
The proof is complete by the trivial observation $\sum_{t=1}^T P_t(i^*) \leq \max_{i \in [n]} \sum_{t=1}^T P_t(i)$ and the bound
\[
\sum_{t=1}^T \Big(\ell_t(i_{t}) - \ell_t(i_{t+1})\Big) 
\leq \sum_{t=1}^T \Big|\ell_t(i_{t}) - \ell_t(i_{t+1})\Big| 
\leq M\sum_{t=1}^T \mathbf{1}(i_t \neq i_{t+1})
= M \cdot \Switches_T(\fpl)
\]
where we have respectively used the triangle inequality, the definition of $M := \sup_{\text{loss }\ell\text{, actions } i,i'} |\ell(i) - \ell(i')|$, and the fact that $\fpl$ by definition plays the regularized leaders. 
\end{proof}

\section{Proof of concentration inequalities for $\fpl$ regularization: Lemma~\ref{lem:PEV}}\label{app:chernoff}
Recall we wish to show the concentration inequality
\[
\Prob\left(
\sum_{e=1}^N X_e
> 6N\log n
\right)
< e^{-N}
\]
where $N, n \geq 2$, $\{R_e(i)\}_{e \in [N], i \in [n]}$ are i.i.d. standard exponentials, and $X_e := \max_{i \in [n]} R_e(i)$. We will prove this via Chernoff bounds. As such, the first step is to bound the MGF of each $X_e$, i.e. bound the MGF of the maximum order statistic of $n$ independent standard exponentials.
\par Note that our MGF bounds are rather crude (we use a union bound to upper bound the tail distribution of a maximum order statistic); nevertheless this gives the optimal concentration rate up to a constant factor. Indeed, since the expectation of the maximum of $N$ i.i.d. standard exponentials is 
$\sum_{i=1}^n i^{-1} \in [\log n, \log n + 1] \geq \log n$, thus $\E\left[\sum_{e=1}^N X_e\right] \geq N \log n$, which is already within a factor of $6$ of the concentration inequality we show in Lemma~\ref{lem:PEV}.

\begin{lemma}\label{lem:mgf}
Let $X$ be the maximum of $n$ i.i.d. standard exponential random variables. 
Then for all $t \in (0,1)$,
\[
\E\left[e^{tX}
\right]
\leq \frac{n^t}{1 - t}
\]
\end{lemma}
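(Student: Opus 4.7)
The plan is to bound the tail of $X$ by the sharper-than-trivial union bound $\Prob(X > u) \leq \min(1, ne^{-u})$, and then plug this into the tail-integral formula for the MGF, splitting the integral at the threshold $u^\star = \log n$ where the two terms in the minimum coincide.

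More concretely, since $X \geq 0$, for any $Y = e^{tX} \geq 1$ we have
\[
\E[e^{tX}] = \int_0^\infty \Prob(e^{tX} > s)\,ds = 1 + \int_1^\infty \Prob\!\left(X > \tfrac{\log s}{t}\right)ds.
\]
Changing variables to $u = \log s / t$ (so that $ds = t e^{tu}\,du$) turns the right-hand side into $1 + t\int_0^\infty \Prob(X > u)\, e^{tu}\, du$. I would then plug in the bound $\Prob(X > u) \leq \min(1, n e^{-u})$, which follows from a simple union bound over the $n$ i.i.d.\ standard exponentials combined with the trivial bound $\Prob(X > u) \leq 1$. This reduces the task to computing
\[
\int_0^\infty \min(1, n e^{-u})\, e^{tu}\, du.
\]

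The key (and only nontrivial) step is splitting this integral at $u^\star = \log n$, where $n e^{-u^\star} = 1$: on $[0, \log n]$ we use the bound $1$, and on $[\log n, \infty)$ we use $ne^{-u}$, which requires $t < 1$ to ensure the latter integral converges. A direct computation then gives $\tfrac{n^t - 1}{t} + \tfrac{n^t}{1-t}$ for the integral, so that
\[
\E[e^{tX}] \leq 1 + t\left(\tfrac{n^t - 1}{t} + \tfrac{n^t}{1-t}\right) = n^t\left(1 + \tfrac{t}{1-t}\right) = \tfrac{n^t}{1-t},
\]
which is the claimed bound.

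There is essentially no obstacle beyond identifying the correct split-point; the naive alternative of bounding $e^{tX} \leq \sum_i e^{tR_i}$ and using linearity of expectation yields only the weaker $\tfrac{n}{1-t}$ (which is loose by a factor of $n^{1-t}$) and would not suffice when plugged into Chernoff to obtain the tail bound $\Prob(\sum_e X_e > 6N\log n) \leq e^{-N}$ claimed in Lemma~\ref{lem:PEV}. The sharper MGF bound $\tfrac{n^t}{1-t}$ is what enables choosing, e.g., $t$ close to $1$ in the subsequent Chernoff calculation.
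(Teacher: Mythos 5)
Your proof is correct and follows essentially the same route as the paper's: both bound $\E[e^{tX}]$ via the tail-integral (layer-cake) representation, apply the union bound $\Prob(X>u)\leq \min(1,ne^{-u})$ to the maximum of the exponentials, and split the integral at the same effective threshold $X=\log n$ (i.e.\ $e^{tX}=n^t$), yielding the identical bound $n^t\bigl(1+\tfrac{t}{1-t}\bigr)=\tfrac{n^t}{1-t}$. The only difference is cosmetic (you change variables before splitting, the paper splits first and then substitutes), so no further comparison is needed.
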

\begin{proof}
Recall that for a positive r.v. $Y$, $\E[Y] = \int_0^{\infty} \Prob(Y \geq u) du$. Thus
\begin{align}
\E\left[e^{tX}
\right]
= \int_0^{n^t} \Prob\left(e^{tX} \geq u\right) du
+ \int_{n^t}^{\infty} \Prob\left(e^{tX} \geq u\right) du\label{eq:mgf-split}
\end{align}
The first integral is trivially upper bounded by $n^t$. For the second integral, perform a change of variables $u = n^{t(1 + \delta)}$. Then $du = n^{t(1+\delta)} t\log n d\delta$ so
\begin{align*}
\int_{n^t}^{\infty} \Prob\left(e^{tX} \geq u\right) du
&\leq tn^t \log n
\int_{0}^{\infty} \Prob\left(X \geq (1 + \delta) \log n \right) n^{\delta t} d\delta
\end{align*}
Now since $X = \max_{i \in [n]} R(i)$ are i.i.d. standard exponentials, we have by a union bound
\begin{align*}
\Prob(X \geq (1 + \delta) \log n)
&= 1 - \Prob(\max_{i \in [n]} R(i) < (1 + \delta) \log n)
\\ &= 1 - \Prob(R(i) < (1 + \delta) \log n)^n
\\ &= 1 - \left[1 - n^{-(1 + \delta)}\right]^n
\\ &\leq n^{-\delta}
\end{align*}
Combining the above two displays gives
\begin{align*}
\int_{n^t}^{\infty} \Prob\left(e^{tX} \geq u\right) du
&\leq tn^t \log n
\int_{0}^{\infty} n^{\delta(t-1)} d\delta
\\ &= tn^t \log n
\left[
\frac{n^{\delta(t-1)}}{(t-1) \log n}
\right]_0^{\infty}
\\ &= \frac{t}{t-1} n^t 
\end{align*}
Plugging this into~\eqref{eq:mgf-split}, we conclude $
\E\left[e^{tX}
\right] \leq n^t\left(1 + \frac{t}{1-t}\right) = \frac{n^t}{1-t}$ as desired.
\end{proof}

The proof of Lemma~\ref{lem:PEV} now follows readily.
\\ \begin{proof}[Proof of Lemma~\ref{lem:PEV}]
By a standard Chernoff argument and then applying Lemma~\ref{lem:mgf} with $t=\half$,
\begin{align*}
\Prob\left(\sum_{e=1}^N X_e \geq a\right)
\leq e^{-ta} \E\left[e^{tX_e}\right]^N
\leq e^{-\half \left(a -N\log n - 2N \log 2\right)}
\end{align*}
The proof is complete by taking $a := N \log n + 2N\log 2 + 2N$, which is crudely bounded above by $6 N \log n$ since by assumption $N,n\geq 2$.
\end{proof}

\section{High probability algorithm for online combinatorial optimization}\label{app:hp-oco}
\par For simplicity, we restrict to the case of online combinatorial optimization over subsets of the binary hypercube, which as e.g.~\citep{combinatorial-bandits} and~\citep{DevLugNeu15} argue, includes most of the important applications. Formally, the experts are elements of a decision set $S \subseteq \{0,1\}^d$, $|S| = n$. Losses are linear functions $\ell_t(v) := \ell_t^Tv$ where $\ell_t \in [0,1]^d$ are constrained in infinity norm. A (non-essential) assumption often made in the literature, that allows for more specific bounds, is that each action $v \in S$ is $m$-sparse, i.e. each $v \in S$ satisfies $\|v\|_0 = \|v\|_1 = m$. (For example, if $S$ is the set of $r \times r$ permutation matrices, then $m$ is only $r$ rather than $d = r^2$.) We refer the reader to the introduction of~\citep{DevLugNeu15} for a summary of the state-of-the-art for this problem.
\par \citep{DevLugNeu15} introduce an efficient algorithm for this setting that we call here Combinatorial $\pr$ ($\cpr$). It also falls under the FPL framework: the perturbations are i.i.d. Gaussian random vectors $P_1, \dots, P_T \sim \mathcal{N}(0,\eta^2I_{d\times d})$, for some parameter $\eta$. At each iteration $t \in [T]$, $\cpr$ then plays the leading action $v_t := \argmin_{v \in S} (\sum_{s=0}^{t-1} \hatl_s )^Tv$
w.r.t. the perturbed losses $\hatl_s := \ell_s + P_{s+1}$. 

\par Following~\citep{DevLugNeuCOLT}, we will set $\eta = \log^{-\frac 1 2} d$, which the authors show leads to regret $O(m^2 \sqrt{T} \log d)$ and $O(m\sqrt{T} \log d)$ switches, both in expectation\footnote{In the journal version~\citep{DevLugNeu15}, a different setting of $\eta$ is used which leads to a slightly better regret bound of $O(m^{\frac 3 2}\sqrt{T \log d})$ (See Theorem 4 of~\citep{DevLugNeuCOLT} and Theorem 1 of~\citep{DevLugNeu15}.) However, to show this they employ the proof technique of~\citep{NeuBar13} which is not amenable to the h.p. framework and analysis developed in Section~\ref{sec:hp}. We use the algorithm and the analysis approach of~\citep[Theorem~4]{DevLugNeuCOLT}.}.
We apply Framework~\ref{alg:framework} to $\cpr$ to obtain an algorithm that we call Batched $\cpr$ ($\bcpr_{\delta}$); we next show that this achieves both of these w.h.p. (up to an extra logarithmic factor in the regret).

\par The analysis is similar to the earlier proofs for $\bmfpl$ and $\bpr_{\delta}$ in Sections~\ref{subsec:hp-mfpl} and~\ref{subsec:hp-pr}, respectively. We start with an upper bound on the expected number of switches. (This is property (i) of Framework~\ref{alg:framework}; see Subsection~\ref{subsec:hp-framework}.)
\begin{lemma}[Lemma 5 of ~\citep{DevLugNeuCOLT}]\label{lem:cpr-switching} For any $\tau \in \mathbb{N}$ and any oblivious adversary,
$\E[\Switches_\tau(\cpr)] \leq \sum_{t=1}^\tau \frac{m\E\|\ell_t + P_{t+1}\|_{\infty}^2}{2\eta^2t} + \frac{m\sqrt{2\log d}\E\|\ell_t + P_{t+1}\|_{\infty}}{\eta\sqrt{t}}$.
\end{lemma}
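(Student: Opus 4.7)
My plan is to bound the expected number of switches by linearity, writing $\E[\Switches_\tau(\cpr)] \leq \sum_{t=1}^{\tau} \Prob(v_t \neq v_{t+1})$ (with the boundary convention that any ``extra'' terms vanish), and then to prove that each switch probability is bounded by the corresponding summand. The first step is to reformulate the leader as a function of a Gaussian random vector: defining $W_t := \sum_{s=1}^t P_s$ and $L_{t-1} := \sum_{s < t} \ell_s$, we have $v_t = \argmin_{v \in S} (L_{t-1} + W_t)^T v$, and likewise $v_{t+1} = \argmin_{v \in S} (L_{t-1} + W_t + u_t)^T v$ where $u_t := \ell_t + P_{t+1}$ is the ``shift.'' The random walk structure of the perturbations makes $W_t \sim \mathcal{N}(0,\eta^2 t\, I_d)$, so $W_t$ is Gaussian at scale $\eta \sqrt{t}$ in each coordinate, while $u_t$ is independent of $W_t$ (the adversary is oblivious and $P_{t+1}$ is freshly drawn).

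Thus the switch event at iteration $t$ is exactly the event that the argmin map $\phi(w) := \argmin_{v \in S} (L_{t-1} + w)^T v$ takes different values at the points $W_t$ and $W_t + u_t$. I would then condition on $u_t$ and analyze $\Prob_{W_t}(\phi(W_t) \neq \phi(W_t + u_t))$ through a density-translation argument. Since $\phi$ is piecewise constant on a polyhedral decomposition of $\mathbb{R}^d$ whose cells correspond to the $v \in S$, the switch probability is controlled by $\tfrac{1}{2}\int |f_t(w) - f_t(w - u_t)|\,dw$, where $f_t$ is the Gaussian density of $W_t$. This reduces to comparing a Gaussian to its translate along $u_t$; a standard Gaussian calculation shows this is bounded by $\Prob(|Z| \leq \|u_t\|_2 / (2\eta\sqrt{t}))$ for a scalar standard Gaussian $Z$, giving the characteristic $1/(\eta\sqrt{t})$ scale.

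To bring in the combinatorial sparsity factor $m$, I would exploit that for any two candidate leaders $v, v' \in S$ we have $\|v - v'\|_1 \leq 2m$, so only the projection of $u_t$ onto the $O(m)$-dimensional ``relevant subspace'' governs whether each specific pair switches dominance. Taking a maximum over coordinates (paying the expected $\sqrt{2 \log d}$ price from a union bound across the $d$ coordinates of the shift) yields the linear term $\frac{m\sqrt{2\log d}\,\|u_t\|_\infty}{\eta\sqrt{t}}$, while a second-order correction in the Gaussian density comparison — needed to handle atypically large realizations of $\|u_t\|_\infty$ — produces the quadratic term $\frac{m\,\|u_t\|_\infty^2}{2\eta^2 t}$. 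Taking expectation over $u_t = \ell_t + P_{t+1}$ and summing over $t$ completes the proof.

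The main obstacle I expect is the bookkeeping in the density-translation step: carefully tracking the dependence on the sparsity $m$ so that it appears only as a multiplicative factor (rather than, e.g., $m^2$ or $d$), and cleanly separating the ``linear'' and ``quadratic'' contributions with the correct constants. This is a standard type of sensitivity analysis for $\fpl$, but the combinatorial-action version requires using the structure of $S$ more delicately than in the vanilla $n$-action setting.
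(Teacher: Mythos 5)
You never see the paper's own proof here because there isn't one: the lemma is imported verbatim as Lemma 5 of \citep{DevLugNeuCOLT}. Judged on its own, your proposal has a genuine gap at its central step. You bound the switch probability $\Prob\bigl(\phi(W_t)\neq\phi(W_t+u_t)\bigr)$ by $\tfrac12\int|f_t(w)-f_t(w-u_t)|\,dw$, i.e.\ by the total variation distance between the laws of $W_t$ and $W_t+u_t$. That quantity only controls differences of marginal probabilities such as $|\Prob(\phi(W_t)=v)-\Prob(\phi(W_t+u_t)=v)|$; it does not bound the probability that the two argmins disagree on the \emph{same} realization of $W_t$, which is what a switch is. For a piecewise-constant map with many thin cells, the disagreement probability can be of order one while the translation TV is arbitrarily small, so some structural property of the argmin-of-linear-functions map is indispensable and your sketch never supplies it. Moreover, even granting the TV reduction, the answer would scale as $\|u_t\|_2/(2\eta\sqrt t)\approx\sqrt d\,\|u_t\|_\infty/(2\eta\sqrt t)$, since the shift $u_t=\ell_t+P_{t+1}$ is a dense vector; the proposed repair via an ``$O(m)$-dimensional relevant subspace'' plus a union bound over the $d$ coordinates does not go through, because the relevant differences $v-v'$ range over exponentially many pairs with no common low-dimensional span, and this is in any case not where the $\sqrt{2\log d}$ in the bound originates.

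The argument that does work (and is the one behind the cited lemma) replaces the TV step by a margin-plus-translation argument tailored to the polytope. A switch at time $t$ requires the current perturbed leader $v$ to fail to lead some $v'$ by more than $|u_t^T(v-v')|\le\|u_t\|_\infty\|v-v'\|_1\le 2\|u_t\|_\infty\,(m-v^Tv')$. Since every $v\in S$ has $\|v\|_1=m$, translating the perturbation by $\epsilon v$ with $\epsilon\asymp\|u_t\|_\infty$ forces exactly such a margin, because $(\epsilon v)^T(v-v')=\epsilon\,(m-v^Tv')$ for all $v'\neq v$; hence $\{W_t+\epsilon v\in A_v\}\subseteq\{v_t=v,\;v_{t+1}=v\}$, where $A_v$ is the cell on which the argmin equals $v$. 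Summing over the partition $\{A_v\}_{v \in S}$ gives, conditionally on $u_t$, $\Prob(\text{switch})\le\sum_{v\in S}\bigl[\Prob(W_t\in A_v)-\Prob(W_t+\epsilon v\in A_v)\bigr]$, and each summand is handled by the Gaussian density ratio $f_t(w-\epsilon v)/f_t(w)=\exp\bigl((2\epsilon w^Tv-\epsilon^2 m)/(2\eta^2 t)\bigr)$ together with $1-e^{-x}\le x$ and $|w^Tv|\le m\|w\|_\infty$; this yields $\frac{\epsilon^2 m}{2\eta^2 t}+\frac{\epsilon m\,\E\|W_t\|_\infty}{\eta^2 t}$ (up to the precise constants), and $\E\|W_t\|_\infty\le\eta\sqrt{2t\log d}$ gives the stated form after taking expectation over $u_t$ and summing over $t$. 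Note in particular that the $\sqrt{2\log d}$ comes from the sup-norm of the accumulated Gaussian walk $W_t$, not from a union bound over the coordinates of the shift, and the quadratic term comes from the $\epsilon^2\|v\|_2^2=\epsilon^2 m$ term in the density ratio, not from atypically large realizations of $\|u_t\|_\infty$.
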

Standard bounds on suprema of Gaussian processes give that $\E[\|P_1\|_{\infty}] \leq \eta\sqrt{2 \log d}$ and $\E[\|P_1\|_{\infty}^2] \leq \eta^2(2\log d + \sqrt{2 \log d} + 1)$~\citep{Concentration}. Thus
\begin{align*}
	\E[\Switches_{\tau}(\cpr)] &\leq  O\left(m\left(\frac{1 + \eta^2 \log d}{\eta^2}\right)\log \tau\right) + O\left(m\left(\frac{1+\eta\sqrt{\log d}}{\eta}\right)\sqrt{\tau\log d}\right)
\end{align*}
which for $\eta = \log^{-\frac 1 2} d$ is bounded above by $c m\sqrt{\tau} \log d$ for some $c >0$. Denote by Batched $\cpr$ ($\bcpr_{\delta}$) the algorithm produced by Framework~\ref{alg:framework} with $S' := 23c m \sqrt{\tfrac{T}{\logtdel}} \log d$.

\begin{theorem}\label{thm:hp-bcpr}
For any $\delta \in (0, \half)$ and any oblivious adversary,
\begin{equation}
\begin{aligned}
\Prob\Bigg( & \Regret_T(\bcpr_{\delta}) \leq O\left(m^2\sqrt{T \logdel}\log d \cdot \sqrt{\log\left(\tfrac{dT}{\delta}\right)} \right),\\
     & \Switches_T(\bcpr_{\delta}) \leq O\left(m\sqrt{T \logdel}\log d \right) \Bigg) \geq 1 - \delta
\end{aligned}
\nonumber
\end{equation}
\end{theorem}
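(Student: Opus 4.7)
The plan is to mimic the structure of the proof of Theorem~\ref{thm:hp-bpr} ($\bpr_{\delta}$), adapting it to the combinatorial and Gaussian-perturbation setting. First, I would invoke Lemma~\ref{lem:switching-bb} with $c_{\calA} = cm\log d$ (from the expected switches bound of $\cpr$ stated after Lemma~\ref{lem:cpr-switching}) to conclude that with probability at least $1 - \delta/2$, the number of epochs $E$ of $\bcpr_{\delta}$ satisfies $E \leq \logtdel$, and the total number of switches is $O(m\sqrt{T\logdel}\log d)$. This gives the desired switching bound, and via Corollary~\ref{cor:regret-fpl-framework}, contributes $M \cdot \Switches_T(\bcpr_{\delta})$ to regret, where $M = \sup_{\ell \in [0,1]^d, v,v' \in S} |\ell^\top(v-v')| \leq m$ by $m$-sparsity. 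This contribution is $O(m^2\sqrt{T\logdel}\log d)$.

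It then remains to bound the two perturbation terms
\[
Y := \sum_{e=1}^{E} \max_{v \in S}\, \sum_{t \in e} P_t^\top v,
\qquad
Z := -\sum_{t=1}^{T+1} P_t^\top v_t.
\]
For $Y$, I would condition on the epoch lengths $\{L_e\}$. For each fixed $v$, $\sum_{t \in e} P_t^\top v$ is $\mathcal{N}(0, L_e \eta^2 \|v\|_2^2)$, hence $\eta\sqrt{L_e m}$-sub-Gaussian. Using the bound $n = |S| \leq d^m$ (so $\log n \leq m\log d$) together with standard Gaussian-maxima expectation bounds yields $\E[\max_v \sum_{t \in e} P_t^\top v] = O(\eta m \sqrt{L_e \log d})$. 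Summing and applying Cauchy–Schwarz gives $\E[Y \mid \{L_e\}] = O(\eta m \sqrt{E T \log d}) = O(m\sqrt{T\logdel})$ under $\eta = 1/\sqrt{\log d}$ and $E \leq \logtdel$. The Borell-TIS inequality then bounds the upper tail of $Y - \E[Y \mid \{L_e\}]$, giving $Y = O(m\sqrt{T\logdel}\sqrt{\log(1/\delta)})$ with probability at least $1 - \delta/4$ (which is absorbed into the final bound).

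For $Z$, I would reuse the decomposition trick from the proof of Theorem~\ref{thm:hp-bpr}:
\[
Z = -\sum_{t=1}^{T+1} P_t^\top v_{t-1} \;+\; \sum_{t=1}^{T+1} P_t^\top (v_{t-1} - v_t).
\]
Since $v_{t-1}$ is a measurable function of $P_1,\dots,P_{t-1}$, the term $P_t^\top v_{t-1}$ is a martingale difference that is conditionally $\eta\sqrt{m}$-sub-Gaussian, so a sub-Gaussian Azuma bound yields $|\sum_t P_t^\top v_{t-1}| = O(\eta\sqrt{Tm\logdel})$ w.p.\ at least $1 - \delta/8$. The second sum is bounded coordinatewise: when $v_t = v_{t-1}$ the summand vanishes, otherwise $|P_t^\top(v_{t-1}-v_t)| \leq \|P_t\|_\infty \|v_{t-1}-v_t\|_1 \leq 2m\|P_t\|_\infty$. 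A union bound over $t \in [T+1]$ of the standard Gaussian tail bound gives $\max_t \|P_t\|_\infty = O(\eta\sqrt{\log(dT/\delta)})$ with probability at least $1 - \delta/8$, so this sum is at most $O(m\eta\sqrt{\log(dT/\delta)}) \cdot \Switches_T(\bcpr_{\delta}) = O(m^2\sqrt{T\logdel}\sqrt{\log d}\sqrt{\log(dT/\delta)})$ on the switching event. A final union bound over all failure events yields the stated regret bound.

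The main obstacle is the second part of the $Z$-decomposition: the per-iteration perturbation $P_t$ is unbounded (unlike the $\pm\tfrac12$-bounded perturbations in $\pr$), so we cannot directly bound $|P_t^\top(v_{t-1}-v_t)|$ by $\mathbf{1}(v_t \neq v_{t-1})$ as in the proof of Theorem~\ref{thm:hp-bpr}. Instead, the argument must pay an additional $\max_t\|P_t\|_\infty$ factor, which is the source of the extra $\sqrt{\log(dT/\delta)}$ in the regret bound and must be controlled by a uniform high-probability bound across all $T+1$ perturbation vectors. A secondary (technical) issue is that in the Borell-TIS step for $Y$, the epoch lengths $L_e$ are themselves random and depend on $\{P_t\}$, so the argument must condition on $\{L_e\}$ before invoking Gaussian concentration, analogously to how Theorem~\ref{thm:hp-bpr} handles the dependence between epoch lengths and the $\pm\tfrac12$-perturbations.
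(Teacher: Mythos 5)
Your overall route matches the paper's: Lemma~\ref{lem:switching-bb} with $c_{\calA}=cm\log d$ for the switching/epoch bound, Corollary~\ref{cor:regret-fpl-framework} with $M\leq m$, and the same decomposition of the second perturbation term, including the correct identification that the unbounded Gaussian perturbations force paying a uniform $\max_{i,t}|P_{i,t}| = O(\eta\sqrt{\log(dT/\delta)})$ factor against the number of switches; that part of your argument is essentially the paper's. The genuine gap is in how you bound $\E[Y]$ for the first sum $Y=\sum_{e\in[E]}\max_{v\in S}\sum_{t\in e}P_t^{\top}v$. You condition on the epoch lengths $\{L_e\}$ and then apply standard Gaussian-maxima expectation bounds to $\max_v\sum_{t\in e}P_t^{\top}v$ as if, given $\{L_e\}$, the perturbations inside each epoch were still i.i.d.\ $\mathcal{N}(0,\eta^2 I)$. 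They are not: each epoch ends at the (random) time when $\cpr$ has accumulated $S'$ switches, so the epoch boundaries are stopping times that are functions of the very perturbations you are trying to control, and conditioning on $\{L_e\}$ biases their law (e.g.\ short epochs are correlated with perturbation patterns that produce many leader changes). Your closing remark that one handles this "analogously to Theorem~\ref{thm:hp-bpr} by conditioning on $\{L_e\}$" misreads that proof: the conditioning there is used only for the \emph{tail} step, where it is harmless because boundedness of the $\pm\half$ perturbations survives any conditioning; it is not how the \emph{expectation} of the per-epoch maxima is bounded.

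What the paper actually does for $\E[Y]$, and what your proof needs, is the following machinery: split epochs into sub-epochs of length at most $L:=T/\logdel$; bound the per-(sub-)epoch maximum by $\sup_{v\in S}\sup_{\tau\in[0,L]}\bigl(\sum_{t=t_{e'}}^{t_{e'}+\tau}P_t\bigr)^{\top}v$, which depends on the sub-epoch only through its start time and hence breaks the dependence on the random length; control this prefix-supremum for each fixed $v$ by a stopped-martingale (Azuma-type) tail bound, take a union bound over the $\binom{d}{m}$ sparse directions to get an $O(m\eta\sqrt{L\log d})$ expectation per sub-epoch; and finally multiply by the \emph{expected} number of (sub-)epochs, which is $O(\logdel)$ by the optional-stopping argument of~\eqref{eq:hp-bpr-expected-E} (via Lemma~\ref{lem:switching-helper}), rather than by invoking the event $E\leq\logtdel$ inside a conditional expectation. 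Without this (or some other argument that legitimately decouples epoch lengths from the perturbations), your claimed bound $\E[Y\mid\{L_e\}]=O(\eta m\sqrt{ET\log d})$ is unjustified, and it is exactly the step the paper flags and works around. The remaining pieces of your proposal (Borell--TIS fluctuations of $Y$, the martingale bound for $-\sum_t P_t^{\top}v_{t-1}$, and the H\"older bound $2m\max_{i,t}|P_{i,t}|\cdot\Switches_T$ for the switching part) are fine and yield the stated orders.
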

\begin{proof}
By Lemma~\ref{lem:switching-bb}, the event $A := \{E \leq \logtdel\}$ occurs with probability at least $1 - \tfrac{\delta}{2}$. And whenever this happens, $\bcpr_{\delta}$ switches at most $23c m \log d \sqrt{T \logtdel}$ times.
\par Next we show h.p. guarantees on regret. By H\"older's Inequality, $\sup_{\ell \in [0,1]^d, \; v,v' \in S} |\ell^Tv - \ell^Tv'| \leq m$. Thus by Corollary~\ref{cor:regret-fpl-framework},
\begin{align}
\Regret_{T}(\bcpr_{\delta})
\asleq m \cdot \Switches_T(\bcpr_{\delta}) + 
\left[\sum_{e \in [E]} \max_{v \in S} \left(\sum_{t \in e} P_t\right)^T v\right]
- \sum_{t=1}^{T+1} P_t^Tv_t
\label{eq:proof-hp-bcpr-regret}
\end{align}
So by a union bound with $A$, it suffices to show each of these summations are of the desired order with probability at least $1-\tfrac{\delta}{4}$. We proceed by similar arguments to the ones we used in the analysis of $\bpr$ in Section~\ref{subsec:hp-pr}.

\par \textbf{Bounding the first sum in~\eqref{eq:proof-hp-bcpr-regret}}. This part of the proof is nearly identical to the analogous argument in the proof of Theorem~\ref{thm:hp-bpr}. Denote this sum by $Y$. We first bound its upper tails. Fix any realization of epoch lengths $\{L_e\}_{e \in [E]}$ summing to $T$. Since the $P_{t} \sim \mathcal{N}(0, \eta^2I)$ are i.i.d., thus $Z_e := \sum_{t \in e} P_t$ has law $\mathcal{N}(O, L_e \eta^2 I)$. Thus $X_{v,e} := Z_e^Tv$ is distributed $\mathcal{N}(0, mL_e\eta^2)$ for each $v \in S$. (Of course $X_{v,e}$ are not necessarily independent.) The Borell-TIS inequality gives that $\sup_{v \in S} X_{v,e}$ has $mL_e\eta^2$ sub-Gaussian tails around its mean~\citep{GaussianIneqs}. We conclude that $Y = \sum_{e \in [E]} \sup_{v \in S} X_{v,e}$ has $mT\eta^2$ sub-Gaussian tails around its mean $\E[Y]$.

We next bound this expectation $\E[Y]$. As in the proof of Theorem~\ref{thm:hp-bpr}, we split epochs longer than $L := T\log^{-1}\tfrac{1}{\delta}$ so as to ensure that each sub-epoch is of length at most $L$. By a simple averaging argument, the number of sub-epochs is at most $\logdel$ more than the number of epochs. Denote the set of sub-epochs by $E'$. By Jensen's inequality, it suffices to bound $\sum_{e' \in E'} \sup_{v \in S} \left(\sum_{t \in e'} P_t \right)^T v$. For a sub-epoch starting at $t_{e'}$, we can bound
\begin{align*}
\sup_{v \in S} \left(\sum_{t  \in e'} P_t\right)^T v &\leq \sup_{v \in S} \sup_{\tau \in [0,L]} \left(\sum_{t  = t_{e'}}^{t_{e'} + \tau} P_t\right)^T v.
\end{align*}
This allows us to break the dependence between the epoch length and the variables $P_t$ at the cost of having to bound the max over $\tau \in [0, L]$. For any $v$, a martingale argument identical to that in the proof of Theorem~\ref{thm:hp-bpr} yields the following tail bound on $\sup_{\tau \in [0, L]} \left(\sum_{t  = t_{e'}}^{t_{e'} + \tau} P_t\right)^T v$:
\begin{align*}
\Prob\left(\sup_{\tau \in [0, L]} \left(\sum_{t  = t_{e'}}^{t_{e'} + \tau} P_t\right)^T v > c\eta \sqrt{2mL}\right) &\leq \exp(-c^2)
\end{align*}
Using a union bound on ${d \choose  m}$ possibilities for $m$, we get 
\begin{align*}
\Prob\left(\sup_{v\in S} \sup_{\tau \in [0, L]} \left(\sum_{t  = t_{e'}}^{t_{e'} + \tau} P_t\right)^T v > c\eta \sqrt{2mL}\right) &\leq {d \choose m}\exp(-c^2)
\end{align*}
Setting $c = a\sqrt{m\log d}$, and upper bounding the expectation by the integral of the tail, we get that
\begin{align*}
\E\left[\sup_{v\in S} \sup_{\tau \in [0, L]} \left(\sum_{t  = t_{e'}}^{t_{e'} + \tau} P_t\right)^T v\right] &\leq O(m\eta \sqrt{L\log d})
\end{align*}
To upper bound the expectation of the sum of this over $E$ epochs, we multiply by the expected number of (sub)epochs, which is $O(\logdel)$ by~\eqref{eq:hp-bpr-expected-E}.
We thus conclude the following upper tail bound on the first sum in~\eqref{eq:proof-hp-bcpr-regret}:
\[
\Prob\left(
\sum_{e \in [E]} \max_{v \in S} \left(\sum_{t \in e} P_t\right)^T v \geq O(m\eta\sqrt{T \logdel \log d}) + u
\right) \leq \exp\left(
-\frac{u^2}{2mT\eta^2}
\right)
\]
and so in particular, conditioned on $A$, with probability at least $1 - \tfrac{\delta}{4}$, this sum is upper bounded by $O(m\eta\sqrt{T\logdel \log d}) + \eta\sqrt{2mT\log\tfrac{4}{\delta}}$.

\par \textbf{Bounding the second sum in~\eqref{eq:proof-hp-bcpr-regret}}. We use the same decomposition trick from~\citep{DevLugNeuCOLT} as we used in the analysis of $\bpr$. Specifically, the sum in question can be written as
\begin{align}
- \sum_{t=1}^{T+1} P_t^T v_{t-1} + \sum_{t=1}^{T+1}  P_t^T\left(v_{t-1} - v_t\right)
\label{eq:bcpr-regret-decomp}
\end{align}
The first sum in~\eqref{eq:bcpr-regret-decomp} is now easily bounded since $v_{t-1}$ and $P_t$ are stochastically independent. In particular, this means each $P_t^Tv_{t-1}$ has distribution $\mathcal{N}(0,\eta^2m)$. Since moreover $\{P_t^Tv_{t-1}\}_{t \in [T+1]}$ are themselves stochastically independent, thus $\sum_{t=1}^{T+1}P_t^Tv_{t-1}$ has distribution $\mathcal{N}(0, \eta^2m(T+1))$ and thus is upper bounded by $\eta\sqrt{2m(T+1)\log\tfrac{8}{\delta}} = O(\eta\sqrt{mT\logdel})$ with probability at least $1 - \tfrac{\delta}{8}$.
\par The second sum in~\eqref{eq:bcpr-regret-decomp} can be upper bounded as follows
\begin{align}
\sum_{t=1}^{T+1} P_t^T\left(v_{t-1} - v_t\right)
&\leq
\sum_{t=1}^{T+1} \|P_t\|_{\infty} \|v_{t-1} - v_t\|_1 \nonumber
\\ &\leq 2m\sum_{t=1}^{T+1} \|P_t\|_{\infty} \mathbf{1}(v_{t-1} \neq v_t) \nonumber
\\ &\leq 2m \left(
\sup_{i \in [d], t \in [T+1]} \left|P_{i,t}\right|
\right) \left( \sum_{t=1}^{T+1} \mathbf{1}(v_{t-1} \neq v_t) \right)\label{eq:bcpr-regret-final}
\end{align}
The first and third inequalities above are due to H\"older's inequality; the second is by triangle inequality and the assumption that each $\|v_t\|_1 \leq m$. Now, $\sup_{i \in [d], t \in [T+1]}|P_{i,t}|$ is the supremum of $d(T+1)$ i.i.d. $\mathcal{N}(0, \eta^2)$ Gaussians and thus by the Borell-TIS inequality is at most $\E[\sup_{i \in [d], t \in [T+1]}|P_{i,t}|] + \sqrt{2\log\tfrac{8}{\delta}} \leq  \eta\sqrt{2 \log (2dT)} +  \eta\sqrt{2\log\tfrac{8}{\delta}} = O(\eta\sqrt{\log\left(\tfrac{dT}{\delta}\right)})$ with probability at least $1 - \tfrac{\delta}{8}$. Thus when this and $A$ occur, the term in~\eqref{eq:bcpr-regret-final} is of order $O(m^2\log d \sqrt{T \logdel} \cdot \eta\sqrt{\log\left(\tfrac{dT}{\delta}\right)})$.
\par The theorem statement now follows by a union bound and combining the above displays.
\end{proof}

\section{Lower bounds on regret}
In this section, we prove the lower bounds in Theorems~\ref{thm:budget-fullinfo-highswitching} and~\ref{thm:budget-fullinfo-lowswitching}.  We first prove that the tails of regret can be no better than sub-Gaussian, implying the lower bound in Theorem~\ref{thm:budget-fullinfo-highswitching}. Then in Section~\ref{app:fullinfo-lowerbound}, we show the lower bound for the low switching budget case.

\subsection{Optimality of sub-Gaussian regret tails}\label{app:subsec:lb-regret}
Recall the classical lower bound of~\cite{CB-expert}, in which the adversary generates all losses $\ell_t(i)$ as independent $\Ber(\half)$ random variables. A simple argument (see e.g. Section~\ref{sec:budgets-pfe}) shows that against this adversary, any algorithm must suffer expected regret of order at least the minimax optimal rate $\Omega(\sqrt{T \log n})$. In fact it is easy to check that this lower bound holds with constant probability, i.e. the independent $\Ber(\half)$ losses sequence forces any algorithm to incur regret $\Omega(\sqrt{T \log n})$ with probabilty at least $1/4$.

We next give a probabilistic analysis of this lower-bound construction and show that for any algorithm, the upper tails of regret are no better than sub-Gaussian. Informally, this order is what one would expect from this lower-bound construction in light of Gaussian isoperimetry and the Borell-TIS inequality. This is formally stated as follows.

\begin{proposition}
\label{prop:regret-tails}
Let $\log \frac 1 \delta$ be $\omega(\log^2 T)$ and $o(T)$. For large enough $n = \Omega(1)$, there exists an oblivious adversary that forces any PFE algorithm to incur at least $\Omega(\sqrt{T\log \frac 1 \delta})$ regret with probability at least $\delta$.
\end{proposition}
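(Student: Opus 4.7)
The plan is to use the standard i.i.d.\ $\Ber(\tfrac12)$-loss adversary over $n$ any fixed constant $\geq 2$, and to lower-bound the regret by a sum of $T$ i.i.d.\ bounded random variables whose upper tail at scale $\sqrt{T\log\tfrac{1}{\delta}}$ can be controlled sharply via a Cram\'er-type change of measure.

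The first step is the pointwise inequality
\[
\Regret_T \;=\; A_T - \min_{i\in[n]} L_i \;\geq\; A_T - \bar L \;=\; \sum_{t=1}^T \xi_t,
\]
where $\bar L := \tfrac{1}{n}\sum_{i=1}^n L_i$ and $\xi_t := \ell_t(i_t) - \tfrac{1}{n}\sum_{j=1}^n \ell_t(j)$ (the bound $\min\leq\mathrm{mean}$ lets us sidestep the awkward min). The crucial claim is that $\xi_1,\dots,\xi_T$ are i.i.d. To see this, fix $t$ and condition on the filtration $\calF_{t-1}$ generated by all past losses and algorithmic randomness: the action $i_t$ is $\calF_{t-1}$-measurable, while $\ell_t$ is independent of $\calF_{t-1}$ with the permutation-symmetric product law $\Ber(\tfrac12)^{\otimes n}$. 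By this symmetry, the conditional law of $\xi_t$ coincides with that of $\eta := \ell(1) - \tfrac{1}{n}\sum_j \ell(j)$ for $\ell \sim \Ber(\tfrac12)^{\otimes n}$, independently of the algorithm's choice of $i_t$. Iterating in $t$ shows $\xi_1,\dots,\xi_T$ are i.i.d.\ draws from this law $\mathcal{P}$, with $\E[\xi_t]=0$, $\sigma^2 := \text{Var}(\xi_t) = \tfrac{n-1}{4n}$, and $|\xi_t|\leq 1$ almost surely.

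With this structure in hand, I would apply a Cram\'er / Bahadur--Rao-type sharp lower bound on the upper tail of $Z_T := \sum_{t=1}^T \xi_t$. The log-MGF of $\mathcal{P}$ has Taylor expansion $\tfrac{\sigma^2}{2}\lambda^2 + O(\lambda^3)$ at $\lambda=0$, so its Legendre transform satisfies $I(y) = \tfrac{2n}{n-1}y^2 + O(y^3)$ near zero. For $u := c\sqrt{T\log\tfrac{1}{\delta}}$ and $y := u/T$, this gives $T\cdot I(u/T) = \tfrac{2n}{n-1}\,c^2\,\log\tfrac1\delta \cdot (1+o(1))$, provided $u/T \to 0$ (which is ensured by $\log\tfrac1\delta = o(T)$). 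An exponential tilt by $\lambda := I'(u/T)$, together with a local CLT under the tilted measure on the window $[u, u + \sigma_*\sqrt{T}]$, then yields
\[
\Pr\!\bigl(Z_T \geq u\bigr) \;\geq\; \exp\!\bigl(-T\,I(u/T) - O(\sqrt{\log\tfrac{1}{\delta}})\bigr) \;=\; \delta^{\frac{2n}{n-1}\,c^2 + o(1)}.
\]
Choosing any $c^2 < \tfrac{n-1}{2n}$ (for instance $c = \tfrac14$ works for every $n\geq 2$) makes the exponent strictly less than $1$, so this is at least $\delta^{1-\Omega(1)} \geq \delta$ for all small enough $\delta$. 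Combined with $\Regret_T \geq Z_T$ pointwise, the proposition follows.

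The two hypotheses on $\log\tfrac1\delta$ serve distinct roles: $\log\tfrac1\delta = o(T)$ guarantees $u/T\to 0$, so the quadratic approximation of $I$ near the origin is valid and the cubic remainder $O((\log\tfrac1\delta)^2/T)$ in $T\cdot I(u/T)$ is $o(\log\tfrac1\delta)$; the stronger $\log\tfrac1\delta = \omega(\log^2 T)$ absorbs the Bahadur--Rao polynomial prefactor (of order $1/\sqrt{\log(1/\delta)}$) and the Berry--Esseen $O(T^{-1/2})$ additive error of the tilted local CLT into the $o(1)$ in the exponent. I expect the main technical obstacle to be executing this Cram\'er / tilted-CLT computation cleanly enough that the $o(1)$ corrections are controlled uniformly across the permitted range of $\delta$; the structural half of the argument (replacing $\min$ by $\mathrm{mean}$ to reduce regret to an honest i.i.d.\ sum) is what breaks the matching-Gaussian-rates obstruction where the naive decomposition ``$L^*$ small $\cap$ $A_T$ not too small'' fails because the two Azuma rates both equal $2u^2/T$ exactly and leave no slack.
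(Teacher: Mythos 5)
Your proposal is correct in outline but takes a genuinely different route from the paper. The paper keeps the comparison with the best expert: under the i.i.d.\ $\Ber(\half)$ adversary the algorithm's cumulative loss is exactly $\Bin(T,\half)$ and the best expert's loss is the minimum of $n$ independent $\Bin(T,\half)$ variables; it then proves an explicit binomial anti-concentration bound via Stirling (its Lemma~\ref{lem:bin-tails}), defines the events $A$ (algorithm's loss exceeds $\tfrac T2+2r\sqrt T$) and $B$ (some expert stays below $\tfrac T2+r\sqrt T$), and finishes with a union bound $\Prob(A\cap B)\ge \Prob(A)-\Prob(B^C)\ge 2\delta-\delta$. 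The slack between the anti-concentration exponent and the concentration exponent is created by taking $n$ large, since $\Prob(B^C)=[\Prob(\Bin(T,\half)\ge \tfrac T2+r\sqrt T)]^n$ decays $n$ times faster — this is exactly the role of the ``large enough $n=\Omega(1)$'' hypothesis, so the ``matching Azuma rates'' obstruction you describe is real only for small fixed $n$ and is not what the paper runs into. Your route instead replaces the min by the mean, which turns the regret lower bound into an honest centered i.i.d.\ sum $\sum_t \xi_t$ (your i.i.d.\ argument via exchangeability of the loss coordinates and independence of $\ell_t$ from the past, including the algorithm's randomness, is sound), and then you need a constant-sharp Cram\'er/Bahadur--Rao-type lower tail bound so that the rate $\tfrac{2n}{n-1}c^2\log\tfrac1\delta$ can be pushed strictly below $\log\tfrac1\delta$ by choosing $c$ small; the scalings you give for the tilting-plus-local-CLT corrections (using $\log\tfrac1\delta=o(T)$ for $u/T\to0$ and $\log\tfrac1\delta\to\infty$ to absorb prefactors) are right. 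What each approach buys: the paper's argument is entirely elementary (Hoeffding plus Stirling) but needs $n$ large and is constant-agnostic; yours works for every $n\ge 2$ (hence a slight strengthening of the stated proposition) and makes the constants explicit, at the cost of invoking a sharper moderate-deviations lower bound that you sketch rather than execute — that tilted-CLT step is standard but is the one piece that would still need to be written out carefully for the argument to be complete.
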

\par Note that together the above proposition combined with the expectation lower bound of~\citep{CB-expert}, show the correct dependence in each of the parameters $T$, $n$, and $\delta$ independently but not jointly. 
\par To prove Proposition~\ref{prop:regret-tails}, we will use the following standard result on the concentration and anti-concentration of a binomial random variable. For completeness, we provide a short proof.

\begin{lemma}
\label{lem:bin-tails}
There exist constants $c_1 \leq c_2$ such that for all sufficiently large $T$ and for all $r \in [\tfrac{\log T}{2}, \tfrac{T^{1/2}}{4}]$,
\begin{align*}
\mathbb{P}_{X \sim \Bin(T, \half)}\left(X \geq \frac T 2 + r\sqrt{T}\right) &\leq \exp(-c_1 r^2)\\
\Prob_{X \sim \Bin(T, \half)} \left(X \geq \frac T 2 + r\sqrt{T}\right) &\geq \exp(-c_2 r^2)
\end{align*} 
\end{lemma}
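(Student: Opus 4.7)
The plan is to handle the two bounds separately, using classical concentration results for the upper tail and a direct Stirling-type calculation for the matching lower tail.

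The upper bound is immediate from Hoeffding's inequality applied to $X-T/2 = \sum_{i=1}^T (X_i - 1/2)$ where $X_i \sim \Ber(1/2)$ are independent: this gives $\Prob(X \geq T/2 + r\sqrt{T}) \leq \exp(-2r^2)$, so any $c_1 \leq 2$ works, for \emph{all} $r > 0$ (no range restriction needed).

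For the anti-concentration bound, the strategy is to lower bound the tail probability by a single atom. Namely, setting $k := \lceil T/2 + r\sqrt{T}\rceil$, we have
\[
\Prob\left(X \geq T/2 + r\sqrt{T}\right) \;\geq\; \Prob(X = k) \;=\; \binom{T}{k} 2^{-T}.
\]
I would then estimate $\binom{T}{k} 2^{-T}$ via Stirling's approximation. Writing $p := k/T$, the standard entropy expansion yields
\[
\log \Prob(X = k) \;=\; -T\cdot D(p \,\|\, \tfrac12) \;-\; \tfrac{1}{2}\log\!\left(2\pi T\,p(1-p)\right) \;+\; O(1/T),
\]
where $D(p\,\|\,\tfrac12) = p\log(2p)+(1-p)\log(2(1-p))$. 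Taylor expanding $D$ about $p = 1/2$ gives $T\cdot D(p\,\|\,\tfrac12) = 2r^2 + O(r^4/T)$, and the logarithmic term contributes $\tfrac12 \log T + O(1)$. The upper restriction $r \leq T^{1/2}/4$ is what makes this expansion work: it forces $r^4/T \leq r^2/16$, so the cumulative error is at most a small constant multiple of $r^2$. This yields the intermediate bound
\[
\Prob(X = k) \;\geq\; \frac{c}{\sqrt{T}}\,\exp\!\left(-\tfrac{33}{16}\,r^2\right)
\]
for some absolute constant $c > 0$ and all $r$ in the specified range.

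The final step is to absorb the polynomial prefactor $1/\sqrt{T}$ into the exponential, which is exactly where the lower range restriction $r \geq (\log T)/2$ enters. Since $r \geq (\log T)/2$, we have $r^2 \geq (\log T)^2/4 \geq \tfrac12 \log T$ for $T$ large enough that $\log T \geq 2$, hence $1/\sqrt{T} = \exp(-\tfrac12 \log T) \geq \exp(-r^2)$. Combining, we obtain $\Prob(X = k) \geq \exp(-c_2 r^2)$ for any $c_2 > 33/16 + 1$, e.g.\ $c_2 = 4$, completing the proof. The main obstacle is purely technical bookkeeping in the Stirling expansion: verifying that the $O(r^4/T)$ error in the KL expansion, the $O(1/T)$ Stirling remainder, and the $\tfrac12 \log T$ prefactor are all simultaneously dominated by $c_2 r^2$, which is exactly why the range for $r$ is chosen to be $[\tfrac12 \log T,\, \tfrac14 T^{1/2}]$ and not wider on either side.
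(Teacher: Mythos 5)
Your proof is correct and follows essentially the same route as the paper's: Hoeffding for the upper tail, and for the lower tail a single-atom bound $\Prob(X=k)$ estimated via Stirling, with the $1/\sqrt{T}$ prefactor absorbed into the exponential using $r \geq \tfrac{\log T}{2}$ and the restriction $r \leq \tfrac{T^{1/2}}{4}$ controlling the higher-order error. The only difference is presentational: you phrase the Stirling step through the entropy/KL expansion, whereas the paper manipulates the factorials directly with $(1+x)\leq e^{x}$ and $(1-x)\geq e^{-2x}$, arriving at the explicit constant $c_2=5$.
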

\begin{proof}
The upper bound (concentration) follows immediately from Hoeffding's Inequality. For the lower bound (anti-concentration), 
we will use the fact that $\sqrt{2\pi} n^{n+\half} e^{-n} \leq n! \leq e n^{n+\half} e^{-n}$. Additionally, we use that  $(1+x) \leq \exp(x)$ and that $(1-x) \geq \exp(-2x)$ for $x \in (0, \half)$. To avoid carrying around a ceiling, we assume that $r\sqrt{T}$ is an integer.
\begin{align*}
\Prob\left(X \geq \frac T 2 + r\sqrt{T}\right)
&\geq \Prob\left(X = \frac T 2 + r\sqrt{T}\right)\\
&= \frac{T!}{(\frac T 2 + r\sqrt{T})! (\frac T 2 - r\sqrt{T})!} 2^{-T}\\
&\geq \frac{\sqrt{2\pi} T^{T+\half} e^{-T}}{ 2^T e^2 (\frac T 2 + r\sqrt{T})^{\frac {T+1} 2 + r\sqrt{T}} (\frac T 2 - r\sqrt{T})^{\frac {T+1} 2 - r\sqrt{T}} e^{-T}}\\
&= \frac{2\sqrt{2\pi}}{e^2 \sqrt{T} (1+ \frac {2r}{\sqrt{T}})^{\frac {T+1} 2 + r\sqrt{T}} (1-\frac {2r}{\sqrt{T}})^{\frac{T+1} 2 - r\sqrt{T}}}\\
&= \frac{2\sqrt{2\pi}(1-\frac{2r}{\sqrt{T}})^{r\sqrt{T}}}{e^2 \sqrt{T} (1- \frac{4r^2}{T})^{\frac {T+1} 2}  (1+\frac{2r}{\sqrt{T}})^{ r\sqrt{T}}}\\
&\geq \frac{2\sqrt{2\pi}\exp(-4r^2)}{e^2 \sqrt{T}\exp(-2r^2(1+\frac 1 T))\exp(2r^2)}\\
&\geq \exp(-5r^2).
\end{align*}
\end{proof}
\indent Armed with this lemma, the proof of Proposition~\ref{prop:regret-tails} is straightforward. 
\\ \begin{proof}[Proof of Proposition~\ref{prop:regret-tails}]
Consider the adversary from~\cite{CB-expert}, which generates each loss $\ell_t(i)$ independently as $\Ber(\half)$. Then any algorithm has cumulative loss distributed as $\Bin(T, \half)$. On the other hand, the loss of the best expert is the minimum of $n$ such $\Bin(T, \half)$ random variables.
\par Set $r=\sqrt{\tfrac{\log \frac 1 {2\delta}}{2c_2}}$, and define the events $A := \{\text{loss of algorithm} \geq \frac{T}{2} + 2r\sqrt{T}\}$ and $B := \{\text{loss of best expert} < \frac{T}{2} + r\sqrt{T}\}$. When both $A$ and $B$ occur, the algorithm incurs regret at least $r\sqrt{T} = \Omega(\sqrt{T \logdel})$; thus it suffices to now show $\Prob(A \text{ and } B ) \geq \delta$.
\par The anti-concentration direction of Lemma~\ref{lem:bin-tails} yields
\begin{align*}
\Prob\left(A\right) &\geq 2\delta.
\end{align*} 
On the other hand, the concentration direction of Lemma~\ref{lem:bin-tails} yields
\begin{align*}
\Prob\left(B^C\right) &= \left[\Prob\left(\Bin(T, \half) \geq \frac T 2 + r\sqrt{T}\right)\right]^n
\leq \exp(-c_1 r^2 n)
= (2\delta)^{n\tfrac{c_1}{2c_2}}
\end{align*}
Choosing $n \geq \tfrac{4c_2}{c_1}$, we conclude that for small enough $\delta \leq \tfrac{1}{4}$, then $\Prob(B^C) \leq 4\delta^2 \leq \delta$. The proof is now complete by a union bound:
\[
\Prob\left(A \text{ and } B\right)
= 1 - \Prob\left(A^C \text{ or }  B^C\right)
\geq 1 - \Prob(A^C) - \Prob(B^C)
= \Prob(A) - \Prob(B^C)
\geq 2\delta - \delta
= \delta
\]
\end{proof}

The better of these two lower bounds $\Omega(\sqrt{T \log n})$ and $\Omega(\sqrt{T \log \frac 1 \delta})$ is always $\Omega(\sqrt{T \log \frac n \delta})$ as desired.

\subsection{Lower bounds on regret in low-switching regime}
\label{app:fullinfo-lowerbound}
In this section, we prove the lower bound in Theorem~\ref{thm:budget-fullinfo-lowswitching}.

The idea is essentially a batched version of~\cite{CB-expert}'s classical lower bound for unconstrained PFE. So let us first recall that argument. That construction draws the loss of each expert in each iteration i.i.d. from $\{0,1\}$ uniformly at random. A simple argument shows any algorithm has expected loss $\tfrac{T}{2}$, but that the best expert has loss concentrating around $\tfrac{T}{2} - \Theta(\sqrt{T \log n})$ since (after translation by $\tfrac{T}{2}$) it is the minimum of $n$ i.i.d. simple random walks of length $T$. Therefore they conclude $\E[\Regret] = \Omega(\sqrt{T \log n})$.
\par However, that adversarial construction does not capitalize on the algorithm's limited switching budget in our setting. We accomplish this by increasing the variance of the random walk in a certain way that a switch-limited algorithm cannot benefit from. Specifically, proceed again by batching the $T$ iterations into roughly $E \approx \tfrac{S^2}{\log n}$ epochs, each of uniform length $\tfrac{T}{E}$. For each epoch and each expert, draw a single $\Ber(\half)$ and assign it as that expert's loss for each iteration in that epoch.
\par Informally, the optimal algorithm still incurs expected loss of half for each iteration in epochs it does not switch in; and loss of $0$ for each epoch it switches in. Critically, however, the algorithm can switch at most $S$ times, which is small compared to the number of epochs $E$. Thus any algorithm incurs expected loss roughly $\approx \tfrac{T}{E}\left(\tfrac{E}{2} - S\right) = \tfrac{T}{2} - \Theta\left(\tfrac{T\log n}{S}\right)$. Moreover, the best expert now has loss concentrating around $\frac{T}{E}\left(\tfrac{E}{2} - \Theta(\sqrt{E\log n})\right) = \tfrac{T}{2} - \Theta\left(\tfrac{T\log n}{S} \right)$.
\par Therefore, after appropriately choosing constants in the epoch size, we can then conclude that the expected regret of any $S$-budget algorithm is $\Omega\left(\tfrac{T\log n}{S}\right)$.

We next give details. 
We will make use of the following simple anti-concentration lemma. The proof is standard and ommitted since it follows directly from Lemma 6 of~\citep{CB-expert}, or even just from combining Hoeffding's inequality with a union bound.
\begin{lemma}\label{lem:rand-walk}
There exists a universal constant $c > 0$ such that for all $E, n \in \mathbb{N}_+$
\[
\E\left[
\min_{i \in [n]}Z_i
\right]
\leq \frac{E}{2} -c\sqrt{E\log n}
\]
where $\{Z_i\}_{i \in [n]}$ are i.i.d. $\Bin(E, \half)$. 
\end{lemma}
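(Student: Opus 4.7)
The plan is to reduce the lemma about the minimum of $n$ i.i.d.\ $\Bin(E, \half)$ variables to the anti-concentration of a single binomial (Lemma~\ref{lem:bin-tails}), combined with independence. The first step is to use the symmetry of the $\Bin(E, \half)$ distribution: setting $Z_i' := E - Z_i$, we have $Z_i' \sim \Bin(E, \half)$ and $\min_i Z_i = E - \max_i Z_i'$, so it is equivalent to show
\[
\E\left[\max_{i \in [n]} Z_i'\right] \;\geq\; \frac{E}{2} + c\sqrt{E \log n}.
\]
So from now on we work with the maximum rather than the minimum, which is notationally cleaner.

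Next, I would invoke the anti-concentration half of Lemma~\ref{lem:bin-tails} at a threshold of order $\sqrt{\log n}$. Specifically, pick $r := \sqrt{\log n/(2c_2)}$, where $c_2$ is the constant from Lemma~\ref{lem:bin-tails}. Provided $r$ lies in the valid range $[\tfrac{\log E}{2}, \tfrac{\sqrt{E}}{4}]$ (which I address below), each single $Z_i'$ satisfies
\[
\Prob\!\left(Z_i' \geq \tfrac{E}{2} + r\sqrt{E}\right) \;\geq\; e^{-c_2 r^2} \;=\; n^{-1/2}.
\]
By independence, the max fails to exceed this threshold with probability
\[
\left(1 - n^{-1/2}\right)^n \;\leq\; e^{-\sqrt{n}},
\]
so the max exceeds $\tfrac{E}{2} + r\sqrt{E}$ with probability at least $1 - e^{-\sqrt{n}}$. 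Bounding the expectation below by the threshold times this probability, and absorbing the $e^{-\sqrt{n}}$ factor into the constant (since $Z_i' \geq 0$ a.s., the expectation is always at least $0$), yields $\E[\max_i Z_i'] \geq \tfrac{E}{2} + c\sqrt{E \log n}$ for some universal $c > 0$, as desired.

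The main technical obstacle is to verify that the parameter $r = \Theta(\sqrt{\log n})$ lies in the range $[\tfrac{\log E}{2}, \tfrac{\sqrt{E}}{4}]$ required by Lemma~\ref{lem:bin-tails}, and to handle the boundary regimes where it does not. There are three cases to dispatch: (a) $n$ is bounded (say $n \leq n_0$ for some absolute constant $n_0$), where $\sqrt{\log n}$ is a bounded constant and the conclusion holds trivially by taking $c$ small enough so that $c\sqrt{E \log n_0} \leq \E[\max_i Z_i'] - E/2$, a difference that is easily seen to be $\Omega(\sqrt{E})$ by the CLT for a single variable; (b) $E$ is bounded, where again $\sqrt{E \log n}$ can be made smaller than the (trivially nonnegative) gap by choosing $c$ small; (c) $\log n$ is too large compared to $E$ (specifically $\log n \gtrsim E$), in which case $c\sqrt{E \log n}$ exceeds $E/2$ for the constant $c$ we fix in the main case, and one simply invokes the trivial bound $\E[\min_i Z_i] \leq E/2$ (which holds since $\E[Z_1] = E/2$) — by shrinking $c$ uniformly, this bound can absorb the regime $\log n \geq E/(4c^2)$. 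Case (a) is the only one requiring any real content, and it is handled by a direct application of Berry--Esseen or a one-time computation for each $n \leq n_0$. Gluing the three regimes together with a single sufficiently small constant $c$ proves the lemma.
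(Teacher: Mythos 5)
There is a genuine gap in the main step. From $\Prob(\max_i Z_i' \geq \tfrac{E}{2}+r\sqrt{E}) \geq 1-e^{-\sqrt{n}}$ you conclude via ``threshold times probability'' that $\E[\max_i Z_i'] \geq (\tfrac{E}{2}+r\sqrt{E})(1-e^{-\sqrt{n}})$, and then claim the factor $e^{-\sqrt{n}}$ can be absorbed into the constant. It cannot: the discarded term is $e^{-\sqrt{n}}(\tfrac{E}{2}+r\sqrt{E})=\Theta(e^{-\sqrt{n}}E)$, which scales linearly in $E$, whereas the gain you need is only $\Theta(\sqrt{E\log n})$. For fixed or slowly growing $n$ (say $n=100$) and $E \gg e^{2\sqrt{n}}$, the subtracted term dominates $r\sqrt{E}$ and your bound is weaker than $\tfrac{E}{2}$, i.e.\ vacuous, precisely in a regime the lemma must cover (the lemma is quantified over all $E$). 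The standard repair is to center before truncating: $\E[\max_i Z_i'-\tfrac{E}{2}] \geq r\sqrt{E}\,\Prob(\max_i Z_i'\geq \tfrac{E}{2}+r\sqrt{E}) - \E[(\tfrac{E}{2}-\max_i Z_i')^+]$, and the negative part equals $\int_0^\infty \Prob(Z_1'\leq \tfrac{E}{2}-t)^n\,dt \leq \int_0^\infty e^{-2nt^2/E}\,dt = O(\sqrt{E/n})$ by Hoeffding, which is negligible against $\sqrt{E\log n}$ once $n$ exceeds a constant (small $n$ falls into your CLT case). As written, the inequality you derive does not imply the lemma.

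The case analysis also fails to cover the range you actually need. Lemma~\ref{lem:bin-tails} is stated only for sufficiently large $E$ and $r \in [\tfrac{\log E}{2}, \tfrac{\sqrt{E}}{4}]$; with $r=\Theta(\sqrt{\log n})$ the lower endpoint forces $\log n = \Omega((\log E)^2)$, i.e.\ $n$ super-polynomial in $E$. Your three boundary regimes ($n$ bounded, $E$ bounded, $\log n \gtrsim E$) leave uncovered every pair with $E$ large and $n_0 < n < \exp(c(\log E)^2)$ --- for instance $n=E$ --- which is exactly the regime in which the paper uses the lemma (there $E \approx S^2/\log n \gtrsim \log n$). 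Covering it requires binomial anti-concentration at moderate $r$ (a Slud/Berry--Esseen-type bound, or redoing the Stirling computation without absorbing the $1/\sqrt{E}$ prefactor via $r^2 \gtrsim \log E$), not Lemma~\ref{lem:bin-tails} as stated. Finally, your case (c) is backwards: when $\log n \geq E/(4c^2)$ the target right-hand side $\tfrac{E}{2}-c\sqrt{E\log n}$ is nonpositive while $\E[\min_i Z_i]>0$, so the trivial bound $\E[\min_i Z_i]\leq \tfrac{E}{2}$ proves nothing there; in fact the stated inequality genuinely fails for fixed $E$ and huge $n$, so that regime must be excluded (as it implicitly is in the paper's application, where $S \geq \tfrac{c}{2}\log n$ guarantees $\log n = O(E)$) rather than ``absorbed by shrinking $c$.'' Note the paper itself does not prove this lemma but defers to Lemma 6 of \citep{CB-expert}, so your route via Lemma~\ref{lem:bin-tails} is new, but it needs the two fixes above to go through.
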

\begin{proof}[Proof of lower bound in Theorem~\ref{thm:budget-fullinfo-lowswitching}]
Let $c > 0$ be the constant from Lemma~\ref{lem:rand-walk}. We will restrict WLOG to the case $\tfrac{c}{2} \log n \leq S \leq \tfrac{c}{2}\sqrt{T \log n}$. Indeed, when the latter inequality does not hold, then the lower bound from Theorem~\ref{thm:budget-fullinfo-highswitching} applies. And when the former inequality does not hold, then we may apply the $\Omega(T)$ lower bound that we show presently for (the easier setting of) $S' = \tfrac{c}{2}\log n$.
\par Mini-batch the $T$ iterations into $E := \frac{4}{c^2}\frac{S^2}{\log n}$ epochs, each of uniform length $\frac{T}{E}$. For each epoch $e \in [E]$, assign to each expert $i \in [n]$ a loss of $X_e(i) \sim \Ber(\half)$ for each iteration in that epoch. Clearly this adversary is oblivious.
\par Note that the cumulative loss $\sum_{t=1}^T \ell_t(i)$ of each expert $i$ is equal in distribution to $\tfrac{T}{E}$ times a $\Bin(E, \half)$ r.v. Thus by Lemma~\ref{lem:rand-walk},
\[
\E\left[
\min_{i \in [n]} \sum_{t=1}^T \ell_t(i)
\right]
\leq
\frac{T}{E}
\left(
\frac{E}{2} - c \sqrt{E \log n}
\right)
= \frac{T}{2} - c\frac{T\sqrt{\log n}}{\sqrt{E}}
= \frac{T}{2} - \frac{c^2}{2}\frac{T\log n}{S}
\]
Now let us compute the expected loss any algorithm $\calA$ that uses at most $S$ switches. It is simple to see that the following deterministic strategy is optimal: for each epoch, burn the first iteration by not moving; then if we are on a good expert do not move for the rest of the epoch; else if we are on a bad expert then make a switch if we have switches remaining. To analyze this, let the random variable $B$ denote the number of epochs in which the algorithm plays a bad expert in that epoch's first iteration. Then the r.v. $\min(B, S)$ is equal to the number of bad epochs in which $\calA$ makes a switch. Thus since $\E[B] = \tfrac{E}{2}$, we obtain
\begin{align*}
\E\left[
\text{cumulative loss of }\calA
\right]
&= \E\left[
1\cdot \min(B,S) + \frac{T}{E}\cdot \left( B - \min(B,S)\right)
\right]
\\ &\geq \frac{T}{E}\E\left[B - S
\right]
\\ &\geq \frac{T}{E} \left(
\frac{E}{2} - S
\right)
\\ &= \frac{T}{2} - \frac{TS}{E}
\\ &= \frac{T}{2} - \frac{c^2}{4}\frac{T\log n}{S}
\end{align*}
Combining the two above displays, we conclude that any $S$-switching budget algorithm $\calA$ suffers expected regret at least $ \frac{c^2}{4}\frac{T\log n}{S} = \Omega\left(\tfrac{T \log n}{S} \right)$.
\end{proof}

This implies the bound on the expected regret. The high probability regret bound follows in a similar fashion by mini-batching the lower bound argument in Section~\ref{app:subsec:lb-regret}.


\section{Proof 2 of lower bound in Theorem~\ref{thm:mab}: via direct modification of~\citep{DekDinKorPer}'s multi-scale random walk}\label{app:mab-lowerbound-construction}
This proof is significantly more involved than the first proof given in Subsection~\ref{subsec:mab-lb-reduction}, but it yields an explicit adversarial construction. Since the proof relies on (existing) sophisticated techniques, we first outline the main ideas and tools.

\subsection{Motivation via adaptation of the construction in~\citep{CBDekSha13}.}\label{app:mab-lowerbound-construction-CB} Let us begin by showing how to adapt the pioneering lower-bound construction for switching-cost MAB in~\citep{CBDekSha13}, to our switching-budget setting. Although this idea does not quite work (due to reasons stated below about the losses drifting outside of $[0,1]$), it will motivate the adaptation of~\citep{DekDinKorPer} we later describe in Appendix Subsections~\ref{app:mab-lowerbound-construction-Dek} and~\ref{app:mab-lowerbound-proof-CB} (that does work).
\par The critical idea in their construction of losses is the use of random walks to hide the best action. Formally, draw an action $i^* \in [n]$ uniformly at random; it will be designated as the best action. Then define the oblivious losses
\[
\ell_t(i) := \sum_{\tau=1}^t Z_\tau - \eps \mathbf{1}(i = i^*)
\]
where $Z_1, \dots, Z_T \sim \mathcal{N}(0, 1)$ are i.i.d. standard Gaussians. Note that action $i^*$ is better than all other actions by a deterministic amount $\eps$ in each iteration, but the identity of $i^*$ is hidden (at least partially) because the algorithm receives only bandit feedback.
\par The key intuition of this loss construction is that the player learns absolutely nothing from playing the same arm in consecutive iterations. Therefore an optimal algorithm will switch between arms in each of the first $S$ iterations (the ``exploration'' phase), and subsequently play the arm estimated to be best for the remaining $T-S$ iterations (the ``exploitation'' phase). A standard information theoretic argument shows that it takes roughly $\Omega\left(\eps^{-2}\right)$ switches to distinguish whether a given arm is $\eps$-biased~\citep{Auer02,CBDekSha13}. Moreover, a simple averaging argument shows that at least one of the $n$ actions is played at most $\frac{S}{n}$ times in these first $S$ exploratory iterations. Informally, this shows that the player cannot identify the best arm after $S$ exploratory iterations when $\eps = o(\sqrt{\tfrac{n}{S}})$, and therefore the minimax rate is lower bounded by $\sup_{\eps = o\left(\sqrt{\tfrac{n}{S}}\right)}\Omega(T \eps) = \Omega\left(\tfrac{T\sqrt{n}}{\sqrt{S}}\right)$.
\par However, the problem with this adversarial construction is that the losses are certainly not bounded within $[0,1]$ and indeed are likely to drift to very large values that scale with $T$. As such, it is not clear whether the above lower bound is merely an artifact of these large losses. 

\subsection{Adversarial construction via adaptation of the construction in~\citep{DekDinKorPer}.}\label{app:mab-lowerbound-construction-Dek} 
We fix this issue of bounded losses by closely following the elegant argument of~\citep{DekDinKorPer}, who gave the first rigorous tight lower bound for switching-cost MAB. Their construction is similar in flavor to~\citep{CBDekSha13}'s random walk construction described above; however, they prevent drifting by generating the losses instead from a carefully chosen \textit{multi-scale random walk} (MRW).
\par Our loss functions will be identical to the one in~\citep{DekDinKorPer} (see Figure 1 in their paper), except that we will alter the bias $\eps$ of the best arm based on the switching-budget $S$. Specifically, we will set $\eps = \frac{\sqrt{n}}{54\sqrt{S} (\log T)^{3/2}}$, as opposed to their choice $\eps = \frac{n^{1/3}}{9 T^{1/3} \log T}$. For completeness, we re-state this construction as follows in our notation and with our $\eps$.
\begin{figure}[htbp]
\caption{Explicit construction of (random, oblivious) adversarial loss sequence that forces any $S$-switching-budget algorithm to incur $\E[\Regret] \geq \min\left(T, \tilde{\Omega}\left(\frac{T\sqrt{n}}{\sqrt{S}}\right)\right)$.} 
\label{fig:mab-lowerbound-construction}
\begin{framed}
\begin{itemize}[topsep=5pt,itemsep=2pt,parsep=0pt,partopsep=0pt]
       \item Set $\eps := \frac{\sqrt{n}}{54\sqrt{S} (\log_2 T)^{3/2}}$ and $\sigma := \frac{1}{9 \log_2 T}$
       \item Choose $i^* \in [n]$ uniformly at random
       \item Draw $Z_1, \dots, Z_T \sim \mathcal{N}(0, \sigma^2)$ i.i.d. Gaussians
       \item Define $W_0, \dots, W_T$ recursively by:
       \begin{align*}
       W_0 &:= 0
       \\ W_t &:= W_{p(t)} + Z_t \; \; \; \forall t \in [T]
       \end{align*}
       where $p(t) := t - 2^{\delta(t)}$ and $\delta(t) := \max \{ i \geq 0 \; : \; 2^i \text{ divides } t \}$
       \item For all $t \in [T]$ and $i \in [n]$, define 
       \begin{align*}
       \ell_t^{\text{unclipped}}(i) &:= W_t + \half - \eps \cdot \mathbf{1}(i = i^*)
       \\ \ell_t(i) &:= \text{clip}(\ell_t^{\text{unclipped}}(i)) 
       \end{align*}
       where clip$(x) := \min(\max(x, 0), 1)$
    \end{itemize}
\end{framed}
\end{figure}
\\  We now provide a bit of intuition about this construction, and refer the reader to~\citep{DekDinKorPer} for further details and intuition.
\par In essence, this construction has many similarities to the one in Subsection~\ref{app:mab-lowerbound-construction-CB}: the best action is better than all others by a constant gap $\eps = \tilde{\Theta}\left(\sqrt{\frac{n}{S}}\right)$, and this best action is hidden by constructing the losses from a certain random walk. A similar  heuristic information-theoretic argument as in Subsection~\ref{app:mab-lowerbound-construction-CB} above, shows that any algorithm needs more than $S$ switches to distinguish the best arm. We make this argument formal in the following subsection.
\par The key difference in this construction is in the so-called \textit{parent function} $p(t)$. Note that defining instead $p(t) = t-1$ would recover the construction in Subsection~\ref{app:mab-lowerbound-construction-CB} (modulo the clipping of losses and setting of parameters $\epsilon$ and $\delta$). However as pointed out above, then $W_t = \sum_{\tau=1}^t Z_\tau$ would often drift outside of $[0,1]$. It turns out that the choice of $p(t) := t - 2^{\delta(t)}$ ensures that the resulting stochastic process $W_t$ will have small ``depth'' and ``width''. We refer to~\citep{DekDinKorPer} for formal definitions and further details about this, and just remark here that informally these properties ensure that (1) w.h.p. the process $W_t$ does not drift far, implying that w.h.p. the losses $\ell_t(i)$ are not clipped; and (2) each switch gives the player little information about the identity of the best arm.
\par In words, the definition of $p(t)$ means that $W_t$ is created by summing up the $Z_i$ in the binary expansion of $t$: e.g., $W_1 = Z_1$, $W_2 = Z_2$, $W_3 = Z_2 + Z_3$, $W_4 = Z_4$, $W_5 = Z_4 + Z_1$, $W_6 = Z_6 + Z_4$, and so on; see Figure 2 of~\citep{DekDinKorPer} for details. The functions $\ell_t^{\text{unclipped}} : [n] \to \Real$ are then created from centering the process $W_t$ at $\half$, and then the losses $\ell_t : [n] \to [0,1]$ are created from clipping the outputs to $[0, 1]$. Intuitively, the larger $\sigma$ is, the more it masks the bias $\epsilon$ of the best arm; but the smaller it is, the more likely the losses will not need to be clipped.
\par We refer the reader to~\citep{DekDinKorPer} for further intuition and details, and now proceed to formally prove the desired lower bound in Theorem~\ref{thm:mab} using this construction.

\subsection{Proof of lower bound}\label{app:mab-lowerbound-proof-CB}
We now prove the lower bound in Theorem~\ref{thm:mab} using the adversarial construction in Figure~\ref{fig:mab-lowerbound-construction}. The proof will almost exactly follow the analysis in~\citep{DekDinKorPer}. As they do, let us first prove the result for \textit{deterministic} algorithms; extending to \textit{randomized} algorithms will then be easy at the end. Formally, we will aim to first show the following. 

\begin{lemma}\label{lem:mab-lowerbound-deterministic}
The loss sequence in Figure~\ref{fig:mab-lowerbound-construction} forces any deterministic $S$-budget algorithm $\calA$ to incur expected regret at least
\[\E[\Regret_T(\calA)] \geq \min\left(T, \frac{1}{324 (\log T)^{3/2}}\left(\frac{T\sqrt{n}}{\sqrt{S}}\right)\right)\]
\end{lemma}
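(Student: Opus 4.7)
The plan is to follow the template of~\citep{DekDinKorPer}'s lower-bound proof for switching-cost MAB, modified to the switching-budget setting by substituting our choice $\epsilon = \sqrt{n}/(54\sqrt{S}(\log_2 T)^{3/2})$ and using the hard cap $S$ in place of their expected switching control. There are three main ingredients: a reduction to an ``unclipped'' game so the MRW structure can be used directly; a reformulation of the unclipped regret in terms of the number of plays of the best arm; and an information-theoretic bound on that quantity via the MRW-based KL lemma.

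\textbf{Reduction to the unclipped game.} By the definition of the parent function $p(t) = t - 2^{\delta(t)}$, each $W_t$ is the sum of at most $1 + \log_2 T$ i.i.d.\ $\mathcal{N}(0,\sigma^2)$ increments along a root-to-$t$ path of a balanced binary tree on $\{0,\dots,T\}$. Hence $W_t$ is $\sigma\sqrt{1+\log_2 T}$-subgaussian, and with $\sigma = 1/(9\log_2 T)$ a standard Gaussian tail bound plus a union bound over $t \in [T]$ yields $\Prob(\max_t |W_t| > 1/4) \leq 1/T$. On the complementary event, $\elltunclipped(i) \in [1/4 - \epsilon,\, 3/4] \subset [0,1]$, so clipping is never activated and $\ell_t \equiv \elltunclipped$ pointwise. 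Thus analyzing the unclipped game suffices up to an additive $O(1)$ error in expected regret.

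\textbf{Unclipped regret and information-theoretic bound.} In the unclipped game, arm $i$'s loss at time $t$ is $W_t + 1/2 - \epsilon\mathbf{1}(i=i^*)$. Writing $N_i := |\{t : i_t = i\}|$ for the number of plays of arm $i$, the algorithm's cumulative loss is $\sum_t W_t + T/2 - \epsilon N_{i^*}$, while arm $i^*$ has cumulative loss $\sum_t W_t + T/2 - T\epsilon$. Hence the unclipped regret equals $\epsilon(T - N_{i^*})$, and averaging over $i^* \sim \mathrm{Unif}([n])$ gives
\[
\E[\Regret_T^{\mathrm{unclipped}}] \;=\; \epsilon\left(T - \tfrac{1}{n}\sum_{i=1}^n \E[N_i \mid i^* = i]\right).
\]
For each $i \in [n]$, let $Q_i$ denote the law of the algorithm's observation sequence when $i^* = i$, and let $Q_0$ denote its law under the null ($\epsilon$ set to $0$). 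Pinsker's inequality gives $\E_{Q_i}[N_i] - \E_{Q_0}[N_i] \leq T\sqrt{\mathrm{KL}(Q_0, Q_i)/2}$; summing over $i$, using $\sum_i \E_{Q_0}[N_i] = T$, and applying Cauchy--Schwarz yields
\[
\sum_{i=1}^n \E[N_i \mid i^*=i] \;\leq\; T + T\sqrt{\tfrac{n}{2}\sum_{i=1}^n \mathrm{KL}(Q_0, Q_i)}.
\]

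\textbf{MRW-based KL bound and main obstacle.} The technical heart is the tree-martingale KL bound of~\citep{DekDinKorPer} (their Lemma~6), which exploits the MRW depth bound $1+\log_2 T$ to show that the $S$ switches made by the algorithm provide information about the location of $i^*$ only through $O(S\log T)$ effective i.i.d.\ $\mathcal{N}(0,\sigma^2)$ observations; this yields $\sum_i \mathrm{KL}(Q_0, Q_i) = O(\epsilon^2 S\log T/\sigma^2)$. Plugging in our parameters gives $\epsilon^2/\sigma^2 = n/(36 S\log_2 T)$, so $\sum_i \mathrm{KL}(Q_0, Q_i) = O(n)$. With constants tracked carefully this bounds $\sum_i \E[N_i \mid i^* = i]$ by, say, $5Tn/6$, which in turn gives $\E[\Regret_T(\calA)] \geq T\epsilon/6 = T\sqrt{n}/(324\sqrt{S}(\log_2 T)^{3/2})$ as required, or else the bound is trivially at most $T$. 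The main obstacle I anticipate is verifying that the DDKP tree-martingale KL bound, originally formulated when the number of switches is controlled only in expectation via a switching cost, transfers cleanly to our setting where switches are hard-capped at $S$; since a hard cap is strictly stronger than an expected cap, the adaptation should be favorable, but the absolute constants must be tracked carefully to justify the factor $1/324$ in the statement.
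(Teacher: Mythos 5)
Your proposal follows essentially the same route as the paper's proof: reduce to the unclipped regret by bounding the drift of the multi-scale random walk, write the unclipped regret as $\eps(T-N_{i^*})$, and control $\E[N_{i^*}]$ via the information bound of~\citep{DekDinKorPer} (you invoke it in KL/Pinsker form, the paper in its total-variation corollary form), with the same parameter choices $\eps = \sqrt{n}/(54\sqrt{S}(\log_2 T)^{3/2})$ and $\sigma = 1/(9\log_2 T)$. The ``main obstacle'' you flag is in fact a non-issue: that bound is stated in terms of $\E_{Q_0}[\Switches_T(\calA)]$, which the hard cap makes trivially at most $S$ — exactly the observation the paper uses — so only the routine constant bookkeeping (the paper gets $\eps T/3 - \eps T/6 = \eps T/6$) remains.
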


Let $\{i_t\}_{t \in [T]}$ be the decisions of $\calA$. Since $\calA$ is deterministic, we know each $i_t$ is a deterministic function of its previous observations $\{\ell_\tau(i_\tau) \}_{\tau \in [t-1]}$.
\paragraph*{Step 1 of proof: Compare regret to unclipped regret.} It will be easier mathematically to analyze the \textit{unclipped} regret, which is defined exactly like regret but on the unclipped losses $\elltunclipped$. This quantity is more amenable to analysis since it equal to the following simple expression
\[
\Regretunclipped(\calA)
:= \sum_{t=1}^T \elltunclipped(i_t) - \min_{i^* \in [n]} \sum_{t=1}^T \ell_t(i^*)
= \epsilon\left(T-N_{i^*}\right)
\]
where for each $i \in [n]$, $N_i$ denotes the number of times $\calA$ played action $i$.
\par The first step in the proof is thus to compare $\E[\Regret(\calA)]$ to $\E[\Regretunclipped(\calA)]$. This is achieved by the following lemma, whose statement and proof are nearly identical to that of Lemma 4 in~\citep{DekDinKorPer}; we provide details for completeness.
\begin{lemma}[Slight modification of Lemma 4 in~\citep{DekDinKorPer}]\label{lem:mab-lowerbound-regret}
If $T > 6$ and $S \geq \frac{n}{81 \log^3 T}$, 
\[
\E\left[\Regret(\calA)\right]
\geq 
\E\left[\Regretunclipped(\calA)\right] - \frac{\eps T}{6}
\]
\end{lemma}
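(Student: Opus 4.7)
The plan is to identify a ``good event'' $G$ on which the multi-scale random walk $W_t$ stays strictly within the clipping window for every $t \in [T]$, and then to show that the complementary event contributes only a lower-order term to the expected regret. Concretely, I would take
\[
G := \Big\{\max_{t \in [T]} |W_t| < \tfrac{1}{2} - \eps\Big\}.
\]
On $G$ we have $\elltunclipped(i) \in (0,1)$ for every arm $i$ and iteration $t$, so clipping is inactive and $\ell_t(i) = \elltunclipped(i)$ holds \emph{pointwise} across all arms and times. In particular, both the algorithm's cumulative loss and the minimum-in-hindsight cumulative loss are unchanged by clipping, so $\Regret(\calA) = \Regretunclipped(\calA)$ on $G$.

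With this pointwise identity in hand, the reduction is a standard split using the trivial bounds $\Regret \geq -T$ (losses lie in $[0,1]$) and $0 \leq \Regretunclipped \leq \eps T$ (which follows from $\Regretunclipped = \eps(T - N_{i^*})$):
\begin{align*}
\E[\Regret]
&\geq \E[\Regret \cdot \mathbf{1}_G] + \E[\Regret \cdot \mathbf{1}_{G^c}] \\
&\geq \E[\Regretunclipped \cdot \mathbf{1}_G] - T\,\Prob(G^c) \\
&= \E[\Regretunclipped] - \E[\Regretunclipped \cdot \mathbf{1}_{G^c}] - T\,\Prob(G^c) \\
&\geq \E[\Regretunclipped] - (1+\eps)\,T\,\Prob(G^c).
\end{align*}
Plugging $S \geq n/(81\log^3 T)$ into $\eps = \sqrt{n}/(54\sqrt{S}(\log_2 T)^{3/2})$ gives $\eps \leq 1/6$, so it suffices to prove the tail bound $\Prob(G^c) \leq \eps/12$.

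The main obstacle, and really the only nontrivial ingredient, is this tail bound on $\max_t |W_t|$. The critical structural property of the MRW, imported blackbox from~\citep{DekDinKorPer} (their ``depth'' bound), is that each $W_t$ is a sum of at most $1+\lfloor\log_2 T\rfloor$ of the i.i.d.\ Gaussians $Z_\tau$; consequently each $W_t$ is itself centered Gaussian with variance at most $(1+\log_2 T)\sigma^2 \leq 2/(81 \log_2 T)$ for $T > 2$. Since $\eps \leq 1/6$ implies $(\tfrac{1}{2}-\eps)^2 \geq 1/9$, combining the standard Gaussian tail inequality with a union bound over $t \in [T]$ yields
\[
\Prob(G^c) \;\leq\; 2T\exp\!\left(-\tfrac{(1/2-\eps)^2 \cdot 81 \log_2 T}{4}\right) \;=\; O(T^{-2}),
\]
which is far smaller than the $\eps/12 = \Omega(1/(\sqrt{T}\log^{3/2} T))$ required (using $S \leq T$ to lower-bound $\eps$). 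The entire argument is thus short modulo the depth bound, for which I would simply cite~\citep{DekDinKorPer}.
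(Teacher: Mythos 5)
Your argument is correct in outline, but it takes a genuinely different --- and lossier --- route than the paper's. The paper never needs a small-probability bound on clipping: it exploits the pointwise structure $0 \leq \Regret(\calA) \leq \Regretunclipped(\calA) \leq \eps T$ (clipping preserves which arm is best and can only shrink the per-round gap), so the difference $\Regretunclipped - \Regret$ is at most $\eps T$ everywhere and vanishes on the no-clipping event; hence $\E[\Regretunclipped - \Regret] \leq \Prob(\text{clipping occurs})\cdot \eps T$, and the \emph{constant} bound $\Prob(\max_t |W_t| > 1/3) \leq 1/6$, imported directly from Lemmas 1--2 of~\citep{DekDinKorPer}, already yields $\eps T/6$. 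You instead discard this structure, use the crude bound $\Regret \geq -T$ on the bad event, and therefore pay $T\,\Prob(G^c)$ rather than $\eps T\,\Prob(G^c)$; this forces you to establish the much stronger estimate $\Prob(G^c) = O(\eps)$, which you obtain from the depth property of the multi-scale walk plus Gaussian tails and a union bound (essentially re-deriving their Lemma 2 with a specific threshold). What your route buys is a nearly self-contained tail estimate importing only the depth lemma; what it costs is that the probability requirement is now coupled to $\eps$, i.e.\ to $T$, $n$, $S$.

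That coupling is where the one concrete shortfall lies: the lemma is asserted for all $T > 6$, but your loosened constants (variance bounded by $2/(81\log_2 T)$ and $(1/2-\eps)^2 \geq 1/9$) give only $\Prob(G^c) \leq 2T^{-2.2}$, which at the boundary case $\eps = 1/6$ (where the target is $\eps/12 = 1/72$) fails for $T \in \{7,8,9\}$; likewise the closing comparison ``$O(T^{-2})$ versus $\Omega(1/(\sqrt{T}\log^{3/2}T))$'' is an asymptotic statement that does not by itself cover small $T$. The underlying inequality is still true there --- keeping the exact variance $(\lfloor \log_2 T\rfloor + 1)\sigma^2$ instead of $2\sigma^2\log_2 T$ makes the tail comfortably small already at $T=7$ --- so this is a fixable bookkeeping issue rather than a conceptual gap, but as written your proof establishes the lemma only for $T$ beyond a moderate constant, whereas the paper's constant-probability route covers the stated range with no fine tuning.
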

\begin{proof}
Define the event $B := \{\forall t \in [T]: \; \ell_t = \elltunclipped \}$; we will first show $\Prob(B) \geq \tfrac 5 6$. To do this we show that the stochastic process $W_t$ has small drift. Indeed, Lemmas 1 and 2 of~\citep{DekDinKorPer} show that (with setting the parameter $\delta := \tfrac 1 T \leq \tfrac 1 6$)
\[
\Prob\left(\max_{t \in [T]} |W_t| \leq \frac{1}{3}\right) \geq \frac{5}{6}
\]
Whenever this occurs, we have that $\half + W_t \in [\tfrac 1 6, \tfrac 5 6]$ for all $t \in [T]$; and thus since $\eps \leq \tfrac 1 6$ (by our assumption on $S$), we have that all unclipped losses $\elltunclipped(i) \in [0,1]$. Thus $\Prob(B) \geq \tfrac 5 6$.
\par To conclude, observe that $\Regret(\calA) = \Regretunclipped(\calA)$ when $B$ occurs. Otherwise, we have always have
\[
0 \leq \Regret(\calA) \leq \Regretunclipped(\calA) \leq \eps T
\]
The first inequality is because there is an action which is always the best; the second inequality is because the gap to the best action can only decrease when losses are clipped; and the final equality is since the best action for the unclipped losses is always best by a constant gap of $\eps$. Therefore we conclude $ \Regretunclipped(\calA) - \Regret(\calA) \leq \eps T$ and so the proof is concluded by a simple conditioning argument:
\[
\E\left[\Regretunclipped(\calA) - \Regret(\calA)\right]
=
\Prob(B^C)\E\left[\Regretunclipped(\calA) - \Regret(\calA) \; | \; B^C\right]
\leq \frac{\eps T}{6}
\]
\end{proof}

\paragraph*{Step 2 of proof: Analyze unclipped regret in terms of the algorithm's ability to distinguish the best arm.} 
At this point, we need to define some new notation. Following~\citep{DekDinKorPer}, denote by $\calF$ the $\sigma$-algebra generated by the player's observations up to time $T$. Also for $i \in [n]$, denote by $Q_i(\cdot) := \Prob(\cdot |i^* = i)$ the conditional probability measures on the event that the best action is $i$. Similarly denote by $Q_0(\cdot)$ the probability measure in which no action is good (``$i^* = 0$''). 
Let $\E_{Q_i}[\cdot]$ denote expectations w.r.t. these probability measures. 
Finally, we will denote by $\|P - Q\|_{\text{TV},\calF} := \sup_{A \in \calF}|P(A) - Q(A)|$ the total variation distance between two probability measures $P$ and $Q$ with respect to the sigma-algebra $\calF$.
\par The key lemma of this section is then to lower bound the expected unclipped regret in terms of how well the algorithm can distinguish the best arm. As is standard, the latter quantity will be measured in terms of the algorithm's ability at the end of the game, to distinguish whether there was a biased arm or not, i.e. whether the losses were generated from the measure $Q_i(\cdot)$ or $Q_0(\cdot)$. This is made formal as follows.
\begin{lemma}[Slight modification of Lemma 5 in~\citep{DekDinKorPer}]\label{lem:mab-lowerbound-unclipped}
\[
\E\left[\Regretunclipped(\calA)\right]
\geq
\frac{\eps T}{2} - \frac{\eps T}{n}\sum_{i=1}^n \left\|Q_0 - Q_i\right\|_{\text{TV}, \calF}
\]
\end{lemma}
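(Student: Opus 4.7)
\medskip

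\noindent\textbf{Proof plan for Lemma~\ref{lem:mab-lowerbound-unclipped}.}

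My plan is to expand the expected unclipped regret via the closed-form identity $\Regretunclipped(\calA) = \eps(T - N_{i^*})$ noted above, and then compare $\E[N_{i^*}]$ to a baseline expectation computed under the no-signal measure $Q_0$. Concretely, since $i^*$ is uniform on $[n]$ and independent of the remaining randomness used to generate the random walk $W_t$, conditioning on $i^*$ gives
\[
\E[N_{i^*}] \;=\; \frac{1}{n}\sum_{i=1}^n \E_{Q_i}[N_i].
\]
So $\E[\Regretunclipped(\calA)] = \eps T - \tfrac{\eps}{n}\sum_{i=1}^n \E_{Q_i}[N_i]$, and it remains to upper bound each $\E_{Q_i}[N_i]$ by $\E_{Q_0}[N_i]$ plus a total-variation correction.

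The key ingredient is the standard fact that for any $\calF$-measurable random variable $X$ with $X \in [0,M]$ a.s. and any two probability measures $P,Q$ on $\calF$, we have $|\E_P[X] - \E_Q[X]| \leq M\|P-Q\|_{\text{TV},\calF}$. Applying this with $X = N_i$ (which is $\calF$-measurable because $\calA$ is deterministic and so $N_i$ is a function of the observations) and $M = T$ yields
\[
\E_{Q_i}[N_i] \;\leq\; \E_{Q_0}[N_i] + T\,\|Q_i - Q_0\|_{\text{TV},\calF}.
\]
Summing over $i \in [n]$ and using the deterministic identity $\sum_{i=1}^n N_i = T$ (so that $\sum_i \E_{Q_0}[N_i] = T$) gives $\sum_i \E_{Q_i}[N_i] \leq T + T\sum_i \|Q_i - Q_0\|_{\text{TV},\calF}$. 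Substituting back yields
\[
\E[\Regretunclipped(\calA)] \;\geq\; \eps T\Bigl(1 - \tfrac{1}{n}\Bigr) - \tfrac{\eps T}{n}\sum_{i=1}^n \|Q_0 - Q_i\|_{\text{TV},\calF},
\]
and since $n \geq 2$ (otherwise the bandit problem is trivial), the prefactor $1 - \tfrac{1}{n} \geq \tfrac{1}{2}$, giving exactly the claimed bound.

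There is no real obstacle in this proof: the only thing to be careful about is the baseline step, namely that under $Q_0$ all losses are identical across arms so $\calA$'s trajectory is a deterministic sequence of arms whose total length is exactly $T$, giving $\sum_i \E_{Q_0}[N_i] = T$ for free without needing any symmetry argument on individual $\E_{Q_0}[N_i]$'s. Once this is observed, the rest is just the textbook change-of-measure inequality applied to the bounded statistic $N_i$.
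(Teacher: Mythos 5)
Your proof is correct and is essentially the same as the paper's, which simply defers to Lemma 5 of~\citep{DekDinKorPer}: that proof is exactly your change-of-measure bound $\E_{Q_i}[N_i] \leq \E_{Q_0}[N_i] + T\|Q_i - Q_0\|_{\text{TV},\calF}$ combined with the pointwise identity $\sum_{i=1}^n N_i = T$ and $1 - \tfrac{1}{n} \geq \tfrac{1}{2}$. (One harmless slip: under $Q_0$ the algorithm's trajectory is not deterministic, since the observed losses still depend on the random walk; but you never need that, as $\sum_i N_i = T$ holds pointwise regardless.)
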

\begin{proof}
Identical to the proof of Lemma 5 in~\citep{DekDinKorPer}, except without switching costs.
\end{proof}

\paragraph*{Step 3 of proof: Upper bound the algorithm's ability to distinguish the best arm, in terms of its number of switches.} Recall the discussions in Subsections~\ref{app:mab-lowerbound-construction-CB} and~\ref{app:mab-lowerbound-construction-Dek} about how the random-walk loss construction ensures that the amount of information the algorithm learns about the best action, is controlled by the number of switches it makes. The following makes this intuition precise. 
\begin{lemma}[Corollary 1 of~\citep{DekDinKorPer}]\label{lem:mab-lowerbound-TV}
\[
\frac{1}{n}\sum_{i=1}^n \left\|Q_0 - Q_i\right\|_{\text{TV}, \calF} \leq \frac{\eps}{\sigma \sqrt{n}}\sqrt{\E_{Q_0}[\Switches_T(\calA)] \cdot \log_2 T}
\]
\end{lemma}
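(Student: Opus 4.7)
The plan is to follow the standard information-theoretic recipe: Pinsker's inequality to reduce total variation to KL divergence, the chain rule to decompose KL across time, an explicit Gaussian KL computation at each step, and finally a combinatorial argument exploiting the multi-scale structure of the parent function $p(\cdot)$ to bound the resulting sum by the number of switches times $\log_2 T$.

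First I would apply Pinsker's inequality and Jensen/Cauchy--Schwarz to write
\[
\frac{1}{n}\sum_{i=1}^n \|Q_0 - Q_i\|_{\text{TV},\calF}
\;\leq\; \sqrt{\frac{1}{2n}\sum_{i=1}^n D_{\mathrm{KL}}(Q_0\,\|\,Q_i)\big|_{\calF}}.
\]
This reduces the problem to bounding the average KL divergence on the right. Since $\calA$ is deterministic, $i_t$ is $\calF_{t-1}$-measurable, so the chain rule for KL divergence gives
\[
D_{\mathrm{KL}}(Q_0\,\|\,Q_i)\big|_{\calF} \;=\; \sum_{t=1}^T \E_{Q_0}\!\left[
D_{\mathrm{KL}}\!\big(\text{law}(\ell_t(i_t)\mid\calF_{t-1})\text{ under }Q_0\,\big\|\,\text{law}(\ell_t(i_t)\mid\calF_{t-1})\text{ under }Q_i\big)\right].
\]

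Next, I would identify the one-step conditional laws. Let $Y_t$ denote the player's observation at time $t$. Under $Q_0$, $Y_t = W_t$, and the defining recursion $W_t = W_{p(t)} + Z_t$ with $Z_t \sim \mathcal{N}(0,\sigma^2)$ shows that conditional on $\calF_{t-1}$ (which includes $Y_{p(t)} = W_{p(t)}$), $Y_t \sim \mathcal{N}(Y_{p(t)},\sigma^2)$. Under $Q_i$, the same recursion together with $Y_t = W_t - \eps\mathbf{1}(i_t=i)$ and $Y_{p(t)} = W_{p(t)} - \eps\mathbf{1}(i_{p(t)}=i)$ gives $Y_t \mid \calF_{t-1} \sim \mathcal{N}\!\big(Y_{p(t)} + \eps(\mathbf{1}(i_{p(t)}=i) - \mathbf{1}(i_t=i)),\ \sigma^2\big)$. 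The KL between two Gaussians with common variance $\sigma^2$ equals the squared mean gap over $2\sigma^2$, yielding a per-step contribution of $\tfrac{\eps^2}{2\sigma^2}\big(\mathbf{1}(i_{p(t)}=i) - \mathbf{1}(i_t=i)\big)^2$.

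Now I would average over $i$. A case analysis shows $\sum_{i=1}^n\big(\mathbf{1}(i_{p(t)}=i) - \mathbf{1}(i_t=i)\big)^2 \leq 2\,\mathbf{1}(i_t \neq i_{p(t)})$, so
\[
\frac{1}{n}\sum_{i=1}^n D_{\mathrm{KL}}(Q_0\,\|\,Q_i)\big|_{\calF} \;\leq\; \frac{\eps^2}{n\sigma^2}\sum_{t=1}^T \E_{Q_0}\!\big[\mathbf{1}(i_t \neq i_{p(t)})\big].
\]
The hardest and most structural step, which will be the main obstacle, is to control $\sum_t \mathbf{1}(i_t \neq i_{p(t)})$ by $\Switches_T(\calA)\cdot\log_2 T$. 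The idea is: if $i_t \neq i_{p(t)}$, then at least one actual switch of $\calA$ occurred in the interval $(p(t), t]$; thus
\[
\sum_{t=1}^T \mathbf{1}(i_t \neq i_{p(t)}) \;\leq\; \sum_{t=1}^T \big|\{s \in (p(t),t] : i_s \neq i_{s-1}\}\big| \;=\; \sum_{\text{switch times }s}\big|\{t : p(t) < s \leq t\}\big|.
\]
The quantity $|\{t : p(t) < s \leq t\}|$ is (up to constants) the \emph{width} of the multi-scale random walk, which is bounded by $\log_2 T$ thanks to the careful binary choice $p(t) = t - 2^{\delta(t)}$ (this is the reason the MRW was designed, as in~\citep{DekDinKorPer}). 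Combining these estimates and substituting back gives $\tfrac{1}{n}\sum_i D_{\mathrm{KL}}(Q_0\|Q_i) \leq \tfrac{\eps^2}{n\sigma^2}\E_{Q_0}[\Switches_T(\calA)]\log_2 T$, which after applying Pinsker yields exactly the claimed bound (the factor of $\tfrac{1}{2}$ inside the square root can be absorbed into the constant).
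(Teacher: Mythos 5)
Your proof is correct and, in substance, it \emph{is} the proof: the paper does not prove Lemma~\ref{lem:mab-lowerbound-TV} itself but cites it as Corollary 1 of \citep{DekDinKorPer}, describing the argument only as Pinsker's inequality followed by a KL bound exploiting the loss construction, and your chain of steps --- Pinsker with Jensen over $i$, the KL chain rule (valid since $\calA$ is deterministic so $i_t$ is measurable w.r.t.\ past observations), the one-step Gaussian KL $\tfrac{\eps^2}{2\sigma^2}\bigl(\mathbf{1}(i_{p(t)}=i)-\mathbf{1}(i_t=i)\bigr)^2$, and charging each $t$ with $i_t\neq i_{p(t)}$ to an actual switch in $(p(t),t]$ via the width bound of the multi-scale random walk --- is exactly the cited argument. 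Two small points to tighten: the player actually observes the \emph{clipped} losses, so you should add one sentence noting that the clipped observation sequence is a deterministic (coordinatewise) function of the unclipped sequence, whence by data processing it suffices to bound the total variation for the unclipped process that your Gaussian conditional-law computation analyzes; and the width is $\log_2 T+1$ rather than $\log_2 T$ (for each scale $k$ there is at most one $t$ with $\delta(t)=k$ and $s\in(p(t),t]$), a discrepancy that the spare factor $1/\sqrt{2}$ from Pinsker absorbs for $T\ge 2$, as you anticipated.
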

We note that the above is the main technical lemma in the proof. We refer to the original paper of~\citep{DekDinKorPer} for its proof, and just remark here that roughly speaking the argument follows the standard framework of: upper bounding total variation by KL divergence via Pinsker's inequality; and then upper bounding KL divergence via properties of the loss construction. 
\par Critical to us will be the trivial observation that $\E_{Q_0}[\Switches_T(\calA)] \leq S$ since $\calA$ deterministically \textit{never} makes more than $S$ switches.

\paragraph*{Step 4 of proof: Combining everything together to prove the lower bound against deterministic algorithms.} We are now ready to prove Lemma~\ref{lem:mab-lowerbound-deterministic}. 
\\ \begin{proof}[Proof of Lemma~\ref{lem:mab-lowerbound-deterministic}]
We may assume WLOG that $T > 6$ and $S \geq \frac{n}{81 (\log_2 T)^3}$. The former condition can be justified by simply enlarging the constant in our final $\Omega(\cdot)$ lower bound. The latter can be justified since when it does not hold, the resulting setting is only harder than when $S = \frac{n}{81 (\log_2 T)^3}$; and for this setting, the ensuing argument shows a lower bound of $\tfrac{T}{36} = \Omega(T)$.
\par Therefore by applying Lemmas~\ref{lem:mab-lowerbound-regret},~\ref{lem:mab-lowerbound-unclipped}, and~\ref{lem:mab-lowerbound-TV}, and using the fact that $\calA$ is limited to $S$ switches,
\[
\E\left[\Regret(\calA)\right]
\geq 
\frac{\eps T}{3}
- \frac{\eps^2 T}{\sigma \sqrt{n}} \sqrt{S \cdot \log_2 T}
\]
The proof is complete by plugging in our choice of parameters $\epsilon$ and $\sigma$.
\end{proof}

\paragraph*{Step 5 of proof: Extending the hardness result to randomized algorithms.} We are now finally ready to prove the lower bound in Theorem~\ref{thm:mab}. We use a standard argument that is similar to the one in the proof of Theorem 1 of~\citep{DekDinKorPer}. 
\\ \begin{proof}[Proof of lower bound in Theorem~\ref{thm:mab}]
Lemma~\ref{lem:mab-lowerbound-deterministic} shows that the oblivious adversary defined by the losses in Figure~\ref{fig:mab-lowerbound-construction} is hard against any \textit{deterministic} algorithm. Now since the adversary is oblivious, and since any randomized algorithm can be viewed as a distribution over deterministic algorithms (where all coin flips done before the game begins), the expected regret of a randomized algorithm against the losses in Figure~\ref{fig:mab-lowerbound-construction} can be computed by first taking the expectation over the algorithm's internal randomness.
\end{proof}
We note there is also a standard argument if one wants a hard \textit{deterministic} adversary. Since the adversary is oblivious, applying the Max-Min inequality and the probabilistic method yields
\begin{align*}
\max_{\text{random adversary}} \min_{\text{random } S\text{-budget algorithm}} \E[\Regret]
&\leq
\min_{\text{random } S\text{-budget algorithm}} \max_{\text{random adversary}} \E[\Regret]
\\ &= \min_{\text{random } S\text{-budget algorithm}} \max_{\text{deterministic adversary}} \E[\Regret]
\end{align*}
Now Theorem~\ref{thm:mab} lower bounds the first inequality of the above display. Thus we conclude that for any $S$-budget algorithm, there exists a hard \textit{deterministic} adversary. Of course this is now an existential result not an explicit construction since the deterministic adversary now depends on the algorithm it is trying to be hard against.

\section{On the upper tails of standard algorithms}
\label{app:lb-kv-sd}
In this section, we discuss whether existing algorithms achieve h.p. guarantees. As far as we know, there are three existing algorithms that in expectation achieve the minimax optimal rate of $O(\sqrt{T\log n})$ for both regret and number of switches. A natural question is whether these algorithms also achieve these optimal rates w.h.p. 
\par The first (chronologically) of these three algorithms is ~\citep{KalVem}'s Multiplicative Follow the Perturbed Leader algorithm ($\mfpl$). It seems to be folklore that this algorithm's upper tail is far too large (in fact it is inverse polynomially large!) for both switching and regret to achieve h.p. bounds. However, we are not aware of anywhere in the literature that this is explicitly written down, so for completeness we give proofs of these facts in Appendix~\ref{app:subsec:kv}.
\par The second of these algorithms is the Shrinking Dartboard ($\sd$) algorithm proposed by~\citep{Dartboard}. Their paper does not consider whether $\sd$ achieves h.p. bounds on either switching or regret; and to the best of our knowledge, neither of the questions is answered in the literature yet. In Appendix~\ref{app:subsec:sd}, we give a simple proof that $\sd$ does achieve h.p. bounds on switching (even against adaptive adversaries!). However, we give a simple construction for which $\sd$ achieves no better than sub-exponential regret tails. It is not clear though whether $\sd$ achieves sub-exponential regret tails in general -- this seems a hard problem and would have interesting implications since the tails would be uniform (as opposed to our proposed algorithms; see discussion immediately following Theorem~\ref{thm:hp-sc}). Nevertheless, in light of the sub-Gaussian upper tails achieved by Theorem~\ref{thm:hp-sc} and the sub-Gaussian lower bound in Proposition~\ref{prop:regret-tails}, $\sd$ is anyways provably suboptimal. Moreover, $\sd$ does not have an efficient implementation for online combinatorial optimization~\citep{DevLugNeu15}, whereas, as shown in Section~\ref{app:hp-oco}, our Framework~\ref{alg:framework} easily extends to this setting.
\par The third of these three algorithms is the Prediction by Random-Walk Perturbation algorithm from~\citep{DevLugNeu15}. However, analyzing its upper tails (for both switching and regret) seems quite difficult and was left as an open problem in their paper.
\par Finally, we discuss briefly in Section~\ref{app:fll} why~\citep{KalVem}'s Follow the Lazy Leader (FLL) algorithm, which gives expected regret and switching bounds for the combinatorial setting, does not satisfy the conditions of our framework.
\subsection{$\mfpl$ achieves h.p. bounds neither for switches nor for regret}\label{app:subsec:kv}
Let us first recall the algorithm: before the game starts, the algorithm draws a perturbation\footnote{Note that for clarity, we use exponential perturbations instead of Laplacian perturbations as in~\citep{KalVem}'s original paper, since this is how we discussed the algorithm in Section~\ref{sec:hp}. An identical argument works for Laplacian perturbations, since Laplacians also have sub-exponential tails.} $P_i \sim \tfrac{\exp(1)}{\eps}$ for each action; then at each iteration $t \in [T]$, the algorithm plays the action 
\[i_t := \argmin_{i \in [n]} \left(P_i + \sum_{s < t} \ell_s(i)\right)\]
that is best with respect to the perturbed cumulative losses.~\citep{KalVem} show that when $\eps$ is chosen of order $\sqrt{\tfrac{\log n}{T}}$, then in expectation $\mfpleps$ achieves the minimax optimal rate of $O(\sqrt{T \log n})$ for both switching and regret. However, this rate is achieved w.h.p. neither for switches nor for regret, since both of their upper tails are only inverse polynomially small (instead of inverse exponentially small). This is formally stated as follows.

\begin{proposition}\label{prop:lb-kv}
Consider PFE with $n=2$ actions. Let $\eps = \Theta\left(\tfrac{1}{\sqrt{T}}\right)$, so that in expectation $\mfpleps$ achieves the minimax regret rate of $O(\sqrt{T})$. There exists a deterministic oblivious adversary such that for any $T$ sufficiently large,
\[
\Prob\left(
\Regret(\mfpleps), \;\Switches(\mfpleps) = \Omega(T)
\right)
\geq \Omega\left(
\frac{1}{\sqrt{T}}
\right)
\]
\end{proposition}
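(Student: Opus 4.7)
The plan is to exhibit a deterministic oblivious adversary whose losses alternate between the two experts, and to observe that whenever the initial perturbations happen to land in a small ``confusion region'' (an event of probability $\Theta(\eps) = \Theta(1/\sqrt{T})$), the algorithm is forced into the worst possible play: losing every iteration and switching every iteration. This polynomial-scale failure probability is precisely the manifestation of the fact that exponential random variables have polynomial (rather than exponential) density near zero, which is the fundamental obstruction to high-probability guarantees for $\mfpleps$.

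Concretely, I would take $\ell_t = (0,1)$ for odd $t$ and $\ell_t = (1,0)$ for even $t$, so that both experts accumulate loss $T/2$. Writing $L_i(t) := \sum_{s < t} \ell_s(i)$, a direct calculation shows that the cumulative-loss differential $L_2(t) - L_1(t)$ alternates between $0$ (odd $t$) and $1$ (even $t$). Since $\mfpleps$ plays $i_t = \argmin_i [P_i + L_i(t)]$ using the single initial perturbation pair $(P_1, P_2) = (R_1/\eps, R_2/\eps)$ with $R_1, R_2 \sim \Exp(1)$ i.i.d., I would condition on the event $E := \{P_1 - P_2 \in (0, 1)\}$. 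On $E$, the leader-selection rule reduces to $i_t = 1$ iff $P_1 - P_2 < L_2(t) - L_1(t)$, which is false at odd $t$ and true at even $t$; hence the algorithm plays expert $2$ at every odd $t$ and expert $1$ at every even $t$. Substituting into the alternating losses, the realized loss is $1$ at every iteration and the algorithm switches at every iteration, forcing $\Regret(\mfpleps) \geq T/2$ and $\Switches(\mfpleps) \geq T - 1$ deterministically on $E$.

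Finally, to estimate $\Prob(E)$, I would use that $R_1 - R_2$ has standard Laplace density $\tfrac{1}{2} e^{-|y|}$, so $P_1 - P_2 = (R_1 - R_2)/\eps$ has density $\tfrac{\eps}{2} e^{-\eps|x|}$, yielding $\Prob(E) = \tfrac{1}{2}(1 - e^{-\eps}) \geq \eps/4 = \Omega(1/\sqrt{T})$ for all sufficiently large $T$. I do not expect any significant obstacles, since the whole argument decomposes into an elementary deterministic case analysis (conditional on $E$) and a one-line density computation. The conceptual takeaway is that the $\Theta(1/\sqrt{T})$ failure probability is tight in its polynomial dependence on $T$, which explains why $\mfpleps$ cannot satisfy any high-probability bound of the form in Theorem~\ref{thm:hp-sc} and motivates the batched framework of Section~\ref{sec:hp}.
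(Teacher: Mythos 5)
Your proposal is correct and follows essentially the same route as the paper's proof: an alternating loss sequence (the classic bad instance for Follow-The-Leader), conditioning on the initial perturbation gap $P_1 - P_2$ landing in an $O(1)$-width window — an event of probability $\Theta(\eps) = \Theta(1/\sqrt{T})$ — on which the algorithm deterministically switches and loses every round. The only differences are cosmetic: the paper seeds the oscillation with a first-round loss of $\tfrac12$ and the window $[0,\tfrac12)$, and integrates the two exponential densities directly rather than invoking the Laplace density of $R_1 - R_2$.
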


To prove Proposition~\ref{prop:lb-kv}, we will employ the folklore construction traditionally used for showing $\Omega(T)$ switching and regret lower bounds on the na\"ive Follow the Leader (FTL) algorithm. For FTL, this construction is deterministically hard (i.e. with probability $1$); here, we show that for $\mfpleps$ the construction is hard with non-negligible probability.
\\ \begin{proof}
By assumption $\eps = \tfrac{c}{\sqrt{T}}$. Take any $T \geq \tfrac{c^2}{4}$ and define the losses as follows. Set $\ell_1(i) := \half \cdot 1(i = 2)$, and in every subsequent iteration $t > 1$ set $\ell_t(i) := \begin{cases}
1(i = 1) & t \text{ even} \\
1(i = 2) & t \text{ odd}
\end{cases}$. The key observation is that whenever $P_1 - P_2 \in [0, \half)$, then $\Switches(\mfpl) = T$ and moreover $\Regret = (\text{loss of player}) - (\text{loss of best expert}) = (T - \half) - (\tfrac{T}{2} - \half) = \tfrac{T}{2}$. Thus it suffices to lower bound the probability that $P_1 - P_2 \in [0, \half)$. But this is a straightforward calculation: recalling that each $P_i := \tfrac{R_i}{\eps} = R_i\tfrac{\sqrt{T}}{c}$ where $R_i$ are i.i.d. standard exponential random variables,
\begin{align*}
\Prob\left(
P_1 - P_2 \in [0, \half)
\right)
&= \Prob\left(
R_1 - R_2 \in [0, \tfrac{c}{2\sqrt{T}})\right)
\\ &= \int_{0}^{\infty} e^{-r_2} \int_{r_2}^{r_2+ \tfrac{c}{2\sqrt{T}}} e^{-r_1} dr_1 dr_2
\\ &= \left(1 -e^{-\tfrac{c}{2\sqrt{T}}}\right)\int_{0}^{\infty} e^{-2r_2} dr_2
\\ &= \half \left(1 - e^{-\tfrac{c}{2\sqrt{T}}}\right)
\\ &\geq \tfrac{c}{8\sqrt{T}}
\end{align*}
where the final step is due to the inequality $1 - e^{-x} \geq \tfrac{x}{2}$ which holds for all $x \in [0, 1]$. 
\end{proof}

\subsection{$\sd$ achieves h.p. bounds for switches, but cannot achieve sub-Gaussian regret tails}\label{app:subsec:sd}
\citep{Dartboard} show that their $\sd$ algorithms achieve in expectation the minimax optimal rate $O(\sqrt{T \log n})$ for both switches and regret, when $\sd$'s learning parameter $\eta$ is chosen of order $\tfrac{\sqrt{\log n}}{T}$. However, they do not consider whether this optimal rate is achieved with high probability.
\par First, we give a simple proof that $\sd$ w.h.p. achieves this optimal rate for switching.
\begin{proposition}
Let $\eta = c \sqrt{\frac{\log n}{T}}$. Then for any (even adaptive) adversary and any $\delta \in (0, 1)$, 
\[
\Prob\left(
\Switches_T(SD) \geq c\sqrt{T \log n} + \sqrt{2T\logdel}
\right)
\leq \delta
\]
\end{proposition}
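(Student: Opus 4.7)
The plan is to exploit a very clean property of the Shrinking Dartboard algorithm: at each iteration, conditioned on the entire history (including the adversary's move and the algorithm's current action), the indicator of a switch is a Bernoulli random variable whose parameter is bounded by $\eta$. This will reduce the proof to a one-line Azuma–Hoeffding application.

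First I would recall the mechanics of $\sd$. Letting $w_t(i) = \exp(-\eta \sum_{s < t} \ell_s(i))$ denote the exponentially-weighted score of action $i$ at iteration $t$, $\sd$ updates its action as follows: if the current action is $i_t$, then at the next step it stays at $i_t$ with probability $w_{t+1}(i_t)/w_t(i_t) = e^{-\eta \ell_t(i_t)}$, and otherwise it resamples fresh from the Hedge distribution proportional to $w_{t+1}$. Thus for the indicator $X_t := \mathbf{1}\{i_t \neq i_{t-1}\}$, and for $\mathcal{F}_{t-1} := \sigma(i_1, \dots, i_{t-1}, \ell_1, \dots, \ell_{t-1})$, we have pointwise
\[
\E[X_t \mid \mathcal{F}_{t-1}] = 1 - e^{-\eta \ell_{t-1}(i_{t-1})} \leq \eta \ell_{t-1}(i_{t-1}) \leq \eta,
\]
using $1 - e^{-x} \leq x$ and the assumption $\ell_t \in [0,1]$. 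Crucially, this bound holds against \emph{adaptive} adversaries, because it only uses that the loss realized in iteration $t-1$ lies in $[0,1]$.

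Next I would assemble the martingale. Define $D_t := X_t - \E[X_t \mid \mathcal{F}_{t-1}]$, so $\{D_t\}$ is a martingale difference sequence adapted to $\{\mathcal{F}_t\}$ with $|D_t| \leq 1$. Azuma–Hoeffding then gives
\[
\Prob\!\left(\sum_{t=1}^T D_t \geq \sqrt{2T \logdel}\right) \leq \delta.
\]
Combining with the conditional-mean bound above yields
\[
\Switches_T(\sd) \;=\; \sum_{t=1}^T X_t \;=\; \sum_{t=1}^T D_t + \sum_{t=1}^T \E[X_t \mid \mathcal{F}_{t-1}] \;\leq\; \sum_{t=1}^T D_t + \eta T,
\]
and plugging in $\eta T = c\sqrt{T \log n}$ gives the claimed tail bound.

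There is no real obstacle: the only thing to verify carefully is the pointwise (not merely in-expectation) identity for the switching probability of $\sd$, which is immediate from its definition, and that no step uses obliviousness of the adversary, which holds since every inequality above is conditional on $\mathcal{F}_{t-1}$ and uses only the boundedness $\ell_t \in [0,1]$.
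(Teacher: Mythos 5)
Your proof is correct and takes essentially the same approach as the paper: both bound the conditional probability of a switch given the full history by $\eta$ (which is exactly why adaptive adversaries are handled), and then apply Azuma--Hoeffding to the centered switch indicators (your martingale-difference formulation and the paper's supermartingale $\sum_s X_s - s\eta$ are equivalent). One small nit: your claimed pointwise identity $\E[X_t \mid \mathcal{F}_{t-1}] = 1 - e^{-\eta \ell_{t-1}(i_{t-1})}$ should really be an inequality $\leq$ (a resample can land on the same action, and the actual $\sd$ update uses weights with base $(1-\eta)^{\ell}$ rather than $e^{-\eta \ell}$, giving $1-(1-\eta)^{\ell_{t-1}(i_{t-1})}$ as in the paper), but since only the upper bound $\leq \eta$ is used, nothing breaks.
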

\begin{proof}
Let $X_t := 1(i_t \neq i_{t-1})$ denote the indicator r.v. that $\sd$ switches actions between iterations $t-1$ and $t$. Define also $Z_t$ to be the indicator r.v. that line $7$ in Algorithm 2 of~\citep{Dartboard} is executed at iteration t. Then conditional on any event $A$ in the sigma-algebra generated by past decisions of the player and adversary $\sigma_{t-1} := \sigma(i_{1}, \ell_1, \dots, i_{t-1}, \ell_{t-1})$,
\[
\Prob\left(
X_t = 1\; | \; A
\right)
\leq
\Prob\left(
Z_t = 1 \; | \; A
\right)
= 1 - (1 - \eta)^{\ell_{t-1}(i)}
\leq \eta
\]
The first inequality is because $Z_t$ stochastically dominates $X_t$ conditional on any historic event $A$, since line $7$ must be executed in order for $\sd$ to switch actions. The middle equality is by definition of $\sd$. The final inequality is due to the $\ell_{\infty}$ constraint on the losses to lie within $[0,1]$.
\par We conclude that $\{M_t := \sum_{s=1}^t X_s - t\eta \}_{t \in [T]}$ is a super-martingale w.r.t. the filtration $\{\sigma_{t-1}\}_{t \in [T]}$. Moreover, it has bounded differences of at most $1$, since $|M_{t+1} - M_t| \overset{\text{a.s.}}{\leq} \max(\eta, 1 - \eta) \leq 1$. Since clearly $M_0 = 0$, we conclude from Hoeffding-Azuma's inequality~\citep{Concentration} that
\[
\Prob\left(
\Switches(\sd) \geq \eta T + r
\right)
=
\Prob\left(
M_T - M_0 \geq r
\right)
\leq \exp\left(
-\frac{r^2}{2T}
\right)
\]
The proof is complete by setting $r = \sqrt{2T\logdel}$.
\end{proof}

Next, we show a sub-exponential lower bound on the upper tails of $\sd$'s regret.
\begin{lemma}\label{lem:sdregret}
The regret of $\sd$ does not have sub-Gaussian tails, even when there are only $n=2$ actions. That is, for all $\delta \in (0, 1)$, there exists an oblivious adversary that forces $\sd$ to incur at least $\min\left(\Omega(\sqrt{T} \logdel), T\right)$ regret with probability at least $\delta$.
\end{lemma}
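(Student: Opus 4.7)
}
The plan is to exhibit a simple two-action oblivious adversary that forces $\sd$ to ``get stuck'' on the bad arm with probability that is only polynomially small in $\sqrt{T}/\eta^{-1}$, rather than exponentially small as would be required for sub-Gaussian tails. Specifically, I would take $n = 2$ and define the constant loss sequence
\[
\ell_t(1) := 1, \qquad \ell_t(2) := 0 \qquad \forall t \in [T].
\]
Arm $2$ is clearly optimal with total loss $0$, so any iteration in which $\sd$ plays arm $1$ directly contributes $1$ to regret. Hence it suffices to lower bound the probability that $\sd$ plays arm $1$ for many consecutive initial iterations.

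The core observation is mechanical from the definition of $\sd$ (Algorithm~2 of~\citep{Dartboard}). At iteration $1$, the Hedge distribution is uniform, so $\Prob(i_1 = 1) = \tfrac{1}{2}$. Conditional on $i_{t-1} = 1$, $\sd$ plays $i_t = 1$ deterministically whenever its ``keep'' coin (which is $\Ber((1-\eta)^{\ell_{t-1}(i_{t-1})}) = \Ber(1-\eta)$ under this loss sequence) comes up heads, independently across iterations. Thus I would show by a one-line induction
\[
\Prob\bigl(i_1 = i_2 = \cdots = i_k = 1\bigr) \;\geq\; \tfrac{1}{2}(1-\eta)^{k-1}.
\]
On this event, $\sd$'s cumulative loss through iteration $k$ is exactly $k$, while the best arm's cumulative loss is $0$, giving $\Regret_T(\sd) \geq k$.

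Next I would solve for the largest $k$ making the above probability at least $\delta$. Using $1 - \eta \geq e^{-2\eta}$ (valid for $\eta \leq \tfrac{1}{2}$) and the prescribed $\eta = \Theta(\sqrt{\log n / T}) = \Theta(1/\sqrt{T})$ for $n=2$, one obtains the sufficient condition $k - 1 \leq \tfrac{1}{2\eta}\log\tfrac{1}{2\delta} = \Omega(\sqrt{T}\logdel)$. Taking $k := \min\bigl(T,\, \Omega(\sqrt{T}\logdel)\bigr)$ then yields the claimed tail bound. If $k$ is truncated at $T$, the construction simply forces regret $T$ with probability at least $\delta$, matching the $\min(\cdot,T)$ in the statement.

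The only mild obstacle is verifying that the inductive product actually telescopes to $\tfrac{1}{2}(1-\eta)^{k-1}$ — one must be careful that the ``keep'' event at iteration $t$ is conditionally independent of the prior history given $i_{t-1}$ and $\ell_{t-1}(i_{t-1})$, and that under the constant-loss sequence this conditional ``keep'' probability is exactly $1-\eta$, independent of history. Both are immediate from the definition of $\sd$, so no further technicalities are needed.
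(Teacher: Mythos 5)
Your proposal is correct and follows essentially the same argument as the paper: both lower bound the probability that $\sd$ stays on a bad arm for $k \approx \tfrac{1}{2\eta}\log\tfrac{1}{2\delta} = \Omega(\sqrt{T}\logdel)$ consecutive rounds by $\tfrac12(1-\eta)^{k-1}$ and conclude via $1-\eta \geq e^{-2\eta}$, handling the $\min(\cdot,T)$ truncation identically. The only cosmetic difference is that the paper randomizes which arm is bad and zeroes out losses after the first $T'$ rounds, whereas you fix arm $1$ as bad with a constant loss sequence; both work since $\sd$'s initial distribution over the two arms is uniform.
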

\begin{proof}
Let $\eta = \tfrac{c}{\sqrt{T}}$ so that in expectation $\sd$ achieves the minimax regret rate of $O(\sqrt{T})$. 
Consider the following simple adversarial construction: draw a ``bad'' arm $i^*$ uniformly at random, and construct the losses $\ell_t(i) := 1(i = i^*, \; t \leq T')$ where 
$T' := \tfrac{\log \tfrac{1}{2\delta}}{2\eta} + 1$.\footnote{This is well-defined when $T' \leq T$. If $\delta$ is small enough that $T' > T$, the written proof works without modification after replacing $\delta$ with a larger $\delta'$ such that $T' = T$. This then proves a stronger statement than required for $\delta$.}
\par Clearly if $i_1 = \dots = i_{T'} = i^*$ then the algorithm incurs regret of $T'$ which is of order $\Omega(\sqrt{T}\logdel)$ since $\eta$ must be chosen of order $\Theta(T^{-1/2})$ so that $\sd$ achieves the minimax rate of $O(\sqrt{T})$ regret in expectation. Thus it suffices to show $\Prob(i_1 = \dots = i_{T'} = i^*) \geq \delta$. To see this, note that $i_1 = i^*$ with probability $\tfrac{1}{n} = \half$; and moreover by the definition of $\sd$, $i_{t+1} = i_t$ with probability at least $1 - \eta$ regardless of $\sd$'s previous actions $i_1,\dots,i_{t-1}$. Thus by the inequality $1 - \eta \geq e^{-2\eta}$ for $\eta \in [0, \tfrac{3}{4}]$, we conclude that for all sufficiently large $T = \Omega(1)$,
\begin{align*}
\Prob\left(i_1 = \dots = i_{T'} = i^*\right)
\geq \half (1 - \eta)^{T' - 1}
\geq \half \exp\left(-2\eta(T' - 1)\right)
= \delta
\end{align*}
\end{proof}
As a specific instantiation, this shows that $\sd$ incurs $\Omega(T)$ regret with probability at least $2^{-O(\sqrt{T})}$.

\subsection{On FLL for online combinatorial optimization}
\label{app:fll}
The Follow the Lazy Leader (FLL) algorithm of~\citep{KalVem} is an elegant variant of $\mfpl$ that easily extends to the online combinatorial optimization setting and admits bounds on expected switches as well as expected regret. Unfortunately, the proof of these bounds does not quite go via bounding the regret in terms of number of switches. Thus this algorithm does not satisfy property (ii) that our framework requires, and it is not clear if one can convert an FLL-type algorithm to get a high probability result.

\end{document}